\DeclareMathOperator*{\argmin}{arg\,min}
\DeclareMathOperator{\diag}{diag}
\DeclareMathOperator{\trace}{tr}
\newcommand{\dataset}[1]{\texttt{#1}}
\newcommand{\defeq}{\stackrel{\text{\tiny def}}{=}}
\newcommand{\order}{\mathcal{O}} 
\newcommand{\x}{\mathbf{x}}
\renewcommand{\xi}{{\x}_{i}}
\newcommand{\z}{\mathbf{z}}
\newcommand{\X}{\mathbf{X}}
\newcommand{\QQ}{\mathbf{Q}}
\newcommand{\RR}{\mathbf{R}}
\newcommand{\K}{\mathbf{K}}
\newcommand{\G}{\mathbf{G}}
\newcommand{\y}{\mathbf{y}}
\newcommand{\U}{\mathbf{U}}
\newcommand{\eye}{\mathbf{I}}
\newcommand{\R}{\mathbb{R}}    
\newcommand{\LL}{\mathbf{L}}
\newcommand{\SSS}{\mathcal{S}}
\newcommand{\Z}{\mathbf{Z}}
\newcommand{\muu}{\boldsymbol{\mu}}
\newcommand{\Lam}{\boldsymbol{\Lambda}}
\newcommand{\CC}{\mathbf{C}}
\newcommand{\WW}{\mathbf{W}}
\newcommand{\UU}{\mathbf{U}}
\newcommand{\VV}{\mathbf{V}}
\newcommand{\SIGMA}{\boldsymbol{\Sigma}}
\newcommand{\HH}{\mathbf{H}}
\begin{document}

\title{Randomized Clustered Nystr{\"o}m for Large-Scale Kernel Machines}

\author{\name Farhad Pourkamali-Anaraki \email farhad.pourkamali@colorado.edu \\
       \addr Department of Electrical, Computer, and Energy Engineering\\
       University of Colorado Boulder\\
       Boulder, CO 80309-0425, USA
       \AND
       \name Stephen Becker \email stephen.becker@colorado.edu \\
       \addr Department of Applied Mathematics\\
       University of Colorado Boulder\\
       Boulder, CO 80309-0526, USA}

\editor{}

\maketitle

\begin{abstract}
The Nystr\"om method has been popular for generating the low-rank approximation of kernel matrices that arise in many machine learning problems. The approximation quality of the Nystr\"om method depends crucially on the number of selected landmark points and the selection procedure. In this paper, we present a novel algorithm to compute the optimal Nystr\"om low-approximation when the number of landmark points exceed the target rank. Moreover, we introduce a randomized algorithm for generating landmark points that is scalable to large-scale data sets. The proposed method performs K-means clustering on low-dimensional random projections of a data set and, thus, leads to significant savings for high-dimensional data sets.
Our theoretical results characterize the tradeoffs between the accuracy and efficiency of our proposed method. Extensive experiments demonstrate the competitive performance as well as the efficiency of our proposed method.
\end{abstract}

\begin{keywords}
  Kernel methods, Nystr\"om method, Low-rank approximation, Random projections, Large-scale learning
\end{keywords}

\section{Introduction}
\label{sec:introduction}
Kernel machines have been widely used in various machine learning problems such as classification, clustering, and regression. In kernel-based learning, the input data points are mapped to a high-dimensional feature space and the pairwise inner products in the lifted space are computed and stored in a positive semidefinite kernel matrix $\K$. The lifted representation may lead to better performance of the learning problem, but a drawback is the need to store and manipulate a large kernel matrix of size $n\times n$, where $n$ is the size of data set. Thus a kernel machine has quadratic space complexity and quadratic or cubic computational complexity (depending on the specific type of machine).

One promising strategy for reducing these costs consists of a low-rank approximation of the kernel matrix $\K\approx\LL\LL^T$, where $\LL\in\R^{n\times r}$ for a target rank $r<n$. Such low-rank approximations can be used to reduce the memory and computation cost by trading-off accuracy for scalability. For this reason, much research has focused on efficient algorithms for computing low-rank approximations, e.g.,~\citep{fine2001efficient,bach2002kernel,Jordan_Kernel_Approx,Martinson_SVD}. The Nystr\"om method is probably one of the most well-studied and successful methods that has been used to scale up several kernel methods~\citep{kumar2009ensemble,sun2015review}. 
The Nystr\"om method works by selecting a small set of bases referred to as ``landmark points'' and computing the  kernel similarities between the input data points and landmark points. Therefore, the performance of the Nystr\"om method depends crucially on the number of selected landmark points as well as the procedure according to which these landmark points are selected. 

The original Nystr\"om method,
first introduced to the kernel machine setting by~\citet{Nystrom2001}, proposed to select landmark points uniformly at random from the set of input data points. More recently, several other probabilistic strategies have been proposed to provide informative landmark points in the Nystr\"om method, including sampling with weights proportional to column norms~\citep{drineas2006fast}, diagonal entries~\citep{Nystrom_Kernel_Approx}, and leverage scores~\citep{gittens2013revisiting}. \citet{zhang2010clusteredNys} proposed a non-probabilistic technique for generating landmark points using centroids resulting from K-means clustering on the input data points. The proposed ``Clustered Nystr\"om method'' shows the Nystr\"om approximation error is related to the encoding power of landmark points in summarizing data and it provides improved accuracy over other sampling methods such as uniform and column-norm sampling~\citep{kumar2012sampling}. However, the main drawback of this method is the high memory and computational complexity associated with performing K-means clustering on high-dimensional large-scale data sets.

The aim of this paper is to improve the accuracy and efficiency of the Nystr\"om method in two directions. We present a novel algorithm to compute the optimal rank-$r$ approximation in the Nystr\"om method when the number of landmark points exceed the rank parameter $r$. In fact, our proposed method can be used within all landmark selection procedures to compute the best rank-$r$ approximation achievable by a chosen set of landmark points. 
Moreover, we present an efficient method for landmark selection which provides a tunable tradeoff between the accuracy of low-rank approximations and memory/computation requirements. Our proposed ``Randomized Clustered Nystr\"om method'' generates a set of landmark points based on low-dimensional random projections of the input data points~\citep{DataBaseFriendlyRandomProjection}.

In more detail, our main contributions are threefold.
\begin{itemize}[leftmargin=*]
	\item It is common to select more landmark points than the target rank $r$ to obtain high quality Nystr\"om low-rank approximations. In Section~\ref{sec:nys-eig}, we present a novel algorithm with theoretical analysis for computing the optimal rank-$r$ approximation when the number of landmark points exceed the target rank $r$. Thus, our proposed method, called ``Nystr\"om via QR Decomposition,'' can be used with any landmark selection algorithm to find the best rank-$r$ approximation for a given set of landmark points. We also provide intuitive and real-world examples to show the superior performance and efficiency of our method in Section~\ref{sec:nys-eig}.
	\item Second, we present a random-projection-type landmark selection algorithm which easily scales to large-scale high-dimensional data sets. Our proposed ``Randomized Clustered Nystr\"om method'' presented in Section~\ref{sec:random-clustered-nys} performs the K-means clustering algorithm on the random projections of input data points and it requires only two passes over the original data set. Thus our method leads to significant memory and computation savings in comparison with the Clustered Nystr\"om method.
	Moreover, our theoretical results (Theorem~\ref{thm:randomized-clustered-nys}) show that the proposed method produces low-rank approximations with little loss in accuracy compared to Clustered Nystr\"om with high probability.   
	\item Third, we present extensive numerical experiments comparing our Randomized Clustered Nystr\"om method with a few other sampling methods on two tasks: (1) low-rank approximation of kernel matrices and (2) kernel ridge regression. In Section~\ref{sec:experiments}, we consider six data sets from the LIBSVM archive~\citep{CC01a} with dimensionality up to $p=150,\!360$.
\end{itemize}

\section{Notation and Preliminaries}\label{sec:notation}
We denote column vectors with lower-case bold letters and matrices with upper-case bold letters. $\eye_{n\times n}$ is the identity matrix of size $n\times n$; $\mathbf{0}_{m\times n}$ is the $m\times n$ matrix of zeros. For a vector $\x\in\R^p$, let $\|\x\|_2$ denote the Euclidean norm, and $\diag(\x)$ represents a diagonal matrix with the elements of $\x$ on the main diagonal. The Frobenius norm for a matrix $\mathbf{A}\in\R^{n\times m}$ is defined as $\|\mathbf{A}\|_F=(\sum_{i=1}^n\sum_{j=1}^{m}A_{ij}^2)^{1/2}=(\trace(\mathbf{A}^T\mathbf{A}))^{1/2}$, where $A_{ij}$ represents the $(i,j)$-th entry of $\mathbf{A}$, $\mathbf{A}^T$ is the transpose of $\mathbf{A}$, and $\trace(\cdot)$ is the trace operator.

Let $\K\in\R^{n\times n}$ be a symmetric positive semidefinite (SPSD) matrix with
$\text{rank}(\K)=\rho\leq n$. The singular value decomposition (SVD) or eigenvalue decomposition of $\K$ can be written as $\K=\UU\Lam\UU^T$, where $\UU\in\R^{n\times \rho}$ contains the orthonormal eigenvectors, i.e., $\UU^T\UU=\eye_{\rho\times \rho}$, and  $\Lam=\diag\left([\lambda_1(\K),\ldots,\lambda_\rho(\K)]\right)\in\R^{\rho \times \rho}$ is a diagonal matrix which contains the eigenvalues of $\K$ in descending order, i.e., $\lambda_1(\K)\geq\ldots\geq\lambda_\rho(\K)$. The matrices $\UU$ and $\Lam$ can be decomposed for a target rank $r$ ($r\leq\rho$):
\begin{eqnarray}
\K&=&\left(\begin{array}{cc}
\UU_r & \UU_{\rho-r}\end{array}\right)\left(\begin{array}{cc}
\Lam_r & \mathbf{0}_{r\times (\rho-r)}\\
\mathbf{0}_{(\rho-r)\times r} & \Lam_{\rho-r}
\end{array}\right)\left(\begin{array}{c}
\UU_r^T\\
\UU_{\rho-r}^T
\end{array}\right)\nonumber\\
&=& \UU_r\Lam_r\UU_r^T+\UU_{\rho-r}\Lam_{\rho-r}\UU_{\rho-r}^T,\label{eq:eig-decomp}
\end{eqnarray}
where $\Lam_r\in\R^{r\times r}$ contains the $r$ leading eigenvalues and the columns of $\UU_r\in\R^{n\times r}$ span the top $r$-dimensional eigenspace, and $\Lam_{\rho-r}$ and $\UU_{\rho-r}$ contain the remaining $(\rho-r)$ eigenvalues and eigenvectors. It is well-known that $\K_{(r)}=\UU_r\Lam_r\UU_r^T$ is the ``best rank-$r$ approximation'' to $\K$ in the sense that $\K_{(r)}$ minimizes $\|\K-\mathbf{A}\|_F$ over all matrices $\mathbf{A}\in\R^{n\times n}$ of rank at most $r$ \citep{eckart1936approximation} and we have $\|\K-\K_{(r)}\|_F=(\sum_{i=r+1}^{\rho}\lambda_i(\K)^2)^{1/2}$.
If $\lambda_r(\K) = \lambda_{r+1}(\K)$, then $\K_{(r)}$ is not unique, so we write $\K_{(r)}$ to mean any matrix satisfying Equation \ref{eq:eig-decomp}.
%
The pseudo-inverse of  $\K$ 
can be obtained from the SVD or eigenvalue decomposition
as
$\K^\dagger=\UU_\rho\Lam_\rho^{-1}\UU_\rho^T$.
When $\K$ is full rank, we have $\K^\dagger=\K^{-1}$.

Another matrix factorization technique that we use in this paper is the QR decomposition. An $n\times m$ matrix $\mathbf{A}$, with $n\geq m$, can be decomposed as a product of two matrices $\mathbf{A}=\QQ\RR$,
where $\QQ\in\R^{n\times m}$ has $m$ orthonormal columns, i.e., $\QQ^T\QQ=\eye_{m\times m}$, and $\RR\in\R^{m\times m}$ is an upper triangular matrix.
Sometimes this is called the \emph{thin} QR decomposition, to distinguish it from a \emph{full} QR decomposition which finds $\QQ\in\R^{n\times n}$ and zero-pads $\RR$ accordingly.

\section{Background and Related Work}\label{sec:related-work}
Kernel methods have been successfully applied to  a variety of machine learning problems such as classification and regression. Well-known examples include support vector machines (SVM)~\citep{VapnikSVM}, kernel principal component analysis (KPCA)~\citep{scholkopf1998nonlinear}, kernel ridge regression~\citep{shawe2004kernel}, kernel clustering~\citep{girolami2002mercer}, and kernel dictionary learning~\citep{Nonlinear_kernel_DL}. The main idea
behind kernel-based learning is to map the input data points into a feature space, where all pairwise inner products of the mapped data points can be computed via a nonlinear kernel function that satisfies  Mercer's condition~\citep{aronszajn1950theory,LearningWithKernels}. Thus, kernel methods allow one to use linear algorithms in the higher (or infinite) dimensional feature space which correspond
to nonlinear algorithms in the original space. For this reason, kernel machines have received much attention as an effective tool to tackle problems with complex and nonlinear structures.

Let $\X=\left[\x_1,\ldots,\x_n\right]\in\R^{p\times n}$ be a data matrix that contains $n$ data points in $\R^p$ as its columns. The inner products in feature space are calculated using a ``kernel function'' $\kappa\left(\cdot,\cdot\right)$ defined on the original space:
\begin{eqnarray*}
K_{ij}:=\kappa\left(\x_i,\x_j\right)=
\langle\Phi(\x_i),\Phi(\x_j)\rangle,\;\;\forall i,j=1,\ldots,n,\label{eq:kernel-feature-space}
\end{eqnarray*}
where $\Phi:\x\mapsto\Phi(\x)$ is the kernel-induced feature map. All pairwise inner products of the $n$ mapped data points are stored in the so-called ``kernel matrix'' $\K\in\R^{n\times n}$, where the $(i,j)$-th entry is $K_{ij}$. Two well-known examples of kernel functions that lead to symmetric positive semidefinite (SPSD) kernel matrices are Gaussian and polynomial kernel functions. The former takes the form $\kappa\left(\x_i,\x_j\right)=\exp\left(-\|\x_i-\x_j\|_2^2/c\right)$ and the polynomial kernel is of the form $\kappa\left(\x_i,\x_j\right)=\left(\langle\x_i,\x_j\rangle+c\right)^d$, where $c\in\R^+$ and $d\in\mathbb{N}$ are the parameters~\citep{van2012kernel,anaraki2013kernel}. Moreover, combinations of multiple kernels can be constructed to tackle problems with complex and heterogeneous
data sources~\citep{bach2004multiple,gonen2011multiple,liu2016kernelized}. 

Despite the simplicity of kernel machines in nonlinear representation of data, one prominent problem is the calculation, storage, and manipulation of the kernel matrix for large-scale data sets. The cost to form $\K$ using standard kernel functions is $\order(pn^2)$ and it takes $\order(n^2)$ memory to store the full kernel matrix. Thus, both memory and computation cost scale as the square of the number of data points. Moreover, subsequent processing of the kernel matrix within the learning process is computationally quite expensive. For example, algorithms such as KPCA and kernel dictionary learning compute the eigenvalue decomposition of the kernel matrix, where the standard techniques take $\order(n^3)$ time and multiple passes over $\K$ will be required. In other kernel-based learning methods such as kernel ridge regression, the inverse of the kernel matrix $\left(\K+\lambda \eye_{n\times n}\right)^{-1}$, where $\lambda>0$ is a regularization parameter, must be computed which requires  $\order(n^3)$ time~\citep{cortes2010impact,el2014fast}. Thus, large-scale data sets have provided a considerable challenge to the design
of efficient kernel-based learning algorithms~\citep{BigDataSigMagazine,hsieh2014fast}.

A well-studied approach to reduce the memory and computation burden associated with kernel machines is to use a low-rank approximation of kernel matrices. This approach utilizes the decaying spectra of kernel matrices and the best rank-$r$ approximation $\K_{(r)}=\U_r \Lam_r \U_r^T$ is computed, cf.~Equation \ref{eq:eig-decomp}. Since $\K$ is SPSD, the eigenvalue decomposition can be used to express a low-rank approximation in the form of:
\begin{eqnarray*}
\K_{(r)} = \LL\LL^T,\;\;\LL=\U_r\Lam_r^{1/2}\in\R^{n\times r}.\label{eq:K-low-rank}
\end{eqnarray*}

The benefits of this low-rank approximation are twofold. First, it takes $\order(nr)$ to store the matrix $\LL$ which is only linear in the data set
size $n$. The reduction of memory requirements from quadratic to linear results in significant memory savings. Second, the low-rank approximation leads to substantial computational savings within the learning process. For example, the following matrix inversion arising in algorithms such as kernel ridge regression can be calculated using the Sherman-Morrison-Woodbury formula:   
\begin{eqnarray}
\left(\K+\lambda\eye_{n\times n}\right)^{-1}&\approx&\left(\K_{(r)}+\lambda\eye_{n\times n}\right)^{-1}\nonumber\\
& = & \left(\LL\LL^T+\lambda\eye_{n\times n}\right)^{-1}\nonumber\\
& = & \lambda^{-1}\left(\eye_{n\times n}-\LL\left(\LL^T\LL+\lambda\eye_{n\times n}\right)^{-1}\LL^T\right).\label{eq:ridge-eq}
\end{eqnarray}
Here, we only need to invert a
much smaller matrix of size just $r\times r$. 
Thus, the computation cost is $\order(nr^2+r^3)$ to compute $\LL^T\LL$ and the matrix inversion in Equation \ref{eq:ridge-eq}. 

Another example of computation savings is the ``linearization'' of kernel methods using the low-rank approximation, where linear algorithms are applied to the rows of $\LL\in\R^{n\times r}$. In this case, the matrix $\LL$ serves as an empirical kernel map and the rows of $\LL$ are known as virtual samples. This strategy has been shown to speed up various kernel-based learning methods such as SVM, kernel dictionary learning, and kernel clustering~\citep{zhang2012scaling,golts2016linearized,pourkamali2016randomized}.     

While the low-rank approximation of kernel matrices is a promising approach to reduce the memory and computational complexity, the main bottleneck is the computation of the full kernel matrix $\K$ and the best rank-$r$ approximation $\K_{(r)}$. Standard algorithms for computing the
eigenvalue decomposition of $\K$ take $\order(n^3)$ time. Partial eigenvalue decomposition, e.g., Krylov subspace method, can be performed to find the $r$ leading eigenvalues/eigenvectors. However, these techniques require at least $r$ passes over the entire kernel matrix which is prohibitive for large dense matrices~\citep{Martinson_SVD}.

To address this problem, much recent work has focused on efficient randomized methods to compute low-rank approximations of large matrices~\citep{RandomizedAlgorithmsforMatricesAndData}. The Nystr\"om method is 
one of the few randomized approximation techniques that does not need to first compute the entire kernel matrix.
The standard Nystr\"om method was first introduced (in the context of matrix kernel approximation)  in~\citep{Nystrom2001} and is based on sampling a small subset of input data columns, after which the kernel similarities between the small subset and input data points are computed to construct a
rank-$r$ approximation. 
Section~\ref{sec:Nystrom} discusses in detail the Nystr\"om method and its extension which finds the approximate eigenvalue decomposition of the kernel matrix.

Since the sampling technique is a key aspect of the Nystr\"om method, much research has focused on selecting the most informative subset of input data to improve the approximation accuracy and thus the performance of kernel-based learning methods~\citep{kumar2012sampling}. An overview of different sampling techniques, including the Clustered Nystr\"om method, is presented in Section~\ref{sec:Nystrom-Sampling}.

\subsection{The Nystr\"om Method}\label{sec:Nystrom}
The Nystr\"om method for generating a low-rank approximation of the SPSD kernel matrix $\K\in\R^{n\times n}$ works by selecting a small set of bases referred to as ``landmark points''. For example, the simplest and most common technique to select the landmark points is based on uniform sampling without replacement from the set of all input data points~\citep{Nystrom2001}. In this section, we explain the Nystr\"om method for a given set of landmark points regardless of the sampling mechanism.

Let $\Z=[\z_1,\ldots,\z_m]\in\R^{p\times m}$ be the set of $m$ landmark points in $\R^p$. The Nystr\"om method first constructs two matrices $\CC\in\R^{n\times m}$ and $\WW\in\R^{m\times m}$, where $C_{ij}=\kappa(\x_i,\z_j)$ and $W_{ij}=\kappa(\z_i,\z_j)$. Next, it uses both $\CC$ and $\WW$ to construct a low-rank
approximation of the kernel matrix $\K$:
\begin{eqnarray*}
\G=\CC\WW^\dagger\CC^T.\label{eq:Nystrom-low-rank}
\end{eqnarray*}
For the rank-restricted case, the Nystr\"om method generates a rank-$r$ approximation of the kernel matrix, $r\leq m$, by computing the best rank-$r$ approximation of the $m\times m$ inner matrix $\WW$~\citep{kumar2012sampling,kumar2009ensemble,sun2015review,li2015large,wang2013improving}:
\begin{eqnarray}
\G_{(r)}^{nys}=\CC\WW_{(r)}^\dagger\CC^T,\label{eq:Nys-rank-r}
\end{eqnarray}
where $\WW_{(r)}^\dagger$ represents the pseudo-inverse of $\WW_{(r)}$. Thus, the eigenvalue decomposition of the matrix $\WW$ should be computed to find the top $r$ eigenvalues and corresponding eigenvectors. Let $\SIGMA_r\in\R^{r\times r}$ and $\VV_r\in\R^{m\times r}$ contain the top $r$ eigenvalues and the corresponding orthonormal eigenvectors of $\WW$, respectively. Then, the rank-$r$ approximation in Equation \ref{eq:Nys-rank-r} can be expressed as:
\begin{eqnarray}
\G_{(r)}^{nys}=\LL^{nys}\left(\LL^{nys}\right)^T,\;\; \LL^{nys}=\CC\VV_r\left(\SIGMA_r^\dagger\right)^{1/2}\in\R^{n\times r}.\label{eq:Nys-low-rank-ll}
\end{eqnarray}
The time complexity of the Nystr\"om
method to form $\LL^{nys}$ is $\order(pnm+m^2r+nmr)$, where it takes $\order(pnm)$ to construct matrices $\CC$ and $\WW$. 
Also, it takes $\order(m^2r)$ time to perform the partial eigenvalue decomposition of $\WW$ and $\order(nmr)$ represents the cost of matrix multiplication $\CC\VV_r$. Thus, for $r\leq m\ll n$, the computation cost to form the low-rank approximation of the kernel matrix, $\K\approx\LL^{nys}(\LL^{nys})^T$, is only linear in the data set size $n$. 

In practice, there exist two approaches to obtain the approximate eigenvalue decomposition of the kernel matrix in the Nystr\"om method. The first approach is based on the exact eigenvalue decomposition of $\WW$ to get the following estimates of the $r$ leading eigenvalues and eigenvectors of $\K$~\citep{kumar2012sampling}:
\begin{eqnarray}
\widehat{\UU}_{r}^{(1)}=\sqrt{\frac{m}{n}}\CC \VV_r \SIGMA_r^\dagger,\;\;\widehat{\boldsymbol{\Lambda}}_r^{(1)}=\frac{n}{m}\SIGMA_r.\label{eq:approx-eig-nys-1}
\end{eqnarray}
These estimates of eigenvalues/eigenvectors are naive since it is easy to show that the estimated eigenvectors are not guaranteed to be orthonormal, i.e., $(\widehat{\UU}_{r}^{(1)})^T\widehat{\UU}_{r}^{(1)}\neq \eye_{r\times r}$. Moreover, the factor $n/m$ in Equation \ref{eq:approx-eig-nys-1}  is used to roughly compensate for the small size of the matrix $\WW\in\R^{m\times m}$ compared to the $n\times n$ kernel matrix. Thus, the accuracy of this approach depends heavily on the data set and the selected landmark points.

The second approach provides more accurate estimates of eigenvalues and eigenvectors of $\K$ by using the low-rank approximation in Equation \ref{eq:Nys-low-rank-ll},
and in fact this approach provides the exact eigenvalue decomposition of $\G_{(r)}^{nys}$.
The first step is to find the exact eigenvalue decomposition of the $r\times r$ matrix:
\[ \big(\LL^{nys}\big)^T\LL^{nys}=\tilde{\VV}\tilde{\SIGMA}\tilde{\VV}^T,
\] 
where $\tilde{\VV},\tilde{\SIGMA}\in\R^{r\times r}$. Then, the estimates of $r$ leading eigenvalues and eigenvectors of $\K$ are obtained as follows~\citep{zhang2010clusteredNys}:
\begin{eqnarray*}
\widehat{\UU}_{r}^{(2)}=\LL^{nys}\tilde{\VV}\big(\tilde{\SIGMA}^\dagger\big)^{1/2},\;\;\widehat{\boldsymbol{\Lambda}}_r^{(2)}=\tilde{\SIGMA}.
\end{eqnarray*}
For this case, the resultant eigenvectors are orthonormal:
\newcommand{\VVV}{\widetilde{\VV}}
\begin{eqnarray*}
\big(\widehat{\UU}_{r}^{(2)}\big)^T\widehat{\UU}_{r}^{(2)}& =&\big(\tilde{\SIGMA}^\dagger\big)^{1/2}\VVV^T\big(\LL^{nys}\big)^T \LL^{nys}\VVV\big(\tilde{\SIGMA}^\dagger\big)^{1/2}\nonumber\\
& =&\big(\tilde{\SIGMA}^\dagger\big)^{1/2}\big(\VVV^T\VVV\big)\tilde{\SIGMA}\big(\VVV^T\VVV\big)\big(\tilde{\SIGMA}^\dagger\big)^{1/2}=\eye_{r\times r},
\end{eqnarray*}
where this comes from the fact that $\VVV$ contains  orthonormal eigenvectors and $(\widetilde{\SIGMA}^\dagger)^{1/2}\widetilde{\SIGMA}(\widetilde{\SIGMA}^\dagger)^{1/2}=\eye_{r\times r}$. The overall procedure to estimate the $r$ leading eigenvalues/eigenvectors based on the Nystr\"om method is summarized in Algorithm \ref{alg:StandardNys}. The time complexity of the approximate eigenvalue decomposition is $\order(nr^2+r^3)$, in addition to the cost of computing $\LL^{nys}$ mentioned earlier.

\begin{algorithm}[t]
	\caption{Standard Nystr\"om}
	\label{alg:StandardNys}
	\textbf{Input:} data set $\X$, landmark points $\Z$, kernel function $\kappa$, target rank $r$
	
	\textbf{Output:} estimates of $r$ leading eigenvectors and eigenvalues of the kernel matrix $\K\in\R^{n\times n}$: $\widehat{\UU}_r^{(2)}\in\R^{n\times r}$, $\widehat{\Lam}_r^{(2)}\in\R^{r\times r}$
	\begin{algorithmic}[1]
		\STATE  Form two matrices $\CC$ and $\WW$: $C_{ij}=\kappa(\x_i,\z_j)$, $W_{ij}=\kappa(\z_i,\z_j)$
		\STATE Compute the eigenvalue decomposition: $\WW=\VV\SIGMA\VV^T$
		\STATE Form the matrix: $\LL^{nys}=\CC\VV_r\left(\SIGMA_r^\dagger\right)^{1/2}$
		\STATE Compute the eigenvalue decomposition: $(\LL^{nys})^T\LL^{nys}=\tilde{\VV}\tilde{\SIGMA}\tilde{\VV}^T$
		\STATE $\widehat{\UU}_{r}^{(2)}=\LL^{nys}\tilde{\VV}\big(\tilde{\SIGMA}^\dagger\big)^{1/2}$ and $\widehat{\boldsymbol{\Lambda}}_r^{(2)}=\tilde{\SIGMA}$
	\end{algorithmic}
\end{algorithm}

\subsection{Sampling Techniques for the Nystr\"om Method}\label{sec:Nystrom-Sampling}
The importance of landmark points in the Nystr\"om method has driven much recent work into various probabilistic and deterministic sampling techniques to improve the accuracy of Nystr\"om-based approximations~\citep{kumar2012sampling,sun2015review}. In this section, we review a few popular sampling methods in the literature. 

The simplest and most common sampling method proposed originally by~\citet{Nystrom2001} was uniform sampling without replacement. In this case, each data point in the data set is sampled with the same probability, i.e., $p_i=\frac{1}{n}$, for $i=1,\ldots,n$. The advantage of this technique is the low computational complexity associated with sampling landmark points. However, it has been shown that uniform sampling does not take into account the nonuniform structure of many data sets. Therefore, sampling mechanisms based on nonuniform  distributions have been proposed to address this problem. Two such examples include: (1) ``Column-norm sampling''~\citep{drineas2006fast}, where $m$ columns of the kernel matrix are sampled with weights proportional to the $\ell_2$ norm of columns
of $\K$ (\emph{not} of the data matrix $\X$),
i.e., $p_i=\|\mathbf{k}_i\|_2^2/\|\K\|_F^2$, and (2) ``diagonal sampling''~\citep{Nystrom_Kernel_Approx}, where the weights are proportional to the corresponding diagonal elements, i.e., $p_i=K_{ii}^2/\sum_{i=1}^{n}K_{ii}^2$. The former requires $\order(n^2)$ time and space to find the nonuniform distribution, while the latter requires $\order(n)$ time and space.
The column-norm sampling method requires computing the entire kernel matrix $\K$, which negates one of the principal benefits of the Nystr\"om method.
The diagonal sampling method reduces to the uniform sampling for shift-invariant kernels, such as the Gaussian kernel function, since $K_{ii}=1$ for all $i=1,\ldots,n$. Recently, \citet{gittens2013revisiting} have studied both empirical and theoretical  aspects of uniform and nonuniform sampling on the accuracy of Nystr\"om-based low-rank approximations.

The ``Clustered Nystr\"om method'' proposed by \citet{zhang2010clusteredNys,zhang2008improved} is a popular non-probabilistic approach that uses out-of-sample extensions to select informative landmark points. The key observation of their work is that the Nystr\"om low-rank approximation error depends on the quantization error of encoding the entire data set with the landmark points. For this reason, the Clustered Nystr\"om method sets the landmark points to be the centroids found from K-means clustering. In machine learning and pattern recognition, K-means clustering~\citep{Bishop} is a well-established technique to partition a data set into clusters 
by trying to minimize the total sum of the squared Euclidean distances of each point to the closest cluster center.

To present the main result of Clustered Nystr\"om method, we first explain K-means clustering briefly. Given $\X=[\x_1,\ldots,\x_n]\in\R^{p\times n}$, an $m$-partition of this data set is a collection $\SSS=\{\SSS_1,\ldots,\SSS_m\}$ of $m$ disjoint and nonempty sets (each representing a cluster) such that their union covers the entire data set. Each cluster can be defined by a
cluster center, which is the sample mean of data points in that cluster. Thus, the goal of K-means clustering is to minimize the following: 
\begin{eqnarray*}
E\left(\X,\SSS\right)=\sum_{i=1}^{n}\|\x_i-\mu(\x_i)\|_2^2,\label{eq:k-means-obj}
\end{eqnarray*}
where $\mu(\x_i)\in\R^{p}$ represents the centroid of the cluster to which the data point $\x_i$ is assigned, and hence depends on $\SSS$. The optimal clustering $\SSS^{opt}$ is the solution of following NP-hard optimization problem~\citep{Bishop}:
\begin{eqnarray}
\SSS^{opt}=\argmin_{\SSS}E\left(\X,\SSS\right).\label{eq:optimization-kmeans}
\end{eqnarray}
In practice, Lloyd's algorithm~\citep{lloyd1982least}, also known as the K-means clustering algorithm, is used to solve the optimization problem in Equation \ref{eq:optimization-kmeans}. The K-means clustering algorithm is an iterative procedure which consists of two steps: (1) data points are assigned to the nearest cluster centers, and (2) the cluster centers are updated based on the most recent assignment of the data points. The objective function decreases at every step, and so the procedure is guaranteed to terminate since there are only finitely many partitions.
Typically, only a few iterations are needed to converge to a locally optimal solution. The quality of clustering can be improved by using well-chosen initialization, such as K-means++ initialization~\citep{kmeans_plusplus}.

Now, we present the result of the Clustered Nystr\"om method which relates the Nystr\"om approximation error (in terms of the Frobenius norm) to the quantization error induced by encoding the data set with landmark points \citep{zhang2010clusteredNys}.
\begin{proposition}[Clustered Nystr\"om Method] \label{thm:clusteredNys}
	Assume that the kernel function $\kappa$ satisfies the following property:
	\begin{eqnarray}
	\left(\kappa(\mathbf{a},\mathbf{b})-\kappa(\mathbf{c},\mathbf{d})\right)^2\leq\eta \left(\|\mathbf{a}-\mathbf{c}\|_2^2+\|\mathbf{b}-\mathbf{d}\|_2^2\right),\;\;\mathbf{a},\mathbf{b},\mathbf{c},\mathbf{d}\in\R^p\label{eq:kernel-cluster}
	\end{eqnarray}
	where $\eta$ is a constant depending on $\kappa$. Consider the data set $\X=[\x_1,\ldots,\x_n]\in\R^{p\times n}$ and the landmark set $\Z=[\z_1,\ldots,\z_m]\in\R^{p\times m}$ which partitions the data set $\X$ into $m$ clusters $\SSS=\{\SSS_1,\ldots,\SSS_m\}$. Let $\mu(\x_i)$ denote the closest landmark point to each data point $\x_i$:
	\begin{eqnarray*}
	\mu\left(\x_i\right)=\argmin_{\z_j\in\{\z_1,\ldots,\z_m\}} \|\x_i-\z_j\|_2.
	\end{eqnarray*}
	Consider the kernel matrix $\K\in\R^{n\times n}$, $K_{ij}=\kappa(\x_i,\x_j)$, and the Nystr\"om approximation $\CC\WW^\dagger\CC^T$, where $C_{ij}=\kappa(\x_i,\z_j)$ and $W_{ij}=\kappa(\z_i,\z_j)$.
	The approximation error in terms of the Frobenius norm is upper bounded:
	\begin{eqnarray}
	\mathcal{E}=\|\K-\CC\WW^\dagger\CC^T\|_F\le \eta_1\sqrt{E\left(\X,\SSS\right)}+\eta_2E\left(\X,\SSS\right)
	\end{eqnarray}
	where $\eta_1$ and $\eta_2$ are two constants and $E(\X,\SSS)$ is the total quantization error of encoding each data point $\x_i$ with the closest landmark point $\mu(\x_i)$:
	\begin{eqnarray}
	E\left(\X,\SSS\right)=\sum_{i=1}^{n} \|\x_i-\mu\left(\x_i\right)\|_2^2. \label{eq:quantization-landmark}
	\end{eqnarray}
\end{proposition}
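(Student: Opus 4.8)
The plan is to avoid comparing $\CC\WW^\dagger\CC^T$ with $\K$ directly and instead route the estimate through a \emph{quantized} kernel matrix built from the landmark assignments. Write $c(i)\in\{1,\dots,m\}$ for the index of the landmark closest to $\x_i$, so that $\mu(\x_i)=\z_{c(i)}$, and let $\PP\in\R^{n\times m}$ be the assignment matrix with $P_{i,c(i)}=1$ and all other entries $0$. Define $\widehat{\K}\in\R^{n\times n}$ by $\widehat{K}_{ij}=\kappa(\mu(\x_i),\mu(\x_j))=W_{c(i),c(j)}$ and $\widehat{\CC}\in\R^{n\times m}$ by $\widehat{C}_{ij}=\kappa(\mu(\x_i),\z_j)=W_{c(i),j}$; then $\widehat{\K}=\PP\WW\PP^T$ and $\widehat{\CC}=\PP\WW$. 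The one genuinely non-routine step is to observe that running the Nystr\"om construction on the quantized data returns the quantized matrix exactly:
\[
\widehat{\CC}\,\WW^\dagger\,\widehat{\CC}^T=\PP\WW\WW^\dagger\WW\PP^T=\PP\WW\PP^T=\widehat{\K},
\]
using the Moore--Penrose identity $\WW\WW^\dagger\WW=\WW$. Everything after this is a triangle inequality plus two invocations of the hypothesis.

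Next I would split
\[
\|\K-\CC\WW^\dagger\CC^T\|_F\le\|\K-\widehat{\K}\|_F+\|\widehat{\CC}\WW^\dagger\widehat{\CC}^T-\CC\WW^\dagger\CC^T\|_F
\]
and bound the first term entrywise. By the kernel regularity assumption with constant $\eta$, $(K_{ij}-\widehat{K}_{ij})^2\le\eta\big(\|\x_i-\mu(\x_i)\|_2^2+\|\x_j-\mu(\x_j)\|_2^2\big)$; summing over $i,j$, each term $\|\x_k-\mu(\x_k)\|_2^2$ appears $2n$ times, so $\|\K-\widehat{\K}\|_F^2\le 2\eta n\,E(\X,\SSS)$, i.e. $\|\K-\widehat{\K}\|_F\le\sqrt{2\eta n}\,\sqrt{E(\X,\SSS)}$.

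For the second term, set $\boldsymbol{\Delta}=\widehat{\CC}-\CC$, so that $\Delta_{ij}=\kappa(\mu(\x_i),\z_j)-\kappa(\x_i,\z_j)$, and expand
\[
\widehat{\CC}\WW^\dagger\widehat{\CC}^T-\CC\WW^\dagger\CC^T=\CC\WW^\dagger\boldsymbol{\Delta}^T+\boldsymbol{\Delta}\WW^\dagger\CC^T+\boldsymbol{\Delta}\WW^\dagger\boldsymbol{\Delta}^T.
\]
Using $\|\mathbf{A}\mathbf{B}\|_F\le\|\mathbf{A}\|_2\|\mathbf{B}\|_F$, the symmetry of $\WW^\dagger$, and $\|\boldsymbol{\Delta}\|_2\le\|\boldsymbol{\Delta}\|_F$, this is at most $2\|\CC\WW^\dagger\|_2\|\boldsymbol{\Delta}\|_F+\|\WW^\dagger\|_2\|\boldsymbol{\Delta}\|_F^2$. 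Applying the hypothesis once more — now only the first argument of $\kappa$ changes, so the corresponding displacement term vanishes — gives $(\Delta_{ij})^2\le\eta\|\x_i-\mu(\x_i)\|_2^2$, hence $\|\boldsymbol{\Delta}\|_F^2\le\eta m\,E(\X,\SSS)$. Collecting the pieces,
\[
\|\K-\CC\WW^\dagger\CC^T\|_F\le\big(\sqrt{2\eta n}+2\|\CC\WW^\dagger\|_2\sqrt{\eta m}\big)\sqrt{E(\X,\SSS)}+\eta m\|\WW^\dagger\|_2\,E(\X,\SSS),
\]
which is exactly the claimed form with $\eta_1=\sqrt{2\eta n}+2\|\CC\WW^\dagger\|_2\sqrt{\eta m}$ and $\eta_2=\eta m\|\WW^\dagger\|_2$, both constants in the sense of being independent of the magnitude of the quantization error. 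I expect the only real obstacle to be spotting that the Nystr\"om map reproduces the piecewise-constant kernel matrix $\widehat{\K}=\PP\WW\PP^T$ exactly; once that observation is in place, the remaining estimates are short and mechanical.
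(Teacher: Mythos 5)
Your argument is correct: the identity $\widehat{\CC}\WW^\dagger\widehat{\CC}^T=\PP\WW\WW^\dagger\WW\PP^T=\widehat{\K}$ holds (using $\WW^T=\WW$), the two entrywise applications of Equation \ref{eq:kernel-cluster} are valid, and the norm inequalities assemble into the stated form with $\eta_1,\eta_2$ depending on $n$, $m$, $\eta$, $\|\CC\WW^\dagger\|_2$, and $\|\WW^\dagger\|_2$ but not on $E(\X,\SSS)$, which is the same flavor of ``constant'' as in the original statement. Note that this paper does not actually prove the proposition --- it imports it from \citet{zhang2010clusteredNys} --- but your proof follows essentially the same strategy as that cited source: show the Nystr\"om map exactly reproduces the quantized kernel matrix, then control $\|\K-\widehat{\K}\|_F$ and the perturbation $\widehat{\CC}\mapsto\CC$ via the kernel regularity hypothesis.
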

In~\citep{zhang2010clusteredNys}, it is shown that for a number of widely used kernel functions, e.g., 
linear, polynomial, and Gaussian, the property in Equation \ref{eq:kernel-cluster} is satisfied. Based on Proposition~\ref{thm:clusteredNys}, the Clustered Nystr\"om method tries to minimize the total quantization error in Equation \ref{eq:quantization-landmark}---and thus the Nystr\"om approximation error---by performing the K-means algorithm on the $n$ data points $\x_1,\ldots,\x_n$. The resulting $m$ cluster centers are then chosen as the landmark points to construct matrices $\CC$ and $\WW$ and generate the low-rank approximation $\G=\CC\WW^\dagger\CC^T$. One benefit of the approach is that the full kernel matrix $\K$ is never formed.

\section{Improved Nystr\"om Approximation via QR Decomposition}\label{sec:nys-eig}
In Section~\ref{sec:Nystrom}, we explained the Nystr\"om method to compute rank-$r$ approximations of SPSD kernel matrices based on a set of landmark points. For a data set of size $n$ and a small  set of $m$ landmark points ($m\geq r$), two matrices $\CC\in\R^{n\times m}$ and $\WW\in\R^{m\times m}$ are constructed to form the low-rank approximation of $\K\in\R^{n\times n}$: $\G=\CC\WW^\dagger\CC^T$, where $\text{rank}(\G)\leq m$.

Although the final goal is to find an approximation that has rank no greater than $r$, it is often preferred to select $m>r$ landmark points and then restrict the resultant approximation to have rank at most $r$, e.g.,~\citep{sun2015review,li2015large,wang2013improving}. The main intuition is that selecting $m>r$ landmark points and then restricting the approximation to a lower rank-$r$ space has a regularization effect which can lead to more accurate approximations~\citep{gittens2013revisiting}. For example,  Proposition~\ref{thm:clusteredNys} states that the approximation error $\|\K-\G\|_F$ is a function of the total quantization error induced by encoding data points with the set of $m$ landmark points. Obviously, the more landmark points are selected, the total quantization error becomes smaller and thus the quality of rank-$r$ approximation can be improved. Therefore, it is important to use an efficient and accurate method to restrict the matrix $\G$ to have rank at most $r$. 

In the standard Nystr\"om method presented in Algorithm \ref{alg:StandardNys}, the rank of matrix $\G$ is restricted by computing the best rank-$r$ approximation of the inner matrix $\WW$: $\G_{(r)}^{nys}=\CC\WW_{(r)}^\dagger\CC^T$. Since the inner matrix in the representation of $\G_{(r)}^{nys}$ has rank no greater than $r$, it follows that $\G_{(r)}^{nys}$ has rank at most $r$. The main benefit of this technique is the low computational cost of performing an exact eigenvalue decomposition or SVD on a relatively small matrix of size $m\times m$. However, the standard Nystr\"om method totally ignores the structure of the matrix $\CC$ and is solely based on ``filtering'' $\WW$. In fact, since the rank-$r$ approximation $\G_{(r)}^{nys}$ does not utilize the full knowledge of matrix $\CC$, the selection of more landmark points does not guarantee an improved low-rank approximation in the standard Nystr\"om method.

To solve this problem, we present an efficient method to compute the best rank-$r$ approximation of the matrix $\G=\CC\WW^\dagger\CC^T$, for given matrices $\CC\in\R^{n\times m}$ and $\WW\in\R^{m\times m}$. In contrast with the standard Nystr\"om method, our proposed approach takes advantage of both matrices $\CC$ and $\WW$. To begin, let us consider the best rank-$r$ approximation of the matrix $\G$:
\begin{eqnarray}
\G_{(r)}^{opt}&= &\argmin_{\G':\;\text{rank}(\G')\leq r} \|\CC\WW^\dagger\CC^T-\G'\|_F \nonumber\\
&\overset{(a)}{=} &  \argmin_{\G':\;\text{rank}(\G')\leq r} \|\QQ\underbrace{\RR\WW^\dagger\RR^T}_{m\times m}\QQ^T-\G'\|_F\nonumber\\
&\overset{(b)}{=} & \argmin_{\G':\;\text{rank}(\G')\leq r} \|\left(\QQ\VV'\right)\SIGMA'\left(\QQ\VV'\right)^T-\G'\|_F\nonumber\\
& = & \left(\QQ \VV'_r\right) \SIGMA'_r\left(\QQ \VV'_r\right)^T,\label{eq:optimal-nys}
\end{eqnarray}
where (a) follows from the QR decomposition of $\CC\in\R^{n\times m}$; $\CC=\QQ\RR$, where $\QQ\in\R^{n\times m}$ and $\RR\in\R^{m\times m}$. To get (b), the eigenvalue decomposition of the $m\times m$ matrix $\RR\WW^\dagger\RR^T$ is computed,  $\RR\WW^\dagger\RR^T=\VV'\SIGMA'\VV'^T$, where the diagonal matrix $\SIGMA'\in\R^{m\times m}$ contains $m$ eigenvalues in descending order on the main diagonal and the columns of $\VV'\in\R^{m\times m}$ are the corresponding eigenvectors. Moreover, we note that the columns of  $\QQ\VV'\in\R^{n\times m}$ are orthonormal because both $\QQ$ and $\VV'$ have orthonormal columns:
\begin{eqnarray*}
\big(\QQ\VV'\big)^T\big(\QQ\VV'\big)=\VV'^T\big(\QQ^T\QQ\big)\VV'=\VV'^T\VV'=\eye_{m\times m}.
\end{eqnarray*}
Thus, the decomposition $(\QQ\VV')\SIGMA'(\QQ\VV')^T$ contains the $m$ eigenvalues and orthonormal eigenvectors of the Nystr\"om approximation $\CC\WW^\dagger\CC^T$. Based on the Eckart-Young theorem, the best rank-$r$ approximation of $\G=\CC\WW^\dagger\CC^T$ is then computed using the $r$ leading eigenvalues $\SIGMA'_r\in\R^{r\times r}$ and corresponding eigenvectors $\QQ\VV'_r\in\R^{n\times r}$, as given in Equation \ref{eq:optimal-nys}. Thus, the estimates of the top $r$ eigenvalues and eigenvectors of the kernel matrix $\K$ from the Nystr\"om approximation $\CC\WW^\dagger\CC^T$ are obtained as follows:
\begin{eqnarray}
\widehat{\UU}_r^{opt}=\QQ\VV'_r,\;\;\widehat{\Lam}_r^{opt}=\SIGMA'_r.
\end{eqnarray}
These estimates can also be used to approximate the kernel matrix as $\K \approx\LL^{opt}\left(\LL^{opt}\right)^T$, where $\LL^{opt}=\widehat{\UU}_r^{opt}\big(\widehat{\Lam}_r^{opt}\big)^{1/2}$.  

\begin{algorithm}[t]
	\caption{Nystr\"om via QR Decomposition}
	\label{alg:NysQR}
	\textbf{Input:} data set $\X$, landmark points $\Z$, kernel function $\kappa$, target rank $r$
	
	\textbf{Output:} estimates of $r$ leading eigenvectors and eigenvalues of the kernel matrix $\K\in\R^{n\times n}$: $\widehat{\UU}_r^{opt}\in\R^{n\times r}$, $\widehat{\Lam}_r^{opt}\in\R^{r\times r}$
	\begin{algorithmic}[1]
		\STATE  Form two matrices $\CC$ and $\WW$: $C_{ij}=\kappa(\x_i,\z_j)$, $W_{ij}=\kappa(\z_i,\z_j)$
		\STATE Perform the QR decomposition: $\CC=\QQ\RR$
		\STATE Compute the eigenvalue decomposition: $\RR\WW^\dagger\RR^T=\VV'\SIGMA'\VV'^T$
		\STATE $\widehat{\UU}_r^{opt}=\QQ\VV'_r$ and $\widehat{\Lam}_r^{opt}=\SIGMA'_r$
	\end{algorithmic}
\end{algorithm}

The overall procedure to estimate the $r$ leading eigenvalues/eigenvectors of the kernel matrix $\K$  based on a set of landmark points $\Z\in\R^{p\times m}$, $m\geq r$, is presented in Algorithm \ref{alg:NysQR}. The time complexity of this method is $\order(pnm+nm^2+m^3+nmr)$, where $\order(pnm)$ represents the cost to form matrices $\CC$ and $\WW$.
The complexity of the QR decomposition is $\order(nm^2)$ and it takes $\order(m^3)$ time to compute the eigenvalue decomposition of $\RR\WW^\dagger\RR^T$. Finally, the cost to compute the matrix multiplication $\QQ\VV'_r$ is $\order(nmr)$.

We can compare the computational complexity of our proposed Nystr\"om method via QR decomposition (Algorithm \ref{alg:NysQR}) with that of the standard Nystr\"om method (Algorithm \ref{alg:StandardNys}). Since our focus in this paper is on large-scale data sets with $n$ large, we only consider terms involving $n$ which lead to dominant computation costs. Based on the discussion in Section~\ref{sec:Nystrom}, it takes $\mathcal{C}_{nys}=\order(pnm+nmr+nr^2)$ time to compute the eigenvalue decomposition using the standard Nystr\"om method. For our proposed method, the cost of eigenvalue decomposition is $\mathcal{C}_{opt}=\order(pnm+nmr+nm^2)$. Thus, for data of even moderate dimension with $p\gtrsim m$, the dominant term in both $\mathcal{C}_{nys}$ and $\mathcal{C}_{opt}$ is $\order(pnm)$. This means that the increase in computation cost of our method ($nm^2$ vs.~$nr^2$) becomes less significant when the number of landmark points $m$ is close to the target rank $r$. 

In the rest of this section, we compare the performance and efficiency of our proposed method presented in Algorithm \ref{alg:NysQR} with Algorithm \ref{alg:StandardNys} on three examples. As we will see, our proposed method yields more accurate decompositions than the standard Nystr\"om method for small values of $m$, such as $m=2r$.

\subsection{Toy Example}
It is \emph{always} true that for any kernel matrix $\K$, 
$\|\K-\G_{(r)}^{opt}\|_F \le \|\K-\G_{(r)}^{nys}\|_F$ (this is also true in the spectral norm), due to the best-approximation properties of our estimator.
We can show, using examples, that this inequality can be quite large.

In the first example, we consider a small kernel matrix of size $3\times 3$: 
\begin{eqnarray*}
\K=\left[\begin{array}{ccc}
1 & 0 & 10\\
0 & 1.01 & 0\\
10 & 0 & 100
\end{array}\right].
\end{eqnarray*}
Such a matrix could arise, for example, using the polynomial kernel with parameters $c=0$ and $d=1$ and the data matrix:
\begin{eqnarray*}
\X=\frac{1}{\sqrt{2}}\left[\begin{array}{ccc}
1 & 0 & 10\\
0 & \sqrt{2\cdot 1.01} & 0\\
1 & 0 & 10
\end{array}\right].
\end{eqnarray*}
Here, the goal is to compute the rank $r=1$ approximation of $\K$. Suppose that  $m=2$ columns of the kernel matrix are sampled uniformly, e.g., the first and second columns. Then, we have:
\begin{eqnarray*}
\CC=\left[\begin{array}{cc}
1 & 0\\
0 & 1.01\\
10 & 0
\end{array}\right],\;\;\WW=\left[\begin{array}{cc}
1 & 0\\
0 & 1.01
\end{array}\right].
\end{eqnarray*}
In the standard Nystr\"om method, the best rank-$1$ approximation of the inner matrix $\WW$ is first computed\footnote{One might ask if it is better to \emph{first} find $\WW^\dagger$ and \emph{then} find the best rank-$r$ approximation of $\WW^\dagger$. This generally does not help, and one can construct similar toy examples where this approach does arbitrarily poorly as well.}. Then, based on Equation \ref{eq:Nys-rank-r}, the rank-$1$ approximation of the kernel matrix in the standard Nystr\"om method is given by:
\begin{eqnarray*}
\G_{(1)}^{nys}= \left[\begin{array}{cc}
1 & 0\\
0 & 1.01\\
10 & 0
\end{array}\right]\left[\begin{array}{cc}
0 & 0\\
0 & \frac{1}{1.01}
\end{array}\right]
\left[\begin{array}{ccc}
1 & 0 & 10\\
0 & 1.01 & 0
\end{array}\right]
=
\left[\begin{array}{ccc}
0 & 0 & 0\\
0 & 1.01 & 0\\
0 & 0 & 0
\end{array}\right].
\end{eqnarray*}

The normalized kernel approximation error in terms of the Frobenius norm is large:  $\|\K-\G_{(1)}^{nys}\|_F/\|\K\|_F=0.99$.
On the other hand, using the same matrices $\CC$ and $\WW$, our proposed method first computes the QR decomposition of $\CC=\QQ\RR$:
\begin{eqnarray*}
\QQ=\left[\begin{array}{cc}
\frac{1}{\sqrt{101}} & 0\\
0 & 1\\
\frac{10}{\sqrt{101}} & 0
\end{array}\right],\;\;\RR=\left[\begin{array}{cc}
\sqrt{101} & 0\\
0 & 1.01
\end{array}\right].
\end{eqnarray*}
Then, the product of three matrices $\RR\WW^\dagger\RR^T$ is computed to find its eigenvalue decomposition $\RR\WW^\dagger\RR^T=\VV'\SIGMA'\VV'^T$:
\begin{eqnarray*}
\RR\WW^\dagger\RR^T&= & \left[\begin{array}{cc}
\sqrt{101} & 0\\
0 & 1.01
\end{array}\right]
\left[\begin{array}{cc}
1 & 0\\
0 & \frac{1}{1.01}
\end{array}\right]
\left[\begin{array}{cc}
\sqrt{101} & 0\\
0 & 1.01
\end{array}\right]
\nonumber
\\
&= &\left[\begin{array}{cc}
101 & 0\\
0 & 1.01
\end{array}\right]\nonumber \\
&= & \underbrace{\left[\begin{array}{cc}
	1 & 0\\
	0 & 1
	\end{array}\right]}_{\VV'}\underbrace{\left[\begin{array}{cc}
	101 & 0\\
	0 & 1.01
	\end{array}\right]}_{\SIGMA'}\underbrace{\left[\begin{array}{cc}
	1 & 0\\
	0 & 1
	\end{array}\right]}_{\VV'^T}.
\end{eqnarray*}
Finally, the rank-$1$ approximation of the kernel matrix in our proposed method is obtained by using Equation \ref{eq:optimal-nys}:
\begin{eqnarray*}
\G_{(1)}^{opt}=\left[\begin{array}{cc}
\frac{1}{\sqrt{101}} & 0\\
0 & 1\\
\frac{10}{\sqrt{101}} & 0
\end{array}\right]\left[\begin{array}{cc}
101 & 0\\
0 & 0
\end{array}\right]
\left[\begin{array}{ccc}
\frac{1}{\sqrt{101}} & 0 & \frac{10}{\sqrt{101}}\\
0 & 1 & 0
\end{array}\right]=\left[\begin{array}{ccc}
1 & 0 & 10\\
0 & 0 & 0\\
10 & 0 & 100
\end{array}\right],
\end{eqnarray*}
where $\|\K-\G_{(1)}^{opt}\|_F/\|\K\|_F=0.01$. 
In fact, one can show that our approximation is the same as the best rank-$1$ approximation formed using full knowledge of $\K$, i.e., $\G_{(1)}^{opt}=\K_{(1)}$. 
Furthermore, clearly we can tweak this toy example to make the error $\|\K-\G_{(1)}^{opt}\|_F/\|\K\|_F=\epsilon$ and $\|\K-\G_{(1)}^{nys}\|_F/\|\K\|_F=1-\epsilon$ for any $\epsilon>0$.
This example demonstrates that
``Nystr\"om via QR Decomposition'' produces much more accurate rank-$1$ approximation of the kernel matrix with same matrices $\CC$ and $\WW$ used in the standard Nystr\"om method. 

\subsection{Synthetic Data Set}
As shown in Figure \ref{fig:syn-data-R2}, we consider a synthetic data set consisting of $n=4000$ data points in $\R^2$ that are nonlinearly separable. Therefore, a nonlinear kernel function is employed to find an embedding of these points so that linear learning algorithms can be applied to the mapped data points. To do this, we use the polynomial kernel function with the degree $d=2$ and the constant $c=0$, i.e., $\kappa(\x_i,\x_j)=\langle\x_i,\x_j\rangle^2$. Next, a low-rank approximation of the kernel matrix in the form of $\K\approx\LL\LL^T$, $\LL\in\R^{n\times r}$, is computed by using the Nystr\"om method. The $n$ rows of $\LL$ represent the virtual samples or mapped data points~\citep{zhang2012scaling,golts2016linearized,pourkamali2016randomized}. Given a suitable kernel function and accurate low-rank approximation technique, the $n$ rows of $\LL$ in $\R^r$ are linearly separable. In this example, we set the target rank $r=2$ so that we can easily visualize the resultant mappings.

We measure the approximation accuracy by using the normalized kernel approximation error defined as $\|\K-\LL\LL^T\|_F/\|\K\|_F$, where the matrix $\LL$ is obtained by using the standard Nystr\"om method and our proposed method ``Nystr\"om via QR Decomposition''.
In Figure \ref{fig:syn-error}, the mean and standard deviation of the normalized kernel approximation error over $50$ trials for varying number of landmark points $m$ are reported. In each trial, the $m$ landmark points are chosen uniformly at random without replacement from the input data. Both our method and the standard Nystr\"om method share same matrices $\CC$ and $\WW$ for a fair comparison.  As we expect, the accuracy of our Nystr\"om via QR decomposition is exactly the same as the standard Nystr\"om method for $m=r=2$. As the number of landmark points $m$ increases, the accuracy of standard Nystr\"om method improves and it slowly gets closer to the accuracy of exact eigenvalue decomposition or SVD. However, our proposed method reaches the accuracy of SVD even for $m=2r=4$. In fact, we observe that the approximation error of our method by using $m=4$ landmark points is better than the accuracy of  standard Nystr\"om method with $m=40$. For this example, our proposed rank-$r$ approximation technique in Algorithm \ref{alg:NysQR} is more accurate and memory efficient than the standard Nystr\"om method with at least one order of magnitude savings in memory.
\begin{figure}[t]
	\begin{centering}
		\subfloat[Original data]{\begin{centering}
				\includegraphics[scale=0.40]{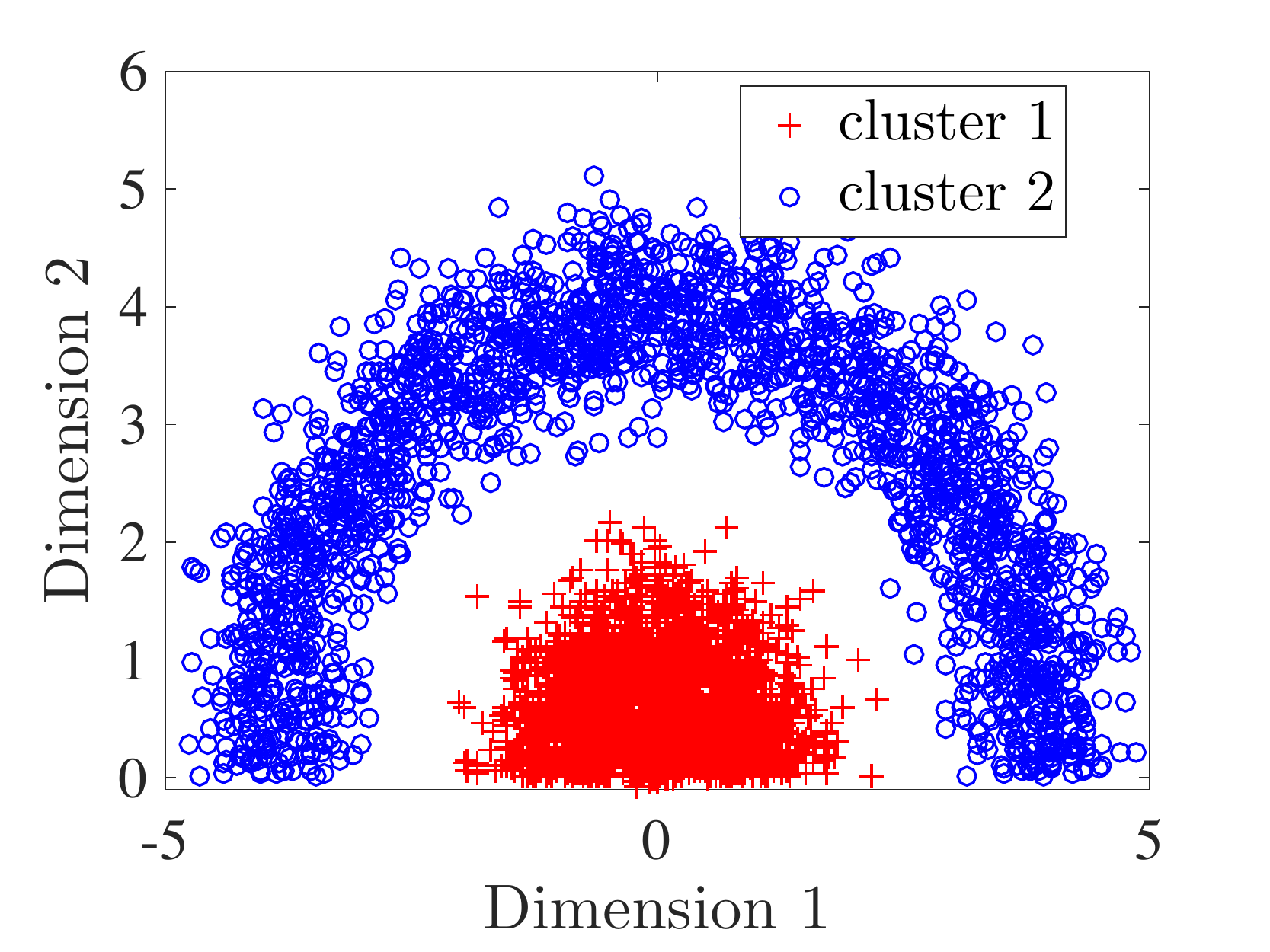}
				\par\end{centering}\label{fig:syn-data-R2}
		}\subfloat[Kernel approximation error]{\begin{centering}
			\includegraphics[scale=0.40]{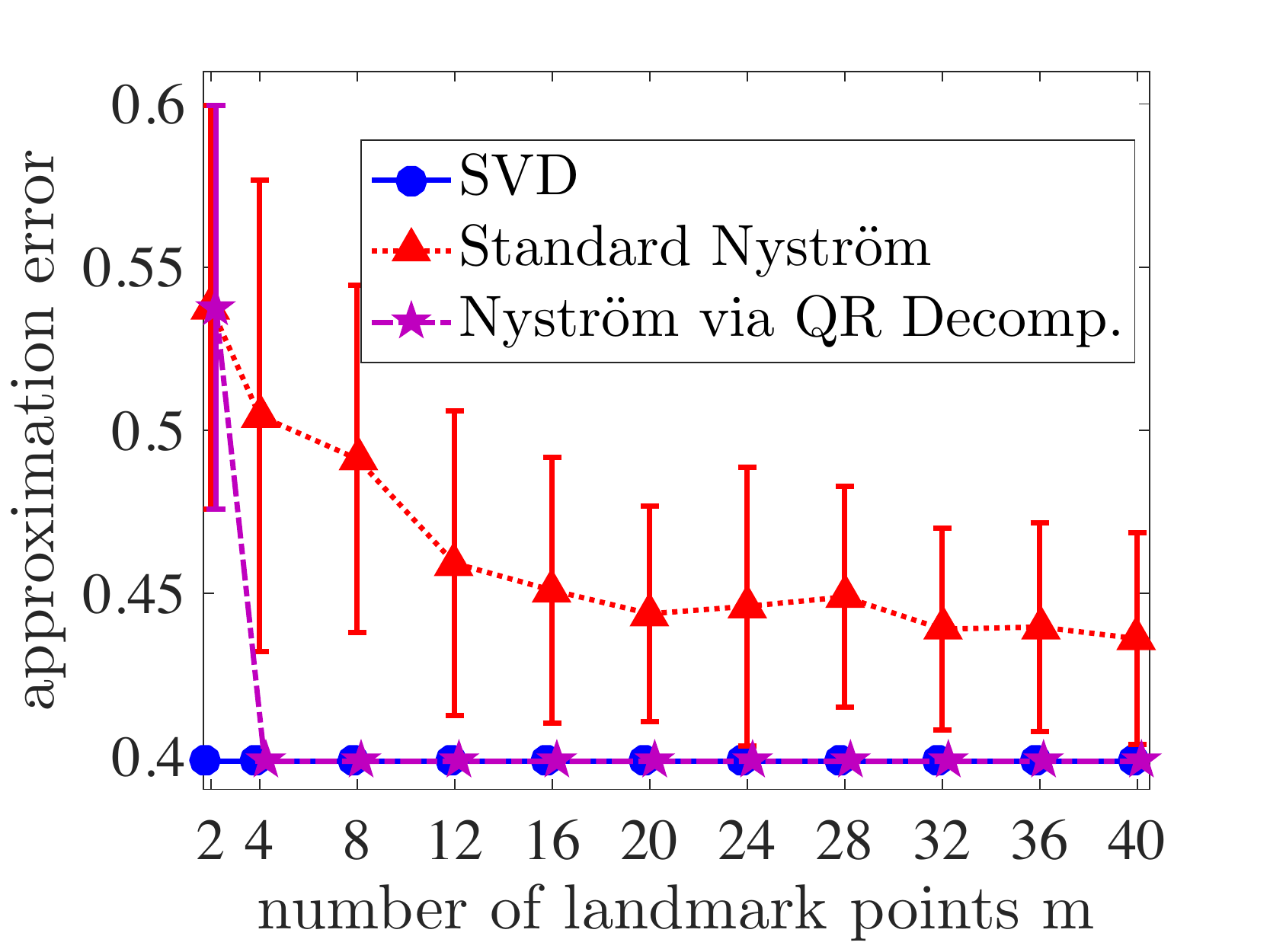}\label{fig:syn-error}
			\par\end{centering}
	}
	\par\end{centering}

\begin{centering}
	\subfloat[Standard Nystr\"om]{\begin{centering}
			\includegraphics[scale=0.40]{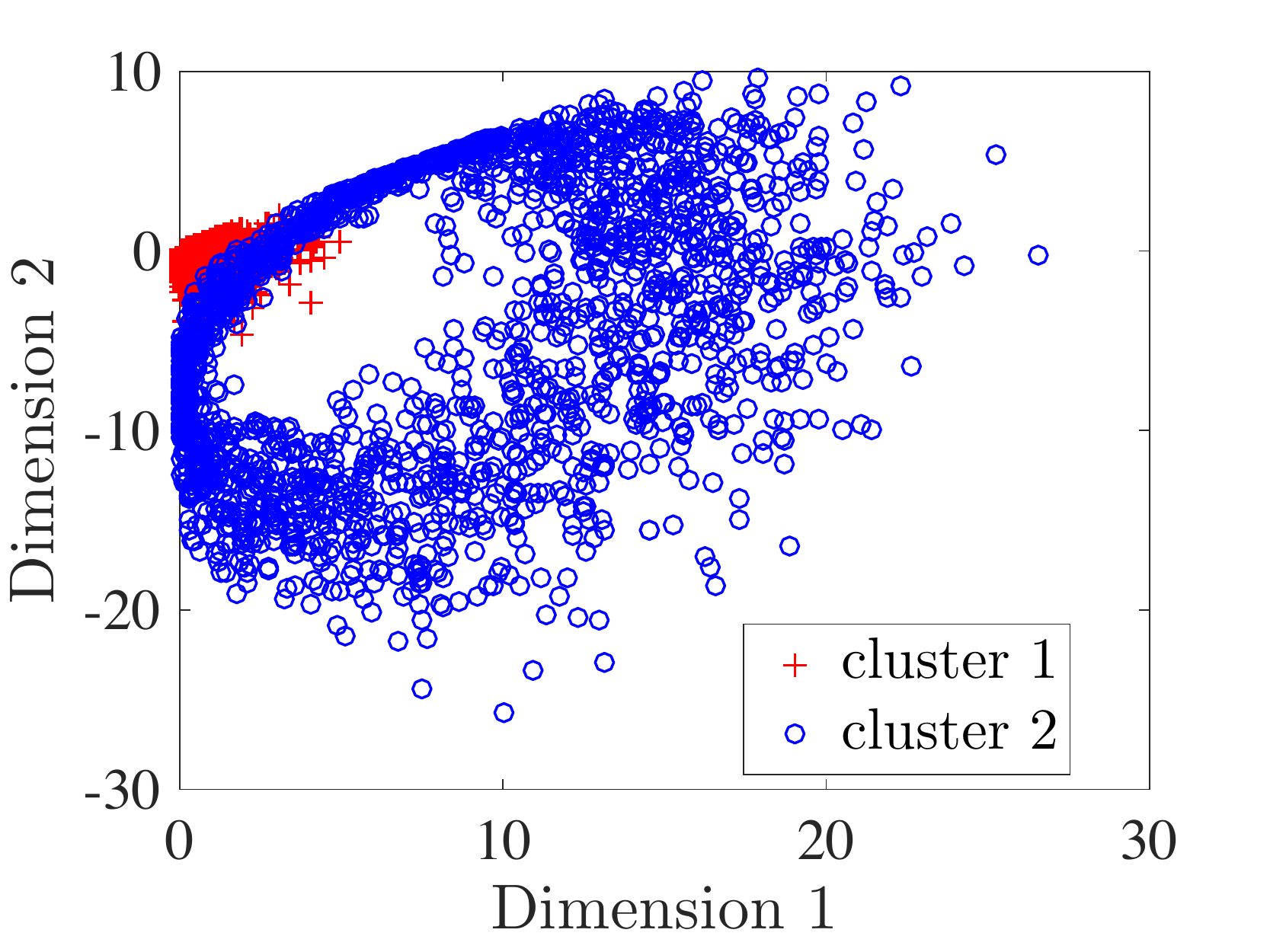}\label{fig:syn-data-sta}
			\par\end{centering}
	}\subfloat[Nystr\"om via QR Decomposition]{\begin{centering}
		\includegraphics[scale=0.40]{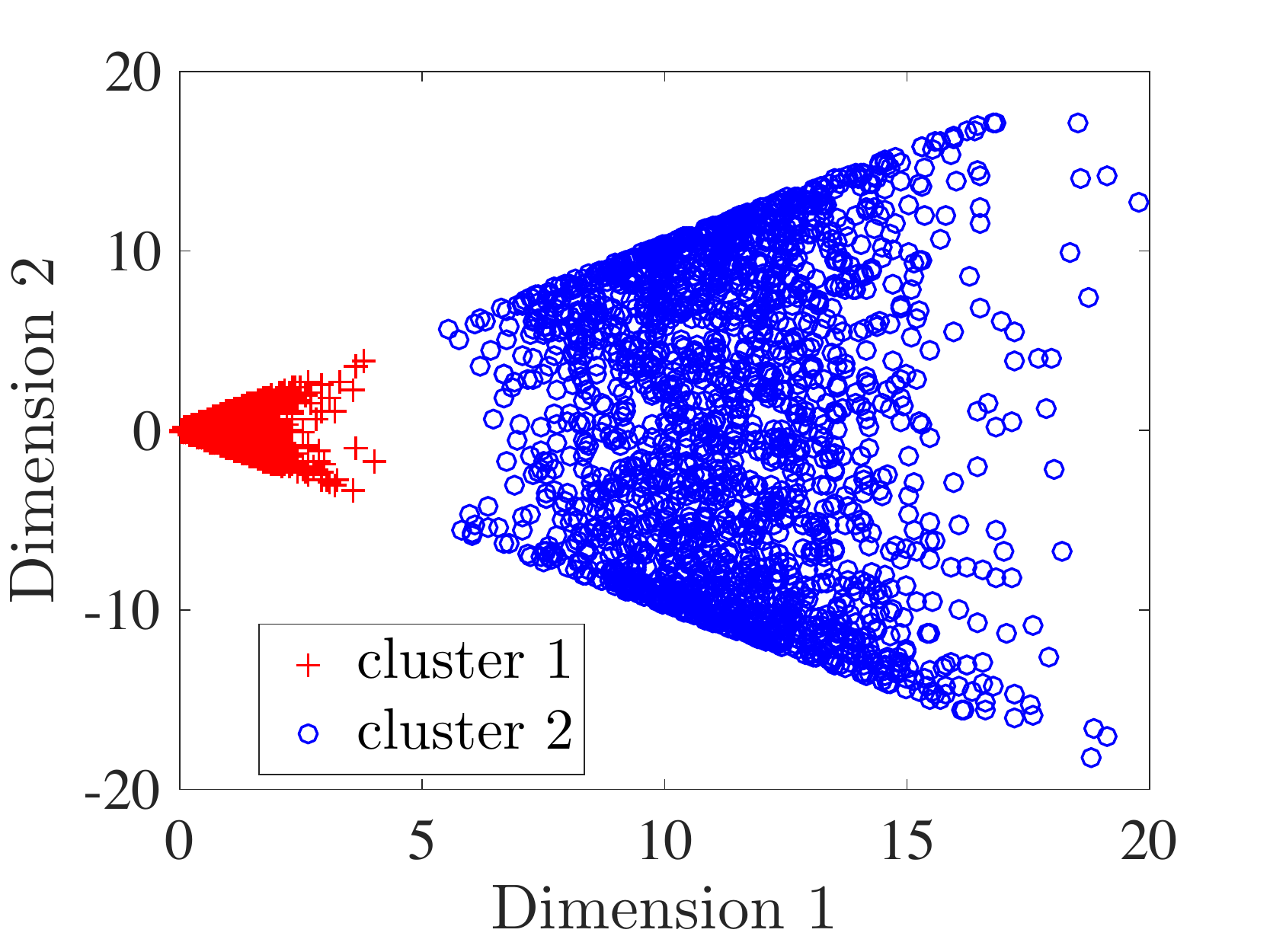}\label{fig:syn-data-qr}
		\par\end{centering}
}
\par\end{centering}

\caption{The standard Nystr\"om method is compared with our proposed ``Nystr\"om via QR Decomposition'' on the synthetic data set ($p=2$, $n=4000$). The polynomial kernel function $\kappa(\x_i,\x_j)=\langle\x_i,\x_j\rangle^2$ is used to find nonlinear mappings of the original data points by using the rank-$2$ approximation of the kernel matrix $\K\approx\LL\LL^T$, $\LL\in\R^{n\times 2}$. The bottom row uses $m=4$ landmark points.}

\end{figure}

Finally, we visualize the mapped data points using both methods for fixed $m=4$. In Figure \ref{fig:syn-data-sta} and Figure \ref{fig:syn-data-qr}, the rows of $\LL^{nys}\in\R^{n\times 2}$ and $\LL^{opt}\in\R^{n\times 2}$  are plotted, respectively. The rows of $\LL^{opt}\in\R^{n\times 2}$ in the ``Nystr\"om via QR Decomposition'' method are linearly separable which is desirable for kernel-based learning. But, the rows of $\LL^{nys}\in\R^{n\times 2}$ are not linearly separable due to the poor performance of the standard Nystr\"om method.

\subsection{Real Data Set: \dataset{satimage}}
In the last example, we use the \dataset{satimage} data set~\citep{CC01a} with $p=36$ and $n=4435$. We duplicate each data point four times to increase to $n=17,\!740$ in order to have a more meaningful comparison of computation times.
The kernel matrix is formed using the Gaussian kernel function $\kappa\left(\x_i,\x_j\right)=\exp\left(-\|\x_i-\x_j\|_2^2/c\right)$  where the parameter $c$ is chosen as the averaged squared distance between all the data points and the sample mean~\citep{zhang2010clusteredNys}. The $m$ landmark points are chosen by performing K-means on the original data, following the Clustered Nystr\"om method.

In Figure \ref{fig:QR_error_r2} and Figure \ref{fig:QR_error_r5}, the mean and standard deviation of normalized kernel approximation error are reported over $50$ trials for varying number of landmark points $m$ and two values of the target rank $r=2$ and $r=5$, respectively. 
As expected, 
when the number of landmark points is set to be the same as the target rank, the standard Nystr\"om method and our proposed method have exactly the same approximation error. Interestingly, it is seen that when the number of landmark points $m$ increases, the approximation error does not necessarily decrease in the standard Nystr\"om method as shown in Figure \ref{fig:QR_error_r2}. This is a major drawback of the standard Nystr\"om method because the increase in memory and computation costs imposed by larger $m$ may lead to worse performance.
In contrast, our proposed ``Nystr\"om via QR Decomposition'' outperforms the standard Nystr\"om method for both values of the target rank $r=2$ and $r=5$,
and we know theoretically that performance can only improve as $m$ increases.
Moreover, we see that the accuracy of our method reaches the accuracy of the best rank-$r$ approximation obtained by using the SVD for as few as $m=2r$ landmark points.
\begin{figure}[t]
	\begin{centering}
		\subfloat[Kernel approximation error, $r=2$]{\begin{centering}
				\includegraphics[scale=0.40]{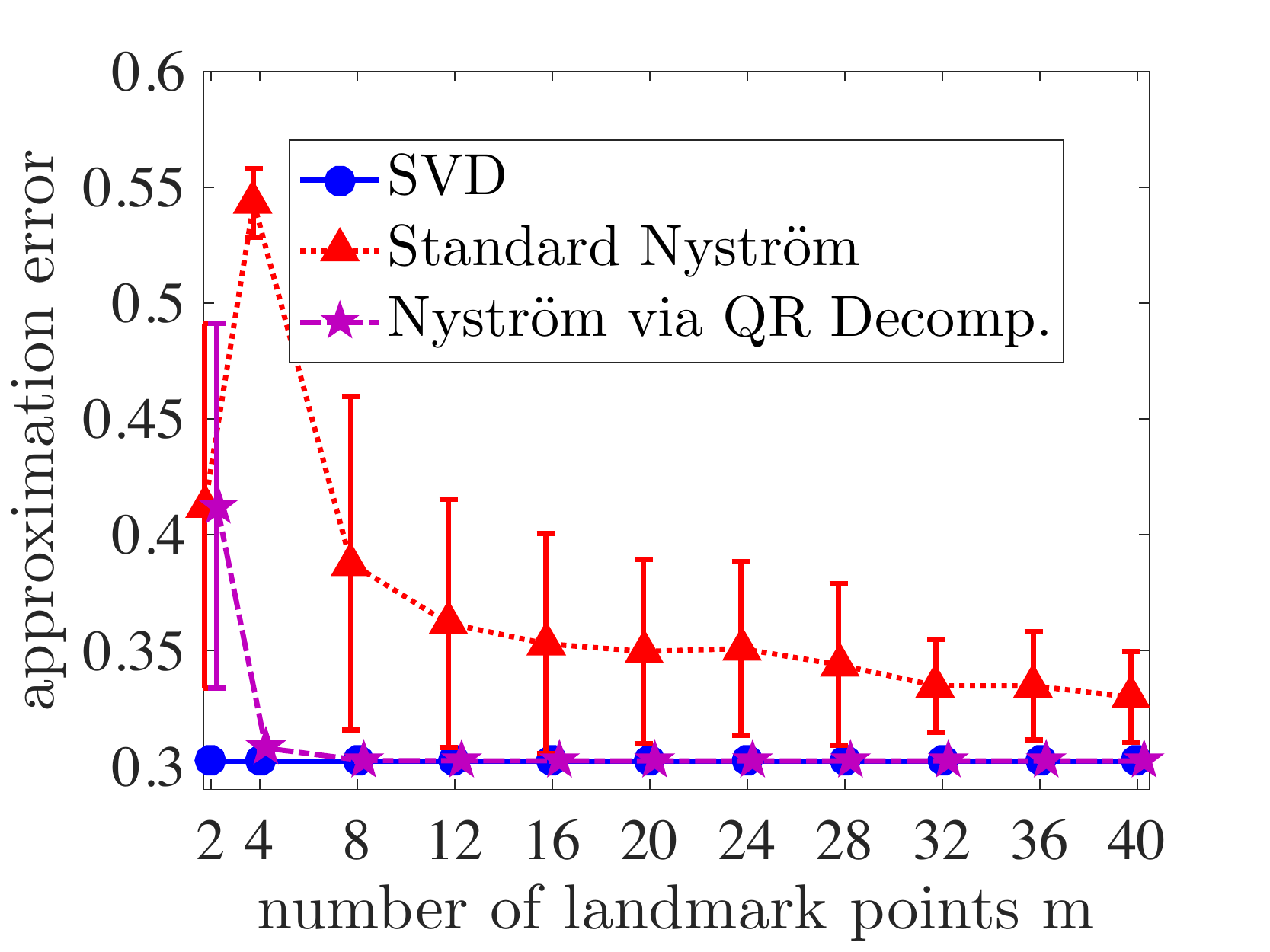}\label{fig:QR_error_r2}
				\par\end{centering}
		}\subfloat[Runtime, $r=2$]{\begin{centering}
			\includegraphics[scale=0.40]{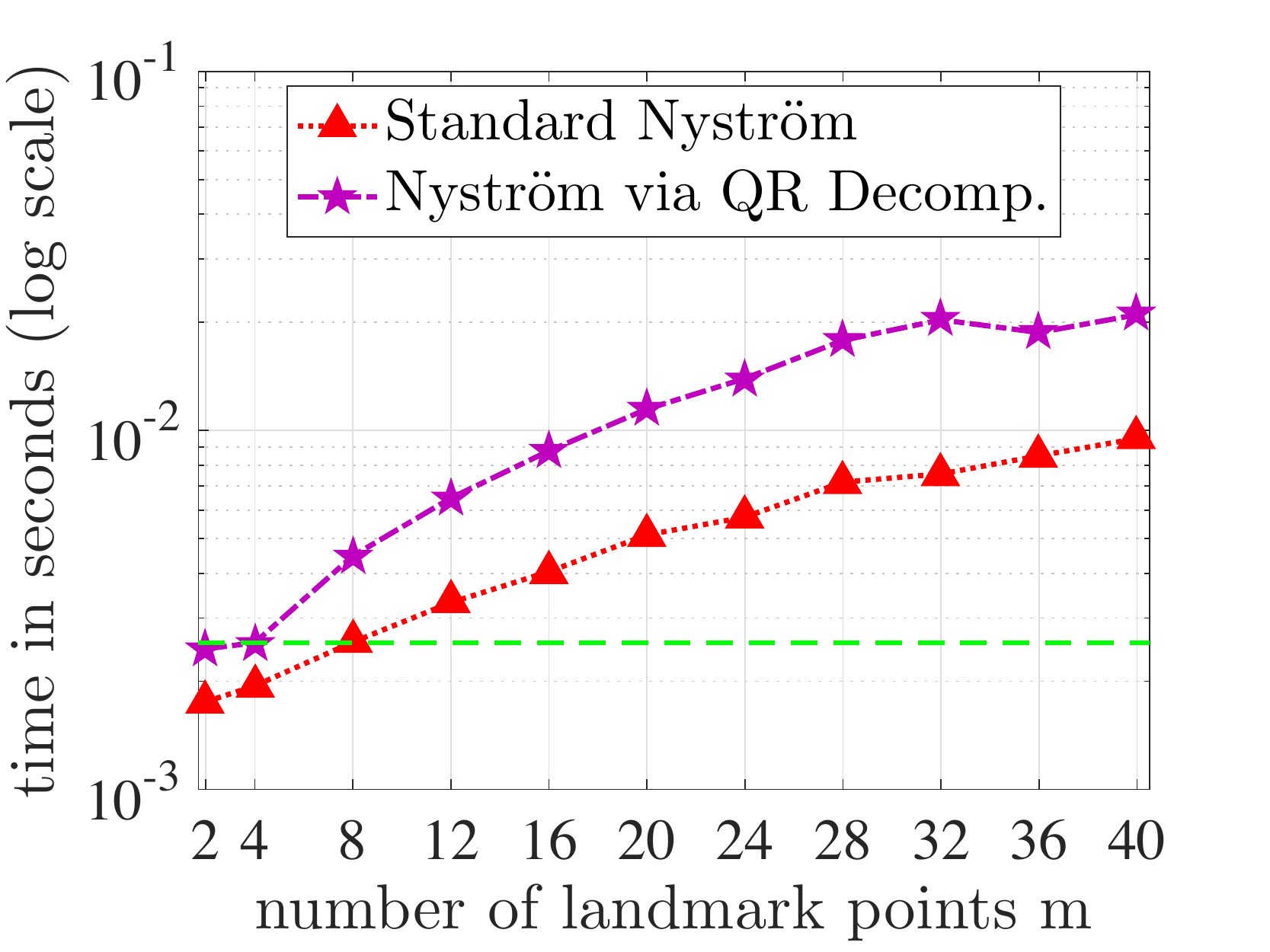}\label{fig:QR_time_r2}
			\par\end{centering}
	}
	\par\end{centering}

\begin{centering}
	\subfloat[Kernel approximation error, $r=5$]{\begin{centering}
			\includegraphics[scale=0.40]{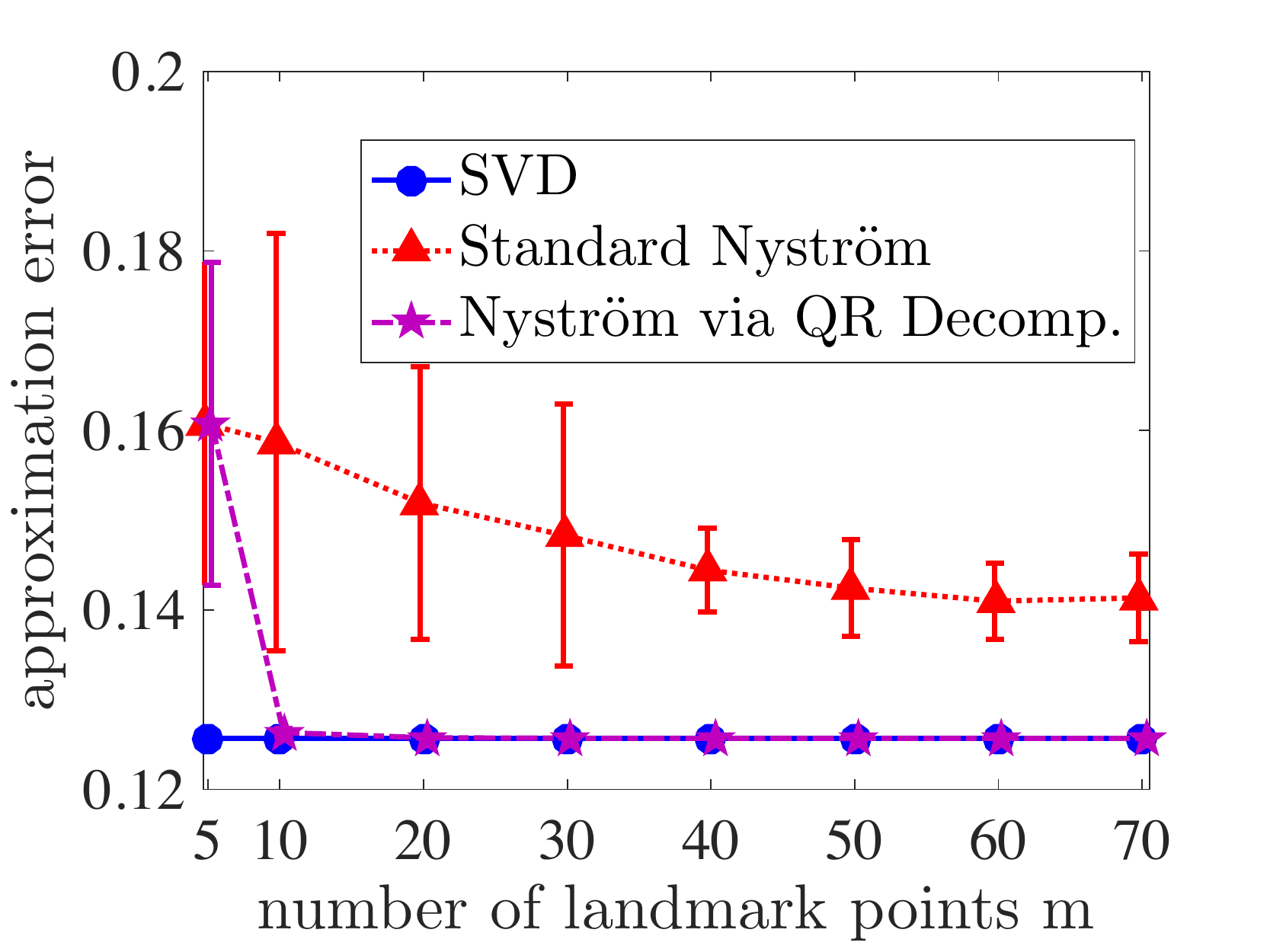}\label{fig:QR_error_r5}
			\par\end{centering}
	}\subfloat[Runtime, $r=5$]{\begin{centering}
		\includegraphics[scale=0.40]{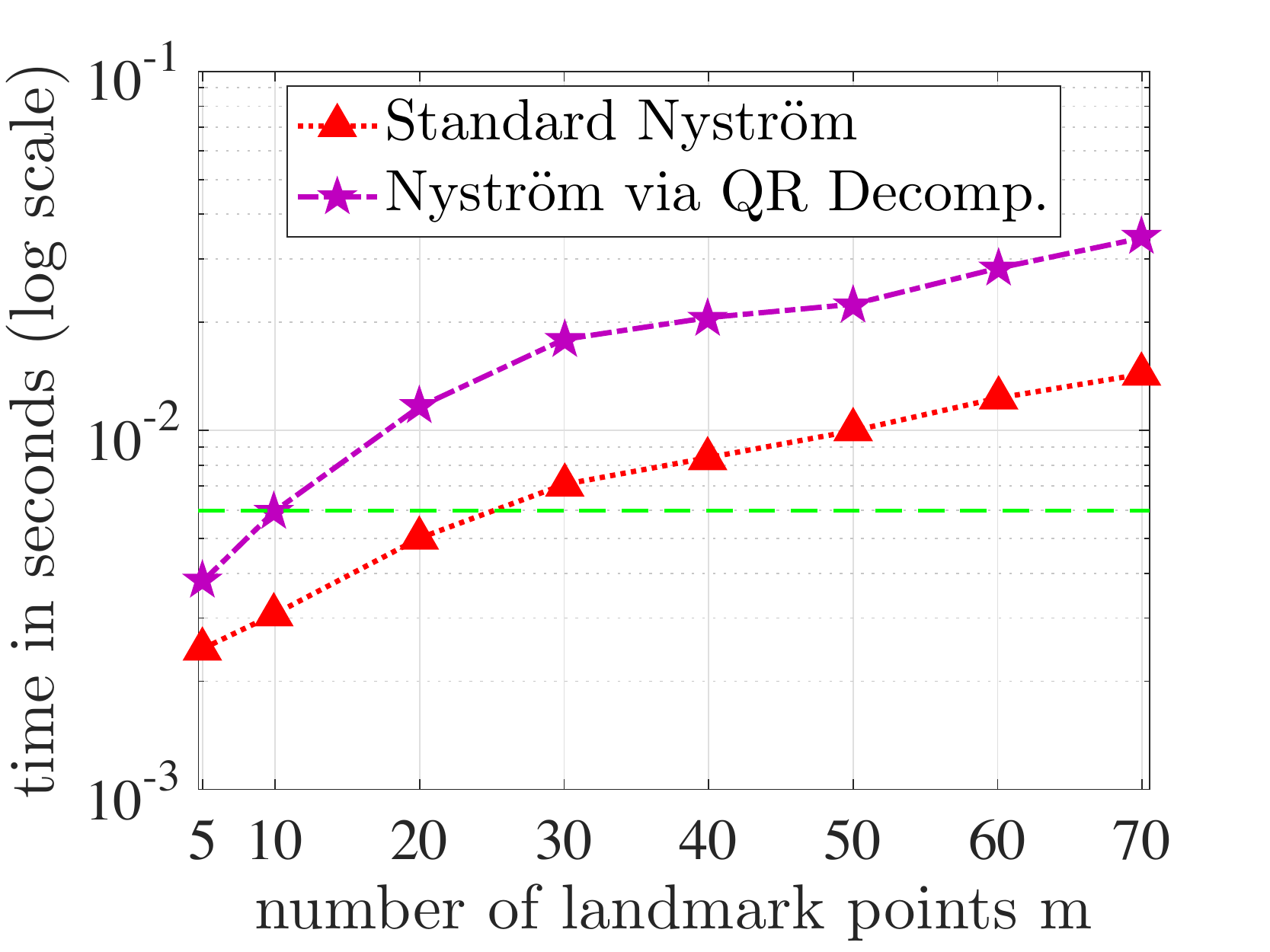}\label{fig:QR_time_r5}
		\par\end{centering}
}
\par\end{centering}

\caption{The standard Nystr\"om method is compared with our proposed ``Nystr\"om via QR Decomposition'' on the satimage data set. Our method yields more accurate low-rank approximations with noticeable memory and computation savings.}
\end{figure}

The runtime of both methods are also compared in Figure \ref{fig:QR_time_r2} and Figure \ref{fig:QR_time_r5} for two cases of $r=2$ and $r=5$, respectively. The reported values are averaged over $50$ trials and they represent the computation cost associated with Algorithm \ref{alg:StandardNys} and Algorithm \ref{alg:NysQR}.
As we explained earlier in this section, the computational complexity of our method $\mathcal{C}_{opt}$ will be slightly increased compared to the standard Nystr\"om method $\mathcal{C}_{nys}$ and this is consistent with the timing results in Figure \ref{fig:QR_time_r2} and Figure \ref{fig:QR_time_r5}. Moreover, we see that the runtime of our method is increased by almost a factor of $2$ even for large values of $m$. 
To have a fair comparison, we draw a dashed green line that determines the values of $m$ for which both methods have the same running time. In Figure \ref{fig:QR_time_r2}, the runtime for $m=4$ in our method is the same as $m=8$ in  the standard Nystr\"om, while our method is much more accurate. Similarly, in Figure \ref{fig:QR_time_r5}, the runtime for $m=10$ in our method is almost the same as $m=30$ in the standard Nystr\"om method. However, our method results in more accurate low-rank approximation of the kernel matrix. 
This dataset further supports that 
our ``Nystr\"om via QR Decomposition'' results in more accurate low-rank approximations than the standard Nystr\"om method with significant memory and computation savings. 


\section{Randomized Clustered Nystr\"om Method}\label{sec:random-clustered-nys}
The selection of informative landmark points is an essential component to obtain accurate low-rank approximations of SPSD matrices in the Nystr\"om method. The Clustered Nystr\"om method~\citep{zhang2010clusteredNys} has been shown to be a powerful technique for generating  highly accurate low-rank approximations compared to uniform sampling and other sampling methods~\citep{kumar2012sampling,sun2015review,iosifidis2016nystrom}. However, the main drawbacks of this method are high memory and computational complexities associated with performing K-means clustering on large-scale data sets. In this section, we introduce an efficient randomized method for generating a set of representative landmark points based on low-dimensional random projections of the original data.
Specifically, our proposed method provides a ``tunable tradeoff'' between the accuracy of Nystr\"om low-rank approximations and the efficiency in terms of memory and computation savings. 

To introduce our proposed method, we begin by explaining the process of generating landmark points in the Clustered Nystr\"om method. As mentioned in Section~\ref{sec:Nystrom-Sampling}, the central idea behind Clustered Nystr\"om is that the approximation error depends on the total quantization error of encoding each data point in the data set with the closest landmark point. Thus, landmark points are chosen to be centroids resulting from the K-means clustering algorithm which partitions the data set into $m$ clusters. Given an initial set of $m$ centroids $\{\muu_j\}_{j=1}^m\in\R^p$, the K-means clustering algorithm iteratively updates assignments and cluster centroids as follows~\citep{Bishop}:
\begin{enumerate}
	\item Update assignments: for $i=1,\ldots,n$
	\begin{eqnarray*}
	\x_i\in\SSS_j \Leftrightarrow j\in\argmin_{j'\in\{1,\ldots,m\}}\|\x_i-\muu_{j'}\|_2
	\end{eqnarray*}
	\item Update cluster centroids: for $j=1,\ldots,m$
	\begin{eqnarray} \label{eq:center}
	\muu_j=\frac{1}{|\SSS_j|}\sum_{\x_i\in\SSS_j}\x_i
	\end{eqnarray}
\end{enumerate}
where $|\SSS_j|$ denotes the number of data points in the cluster $\SSS_j$ and $\muu_j$ is the sample mean of the $j$-th cluster.

For large-scale data sets with large $p$ and/or $n$,
the memory requirements and computation cost of performing the K-means clustering algorithm become expensive~\citep{ailon2009streaming,shindler2011fast,K-means-Feldman}. First, the K-means algorithm requires several passes on the entire data set and thus the data set should often be stored in a centralized location which takes $\order(pn)$ memory. Second, the time complexity of K-means clustering is $\order(pnm)$ per iteration to partition the set of $n$ data points into $m$ clusters~\citep{SketchAndValidateKMeans}. Hence, the high dimensionality of massive data sets provides considerable challenge to the design of memory and computation efficient
alternatives for the Clustered Nystr\"om method.

One promising strategy to address these obstacles is to use random projections of the data for constructing a small set of new features~\citep{DataBaseFriendlyRandomProjection,Pourkamali_ICML_2014,FastCSTracking_PAMI,Pourkamali_DL_SampTA}. In this case, for some parameter $p'<p$, the data matrix $\X$ is multiplied on the left by a random zero-mean matrix $\HH\in\R^{p'\times p}$ 
in order 
to compute a low-dimensional representation:
\begin{eqnarray*}
\widehat{\X}=\HH\X=\left[\HH\x_1,\ldots,\HH\x_n\right]\in\R^{p'\times n}.
\end{eqnarray*}
The columns of $\widehat{\X}=[\widehat{\x}_1,\ldots,\widehat{\x}_n]$ are known as sketches or compressive measurements~\citep{davenport2010signal} and the random map $\HH$ preserves the geometry of data under certain conditions~\citep{ImprovedAnalysis}. The task of clustering is then performed on these low-dimensional data points by minimizing $E(\widehat{\X},\SSS)=\sum_{i=1}^{n}\|\widehat{\x}_i-\mu(\widehat{\x}_i)\|_2^2$, which partitions the data points in the reduced space into $m$ clusters. 
After finding the partition in the reduced space, the same partition is used on the original data points and the cluster centroids in the original space are calculated using Equation \ref{eq:center} at computational cost $\order(np)$. 

In this paper, we introduce a random-projection-type Clustered Nystr\"om method, called ``Randomized Clustered Nystr\"om,'' for generating landmark points. In the first step of our method, a random sign matrix $\HH\in\R^{p'\times p}$ whose entries are independent realizations of $\{\pm 1/\sqrt{p'}\}$ Bernoulli random variables is constructed:
\begin{eqnarray}
H_{ij}=\begin{cases}
+1/\sqrt{p'} & \text{with probability }1/2,\\
-1/\sqrt{p'} & \text{with probability }1/2.
\end{cases}\label{eq:random-matrix}
\end{eqnarray}
Next, the product $\HH\X$ is computed to find the low-dimensional sketches $\widehat{\x}_1,\ldots,\widehat{\x}_n\in\R^{p'}$. The standard implementation of matrix multiplication costs $\order(p'pn)$.  The matrix multiplication can also be performed in parallel which leads to noticeable accelerations in practice~\citep{Martinson_SVD}.  
Moreover, it is possible to use the mailman algorithm~\citep{liberty2009mailman} 
which takes advantage of the binary-nature of $\HH$  
to further speed up the matrix multiplication. In our experiments, we use  Intel MKL BLAS version 11.2.3 
which is bundled with MATLAB, which we found to be sufficiently optimized and does not form a bottleneck in the computational cost.

In the second step, the K-means clustering algorithm is performed on the projected low-dimensional data $\widehat{\X}=[\widehat{\x}_1,\ldots,\widehat{\x}_n]$ to partition the data set:
\begin{eqnarray*}
\widehat{\SSS}^{opt}\approx \argmin_{\SSS} E(\widehat{\X},\SSS), 
\end{eqnarray*}
where $\widehat{\SSS}^{opt}=\{\widehat{\SSS}_1^{opt},\ldots,\widehat{\SSS}_m^{opt}\}$ is the resulting $m$-partition. 
We cannot guarantee that K-means returns the globally optimal partition as the problem is NP-hard~\citep{DasguptaKmeans} but seeding using K-means++~\citep{kmeans_plusplus} guarantees a partition with expected objective within a $\log(m)$ factor of the optimal one,
and other variants of K-means, under mild assumptions~\citep{ostrovsky2012effectiveness}, can
either efficiently guarantee a solution within a constant factor of optimal, or 
guarantee solutions arbitrarily close to optimal, so-called polynomial-time approximation schemes (PTAS).
Lastly, the landmark points are generated by computing the sample mean of data points:
\begin{eqnarray}
\z_j=\big(1/|\widehat{\SSS}_j^{opt}|\big)\sum_{\x_i\in\widehat{\SSS}_j^{opt}}\x_i,\;\;j=1,\ldots,m.\label{eq:z-reduced-space}
\end{eqnarray}

The proposed ``Randomized Clustered Nystr\"om'' method is summarized in Algorithm \ref{alg:RandomizedClustered}. In our method, the ``compression factor'' $\gamma$ is defined as the ratio of parameter $p'$ to the ambient dimension $p$, i.e., $\gamma:=p'/p<1$.
Regarding the memory complexity, our method requires only two passes on the data set $\X$, the first to compute the low-dimensional sketches (step~\ref{alg:pass1}), and the second for the  sample mean (step~\ref{alg:pass2}). In fact, our Randomized Clustered Nystr\"om only stores the low-dimensional sketches which takes $\order(p'n)$ space, whereas the Clustered Nystr\"om method has memory complexity of $\order(pn)$,
meaning our method reduces the memory complexity by a factor of $1/\gamma$.
In terms of time complexity, the computation cost of K-means on the dimension-reduced data in our method is $\order(p'nm)$ per iteration compared to the cost $\order(pnm)$ in the Clustered Nystr\"om method, so the speedup is up to $1/\gamma$ (the exact amount depends on the number of iterations, since we must amortize the cost of the one-time  matrix multiply $\HH\X$).

\begin{algorithm}[t]
	\caption{Randomized Clustered Nystr\"om}
	\label{alg:RandomizedClustered}
	\textbf{Input:} data set $\X$, number of landmark points $m$, compression factor $\gamma<1$
	
	\textbf{Output:} landmark points $\Z$
	\begin{algorithmic}[1]
		\STATE Set $p'=\gamma p$ (round to nearest integer)
		\STATE  Generate a random sign matrix $\HH\in\R^{p'\times p}$ as in Equation \ref{eq:random-matrix}
		\STATE \label{alg:pass1}Compute $\widehat{\X}=\HH\X\in\R^{p'\times n}$
		\STATE \label{alg:k-means-low}Perform K-means clustering on $\widehat{\X}=[\widehat{\x}_1,\ldots,\widehat{\x}_n]$ to get $\widehat{\SSS}^{opt}$
		\STATE \label{alg:pass2}Compute the sample mean  in the original space as in Equation \ref{eq:z-reduced-space}
		\STATE $\Z=[\z_1,\ldots,\z_m]\in\R^{p\times m}$
	\end{algorithmic}
\end{algorithm}

Thus, our proposed method for generating landmark points provides a tunable parameter $\gamma$ to reduce the memory and computation cost of the Clustered Nystr\"om method. Next, we study and characterize the ``tradeoffs'' between accuracy of low-rank approximations and the memory/computation savings in our proposed method. In particular, the following theorem presents an error bound on the Nystr\"om low-rank approximation for a set of landmark points generated via our Randomized Clustered Nystr\"om method (Algorithm ~\ref{alg:RandomizedClustered}).

\begin{theorem}[Randomized Clustered Nystr\"om Method]
	\label{thm:randomized-clustered-nys}
	Assume that the kernel function $\kappa$ satisfies 
	Equation \ref{eq:kernel-cluster}.
	Consider the data set $\X=[\x_1,\ldots,\x_n]\in\R^{p\times n}$ and the kernel matrix $\K\in\R^{n\times n}$ with entries $K_{ij}=\kappa(\x_i,\x_j)$. The optimal partitioning of $\X$ into $m$ clusters is denoted by $\SSS^{opt}$:
	\begin{eqnarray}
	\SSS^{opt}=\argmin_{\SSS}E\left(\X,\SSS\right),\;\text{where}\;E\left(\X,\SSS\right)=\sum_{i=1}^{n}\|\x_i-\mu(\x_i)\|_2^2.
	\end{eqnarray}
	Let us generate a random sign matrix $\HH\in\R^{p'\times p}$ as in Equation \ref{eq:random-matrix} with $p'=\order(m/\varepsilon^2)$ for some parameter  $\varepsilon\in(0,1/3)$. The Randomized Clustered Nystr\"om method computes the product $\widehat{\X}=\HH\X$ to generate a set of $m$ landmark points $\Z=[\z_1,\ldots,\z_m]$ by partitioning of $\widehat{\X}\in\R^{p'\times n}$ into $m$ clusters.
	We assume that the partitioning $\widehat{\SSS}^{opt}$ of $\widehat{\X}$ leads to $E(\widehat{\X},\widehat{\SSS}^{opt})$  
	within a constant factor of the optimal value, cf.\ \citep{kmeans_plusplus,ostrovsky2012effectiveness}. 
	Given matrices $\CC\in\R^{n\times m}$ and $\WW\in\R^{m\times m}$ whose entries are $C_{ij}=\kappa(\x_i,\z_j)$ and $W_{ij}=\kappa(\z_i,\z_j)$, the Nystr\"om approximation error is bounded with probability at least $0.96$ over the randomness of $\HH$:
	\begin{eqnarray}
	\mathcal{E}\defeq\|\K-\CC\WW^\dagger\CC^T\|_F\leq\eta_1\sqrt{(2+\varepsilon)E\left(\X,\SSS^{opt}\right)}+ \eta_2(2+\varepsilon)E\left(\X,\SSS^{opt}\right),
	\end{eqnarray}
	where $\eta_1$ and $\eta_2$ are two positive constants.
\end{theorem}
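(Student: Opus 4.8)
\emph{Proof proposal.} The strategy is to reduce the statement to Proposition~\ref{thm:clusteredNys} by identifying the quantization error produced by Algorithm~\ref{alg:RandomizedClustered} with the K-means cost of the partition $\widehat{\SSS}^{opt}$ measured on the \emph{original} data, and then to show that this cost is at most $(2+\varepsilon)$ times the optimal K-means cost on $\X$, using a dimensionality-reduction guarantee for K-means under the random sign map $\HH$. \textbf{Step 1 (reduction to the cost of $\widehat{\SSS}^{opt}$ on $\X$).} By construction (Equation~\ref{eq:z-reduced-space}), each landmark $\z_j$ is the sample mean, computed in the original space, of the points assigned to cluster $\widehat{\SSS}_j^{opt}$. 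Hence if $c(i)$ is the index of the cluster of $\widehat{\SSS}^{opt}$ containing $\x_i$, then $\min_j\|\x_i-\z_j\|_2\le\|\x_i-\z_{c(i)}\|_2$, and summing squares and grouping by cluster gives
\[
\sum_{i=1}^n\min_{j}\|\x_i-\z_j\|_2^2\ \le\ \sum_{j=1}^m\sum_{\x_i\in\widehat{\SSS}_j^{opt}}\|\x_i-\z_j\|_2^2\ =\ E(\X,\widehat{\SSS}^{opt}).
\]
Since $\kappa$ satisfies Equation~\ref{eq:kernel-cluster} and $t\mapsto\eta_1\sqrt{t}+\eta_2 t$ is nondecreasing for $\eta_1,\eta_2>0$, Proposition~\ref{thm:clusteredNys} applied with the landmark set $\Z$ yields $\mathcal{E}=\|\K-\CC\WW^\dagger\CC^T\|_F\le\eta_1\sqrt{E(\X,\widehat{\SSS}^{opt})}+\eta_2\,E(\X,\widehat{\SSS}^{opt})$, so it remains to bound $E(\X,\widehat{\SSS}^{opt})$ by $(2+\varepsilon)\,E(\X,\SSS^{opt})$ on an event of probability at least $0.96$.

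\textbf{Step 2 (dimensionality reduction for K-means).} Writing the K-means cost of any $m$-partition $\SSS$ as $E(\X,\SSS)=\|(\eye_{n\times n}-\PP_\SSS)\X^T\|_F^2$ for the rank-$\le m$ cluster-indicator projection $\PP_\SSS$, and similarly $E(\widehat{\X},\SSS)=\|(\eye_{n\times n}-\PP_\SSS)\X^T\HH^T\|_F^2$ for $\widehat{\X}=\HH\X$, I would invoke the now-standard analysis showing that a random sign matrix $\HH\in\R^{p'\times p}$ projecting to $p'=\order(m/\varepsilon^2)$ dimensions approximately preserves all these residuals simultaneously. Its three ingredients are: (a) $\HH$ acts as an $\varepsilon$-subspace embedding on the span of the top-$m$ left singular vectors of $\X$, a fixed subspace of dimension $\le m$ --- this is why $p'$ scales with $m$ rather than $\log n$; (b) $\HH$ preserves the Frobenius norm of the residual $\X-\X_{(m)}$ up to a $(1\pm\varepsilon)$ factor, by a Johnson--Lindenstrauss estimate using $\E\|\HH\mathbf{M}\|_F^2=\|\mathbf{M}\|_F^2$ and concentration; and (c) the cross terms between $\HH\X_{(m)}$ and $\HH(\X-\X_{(m)})$ are negligible, by an approximate-matrix-multiplication bound (their column spaces in $\R^p$ are orthogonal before projection). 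Combining (a)--(c) gives a bound uniform over all $m$-partitions, which applied to $\SSS^{opt}$ and to $\widehat{\SSS}^{opt}$, together with the hypothesis that the K-means solver run on $\widehat{\X}$ returns a (near-)optimal partition, yields $E(\X,\widehat{\SSS}^{opt})\le(2+\varepsilon)\,E(\X,\SSS^{opt})$; the failure probability is a constant that can be driven below $0.04$ through the hidden constant in $p'=\order(m/\varepsilon^2)$. The ``$+1$'' in the constant $2+\varepsilon$ is intrinsic --- the rank-$m$ structure optimal for $\widehat{\X}$ need not be optimal for $\X$, so one pays an extra term of order $E(\X,\SSS^{opt})$ on top of the transferred optimum --- and the restriction $\varepsilon<1/3$ is what lets the various $(1\pm\varepsilon)^{\pm1}$ factors collapse into the clean $2+\varepsilon$.

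\textbf{Step 3 and the main obstacle.} Substituting the bound of Step 2 into Step 1 and pulling $\sqrt{2+\varepsilon}$ out of the square root gives exactly the claimed inequality, on the same event of probability at least $0.96$. Steps 1 and 3 are routine; the crux is Step 2, specifically that $p'=\order(m/\varepsilon^2)$ --- rather than the $\order(\varepsilon^{-2}\log n)$ a naive Johnson--Lindenstrauss union bound over the $n$ vectors $\x_i-\mu(\x_i)$ would require --- suffices to control the K-means objective over all $m$-partitions at once; this needs the subspace-embedding/approximate-matrix-multiplication machinery above, with constants tracked carefully to land on $2+\varepsilon$ and probability $0.96$. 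I would also note that a merely $\gamma$-approximate K-means solver on $\widehat{\X}$ replaces $2+\varepsilon$ by a factor of the form $1+(1+\varepsilon)\gamma$; the stated bound corresponds to the near-optimal regime $\gamma\to1$.
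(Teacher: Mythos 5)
Your proposal is correct and follows essentially the same route as the paper: reduce to Proposition~\ref{thm:clusteredNys} via the quantization error of the partition $\widehat{\SSS}^{opt}$ measured on the original data, then bound $E(\X,\widehat{\SSS}^{opt})\le(2+\varepsilon)E(\X,\SSS^{opt})$ with probability at least $0.96$ using the random-projection guarantee for K-means with $p'=\order(m/\varepsilon^2)$. The only difference is that the paper simply cites the literature for that $(2+\varepsilon)$ bound rather than proving it, whereas you sketch its internal machinery (subspace embedding, residual preservation, cross-term control); your Step 1 is in fact slightly more careful than the paper's, which does not explicitly distinguish assignment-to-nearest-landmark from assignment-per-partition.
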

\begin{proof}
	Based on Proposition~\ref{thm:clusteredNys}, we get the following approximation error for the Randomized Clustered Nystr\"om method:
	\begin{eqnarray}
	\mathcal{E}\leq\eta_1\sqrt{E(\X,\widehat{\SSS}^{opt})}+\eta_2E(\X,\widehat{\SSS}^{opt}),\label{eq:Random-proof-step1}
	\end{eqnarray}
	where $\widehat{\SSS}^{opt}$ is the optimal partitioning of the reduced data $\widehat{\X}$ and $E(\X,\widehat{\SSS}^{opt})$ represents the total quantization error when $\widehat{\SSS}^{opt}$ is used 
	to cluster the high-dimensional data $\X$. 
	We assume the partitioning in the reduced data set is within a constant factor of optimal, so this constant is absorbed into $\eta_1$ and $\eta_2$.
	In~\citep{Randomized_Dim_K_means}, it is shown that by choosing $p'=\order(m/\varepsilon^2)$ dimensions for the random projection matrix $\HH$, the following inequality holds with probability at least $0.96$ over the randomness of $\HH$:  
	\begin{eqnarray}
	E(\X,\widehat{\SSS}^{opt})\leq(2+\varepsilon)E(\X,\SSS^{opt}).
	\end{eqnarray}

	Thus, employing the above inequality in Equation \ref{eq:Random-proof-step1} completes the proof. 
\end{proof}

The error bound in Theorem~\ref{thm:randomized-clustered-nys} reveals important insights about the performance of our proposed method. Although our Randomized Clustered Nystr\"om generates landmark points based on the random projections of data, we can relate the approximation error to the total quantization error of partitioning the \emph{original} data points. In fact, our results show that the random projections of original data points into $\R^{p'}$ with $p'=\order(m)$ yields an approximation which is close to the one obtained by the Clustered Nystr\"om method (Proposition~\ref{thm:clusteredNys}). Interestingly, the dimension of reduced data $p'$ is independent of the ambient dimension $p$ and depends only on $m$ (the number of landmark points) and $\varepsilon$ (the distortion factor). As a result, for high-dimensional data sets with large $p$, the dimension of reduced data $p'$ can be fixed based on the desired number of landmark points and accuracy.
\section{Numerical Experiments}\label{sec:experiments}
In this section, we present experimental results comparing our Randomized Clustered Nystr\"om with a few other sampling methods such as the Clustered Nystr\"om method and uniform sampling. Our proposed approach is implemented in MATLAB with the  C/mex implementation for computing the sample mean in step~\ref{alg:pass2} of Algorithm \ref{alg:RandomizedClustered}. 
To perform the K-means clustering algorithm, we use MATLAB's built-in function \texttt{kmeans} and the maximum number of iterations is set to $10$. This function utilizes the K-means++ algorithm~\citep{kmeans_plusplus} for cluster center initialization which improves the performance over random initializations. Note that the K-means++ algorithm needs $m$ passes over the data to choose $m$ initial cluster centers. Since our Randomized Clustered Nystr\"om performs K-means clustering on the low-dimensional random projections  (step~\ref{alg:k-means-low} of Algorithm \ref{alg:RandomizedClustered}), the overall number of passes on the data set will remain two.

The performance of our proposed Randomized Clustered Nystr\"om method is demonstrated on two different tasks: low-rank approximation of kernel matrices (Section~\ref{sec:exp-quality}) and kernel ridge regression (Section~\ref{sec:exp-ridge}). All the experiments are conducted on a desktop computer with two Intel Xeon EF-2650 v3 CPUs at 2.4--3.2 GHz and 8 cores.

\subsection{Kernel Approximation Quality}\label{sec:exp-quality}
In this section, we study the accuracy and efficiency of our Randomized Clustered Nystr\"om on the low-rank approximation of kernel matrices in the form of $\K\approx\LL\LL^T$, where $\LL\in\R^{n\times r}$ for target rank $r$. Experiments are conducted on four data sets from the LIBSVM archive~\citep{CC01a}, listed in Table~\ref{table:data}. In all experiments, similar to~\citep{zhang2010clusteredNys}, the Gaussian kernel $\kappa\left(\x_i,\x_j\right)=\exp\left(-\|\x_i-\x_j\|_2^2/c\right)$ is used with the parameter $c$ chosen as the averaged squared distances between all the data points and sample mean.
The approximation accuracy is measured by the normalized kernel approximation error in terms of the Frobenius norm: $\|\K-\LL\LL^T\|_F/\|\K\|_F$.  We report the mean and standard deviation of approximation error over $50$ trials because all sampling methods in the Nystr\"om method involve some randomness. 
\begin{table}[ht]
	\centering
	\begin{tabular}{SSS} \toprule
		{data set} & {$p$} & {$n$} \\ \midrule
		\dataset{dna} & {180} & {$2,\!000$}\\
		\dataset{protein} & {357} & {$17,\!766$}\\
		\dataset{mnist} & {784} & {$60,\!000$}\\
		\dataset{epsilon} & {$2,\!000$} & {$60,\!000$}\\ \bottomrule
	\end{tabular}
	\caption{Summary of data sets used in Section~\ref{sec:exp-quality}.}
		\label{table:data}
\end{table}

For two data sets \dataset{dna} and \dataset{protein} with the total number of data points less than $20,\!000$, the entire kernel matrix $\K\in\R^{n\times n}$ can be stored and manipulated in the main memory of the computer. Thus,  
the accuracy of our proposed method for various values of the compression factor $\gamma$ is compared with:
\begin{enumerate}[leftmargin=*]
	\item The SVD, where the best rank-$r$ approximation of the kernel matrix $\K$ is obtained via the exact eigenvalue decomposition or SVD; 
	\item Uniform sampling~\citep{Nystrom2001}, where $m$ landmark points are selected uniformly at random without replacement from $n$ data points;
	\item Column-norm sampling~\citep{drineas2006fast}, where $m$ columns of the kernel matrix $\K$ are sampled with weights proportional to $\ell_2$ norm of columns;
	\item Clustered Nystr\"om~\citep{zhang2010clusteredNys}, where  $m$ landmark points are generated using centroids resulting from K-means clustering on the original data set.
\end{enumerate} 
Also, for all choices of landmark points (Randomized Clustered Nystr\"om and options 2, 3, 4 above) with $m$ greater than the target rank $r$, our proposed ``Nystr\"om via QR decomposition'' (Algorithm \ref{alg:NysQR}) is used to restrict the resulting Nystr\"om approximation to have rank at most $r$, cf.~Section~\ref{sec:nys-eig}.

For the two large-scale data sets \dataset{mnist} and \dataset{epsilon}, it takes approximately $29$ GB space to store the kernel matrix. Thus, the storage and manipulation of $\K$ in the main memory becomes too costly and our proposed method is compared with just the two sampling methods which do not need access to the entire kernel matrix, namely uniform sampling and Clustered Nystr\"om. 
\subsubsection{Data Sets: \dataset{dna} and \dataset{protein}}
The first example demonstrates the effectiveness of various sampling methods on improving the accuracy of the Nystr\"om method by increasing the number of landmark points.
The mean and standard deviation of kernel approximation error are reported in Figure \ref{fig:fixed-rank-dna-protein} for varying number of landmark points $m$ with fixed target rank $r=3$. The results in Figure \ref{fig:fixed-rank-dna-1} and Figure \ref{fig:fixed-rank-protein-1} show that both Clustered Nystr\"om and our proposed method with $\gamma=0.02$ ($p'=4$ for \dataset{dna} and $p'=7$ for \dataset{protein}) improve the accuracy of the Nystr\"om method over uniform sampling and column-norm sampling. 
In fact, the accuracy of our proposed method and Clustered Nystr\"om reaches the accuracy of the best rank-$r$ approximation (SVD) for small values of $m$, e.g., $m=r$.  The uniform sampling method does not reach this accuracy even if it uses a large number of landmark points such as  $m=10r=30$.

To further investigate the tradeoffs between accuracy and efficiency of our proposed method, the mean and standard deviation of kernel approximation error for a few values of the compression factor $\gamma$ from $0.01$ to $0.2$ are presented in Figure \ref{fig:fixed-rank-dna-2} and Figure \ref{fig:fixed-rank-protein-2} (error bars for $\gamma=0.01,0.05,0.2$ have been omitted for clarity). As the compression factor $\gamma$ (equivalently, $p'$) increases, the approximation error decreases which is consistent with our theoretical results in Theorem~\ref{thm:randomized-clustered-nys}. However, small values of $\gamma$ in our method, such as $\gamma=0.01$, lead to accurate low-rank approximations with savings in memory and computation by a factor of $1/\gamma=100$. As a final note, it is observed that our method with $\gamma=0.2$ performs slightly better than the Clustered Nystr\"om method on the \dataset{protein} data set. This is mainly due to the fact that the performance of K-means clustering depends on the starting points. It is possible for K-means to reach a local minimum solution, where a better solution with the lower value of objective function exists. In practice, one can increase the number of random initializations and select the clustering with the lowest value of objective function. The difference in performance between Clustered Nystr\"om and our method with $\gamma=0.2$ disappears if we instead take the best result out of 20 independent initializations.

\begin{figure}[t]
	\begin{centering}
		\subfloat[Kernel approximation error, \dataset{dna}]{\begin{centering}
				\label{fig:fixed-rank-dna-1}
				\includegraphics[scale=0.4]{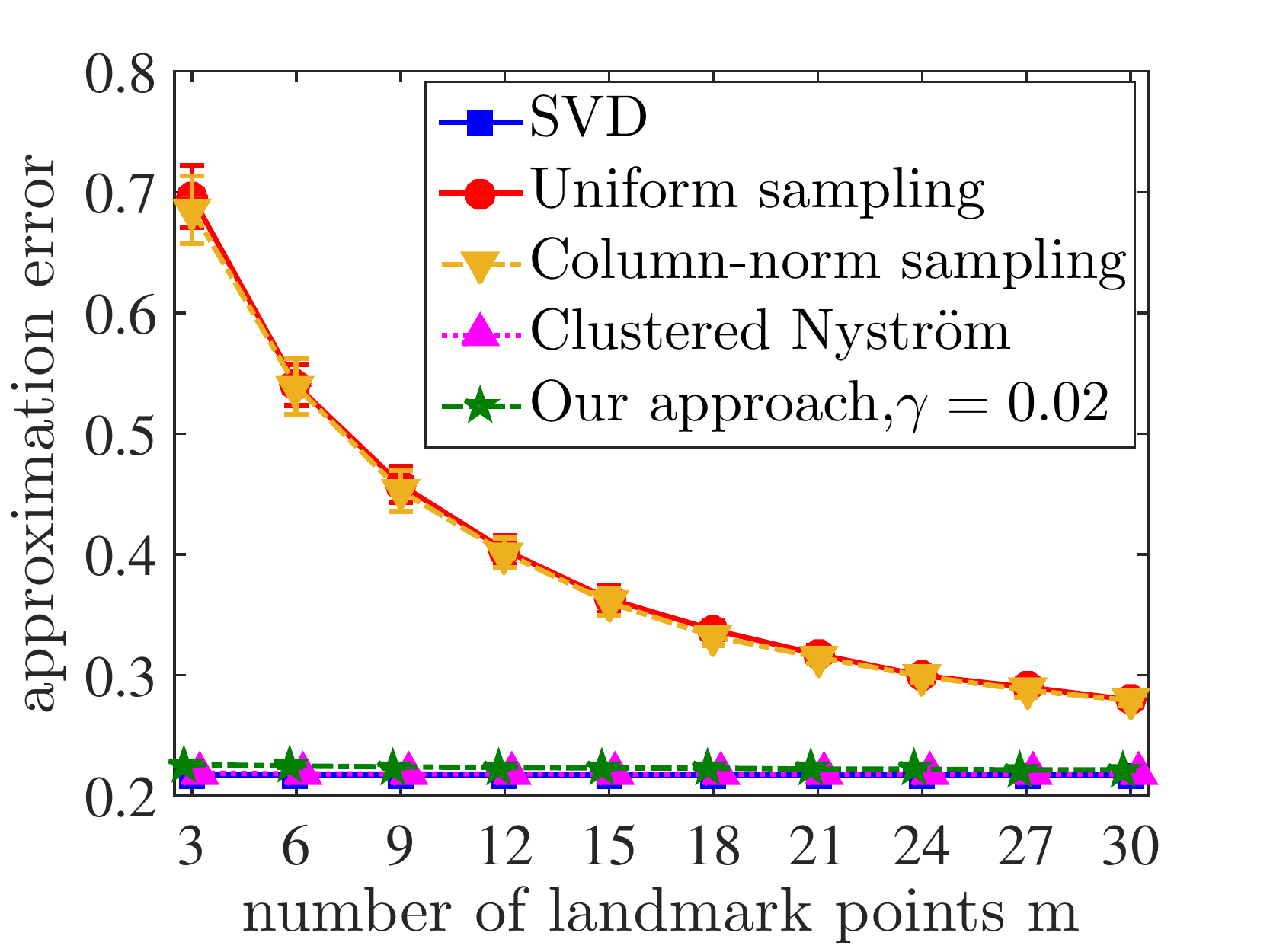}
				\par\end{centering}
		}\subfloat[\dataset{dna} (zoom)]{\begin{centering}
			\label{fig:fixed-rank-dna-2}
			\includegraphics[scale=0.40]{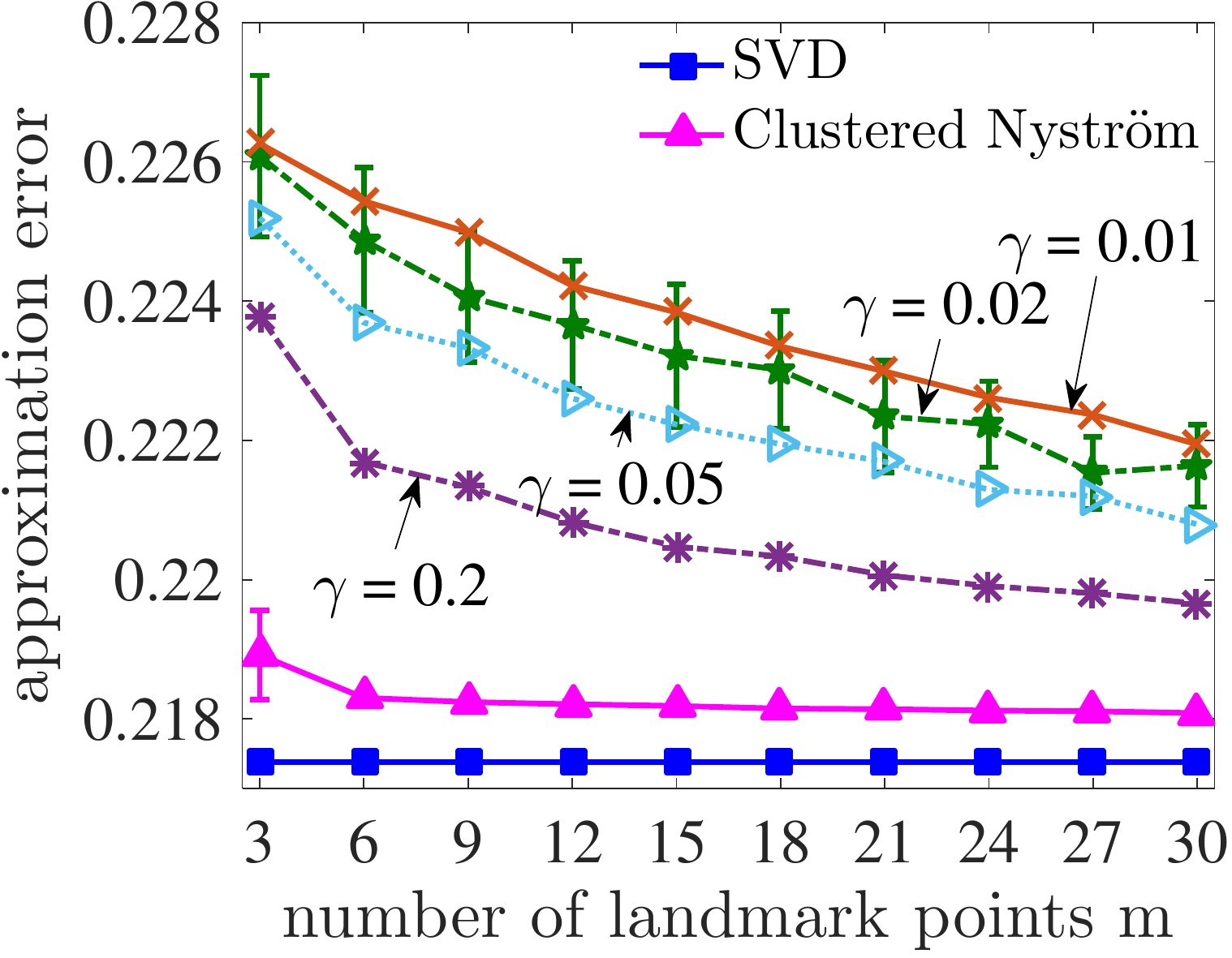}
			\par\end{centering}
	}
	\par\end{centering}

\begin{centering}
	\subfloat[Kernel approximation error, \dataset{protein}]{\begin{centering}
			\label{fig:fixed-rank-protein-1}
			\includegraphics[scale=0.40]{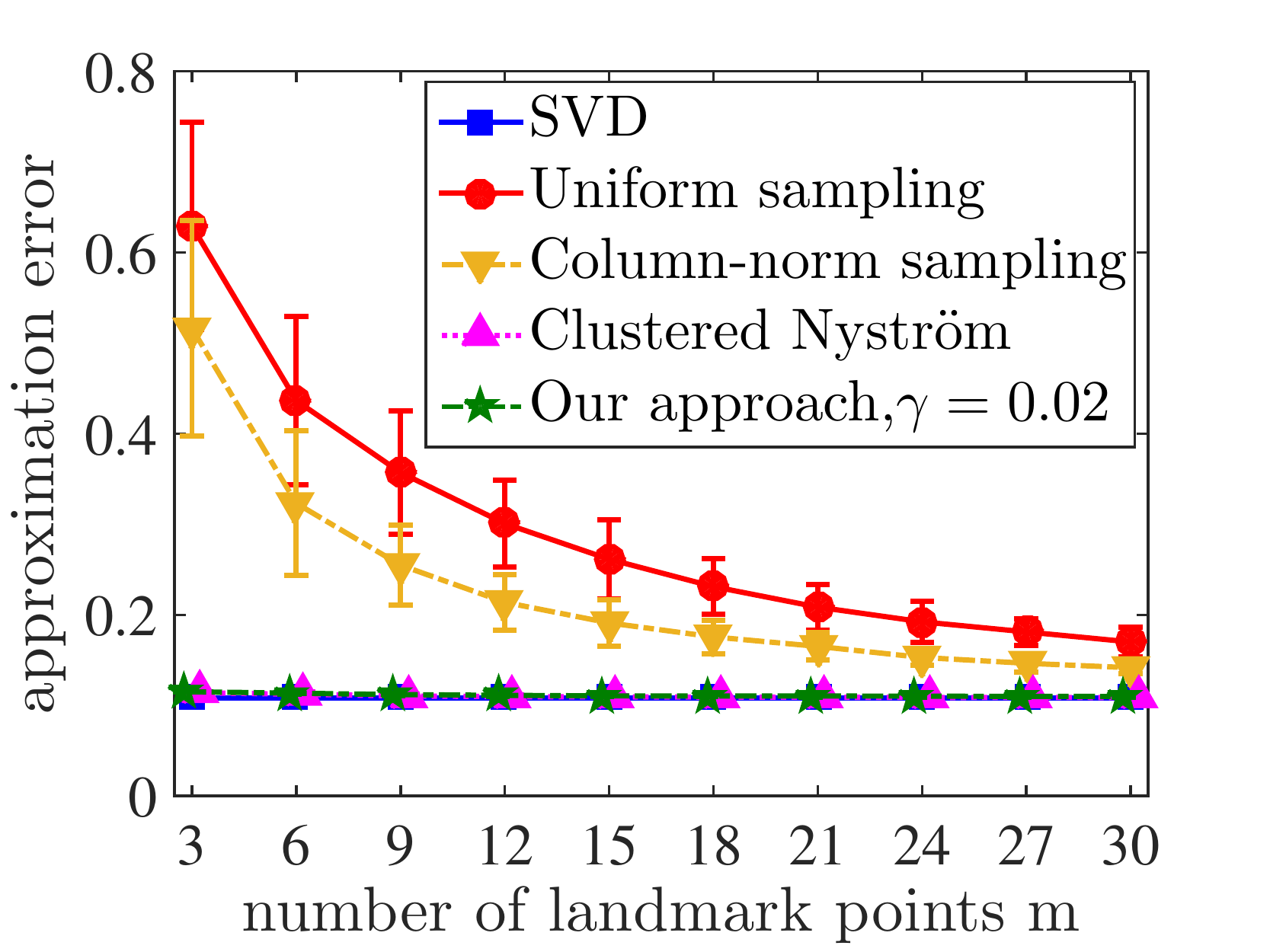}
			\par\end{centering}
	}\subfloat[\dataset{protein} (zoom)]{\begin{centering}
		\label{fig:fixed-rank-protein-2}
		\includegraphics[scale=0.40]{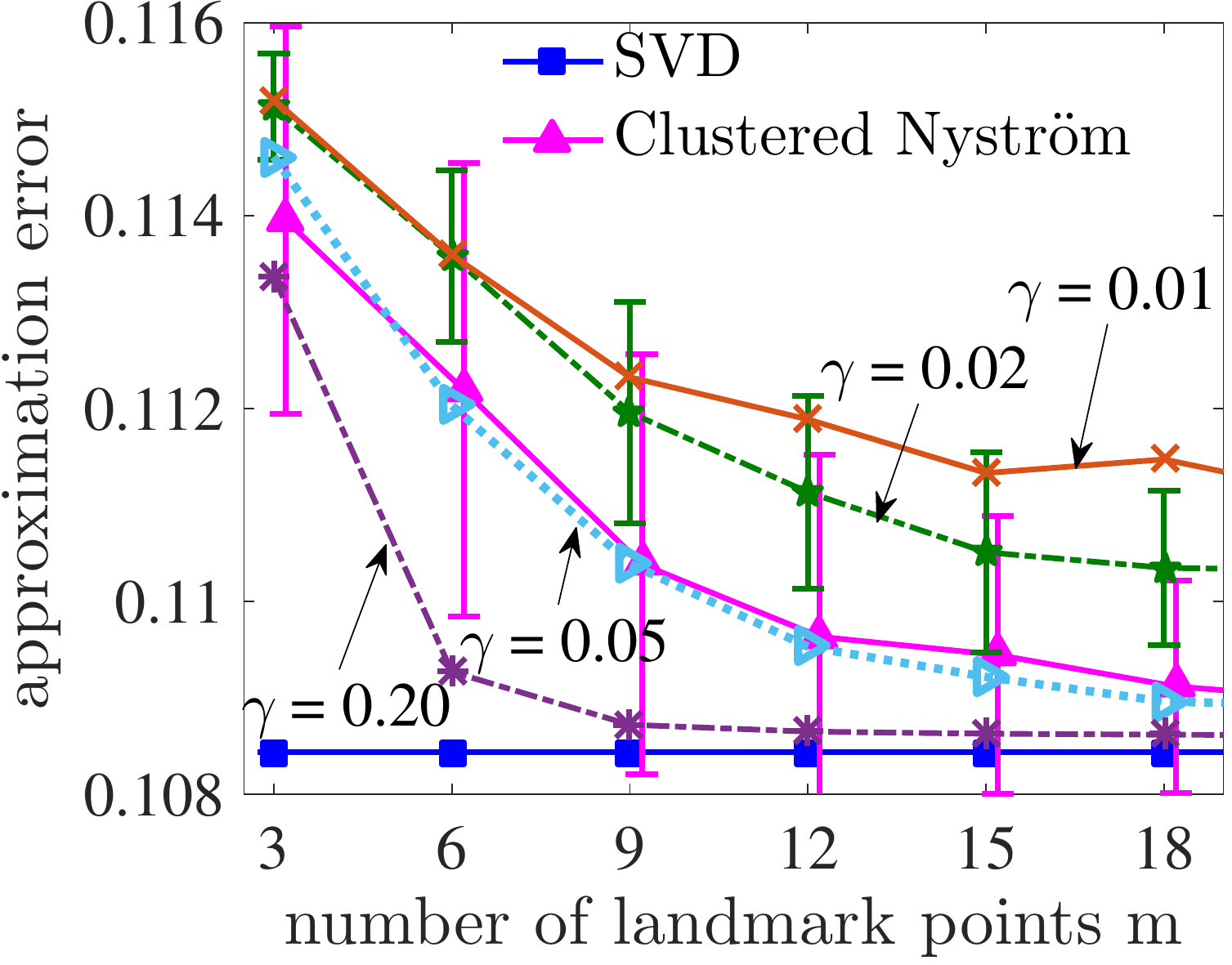}
		\par\end{centering}
}
\par\end{centering}
\caption{Kernel approximation error for varying number of landmark points $m$ with target rank $r=3$ on \dataset{dna} and \dataset{protein} data sets.}
\label{fig:fixed-rank-dna-protein}
\end{figure}

\begin{figure}[ht]
	\begin{centering}
		\subfloat[Kernel approximation error, \dataset{dna}]{\begin{centering}
				\label{fig:low-rank-dna-1}
				\includegraphics[scale=0.40]{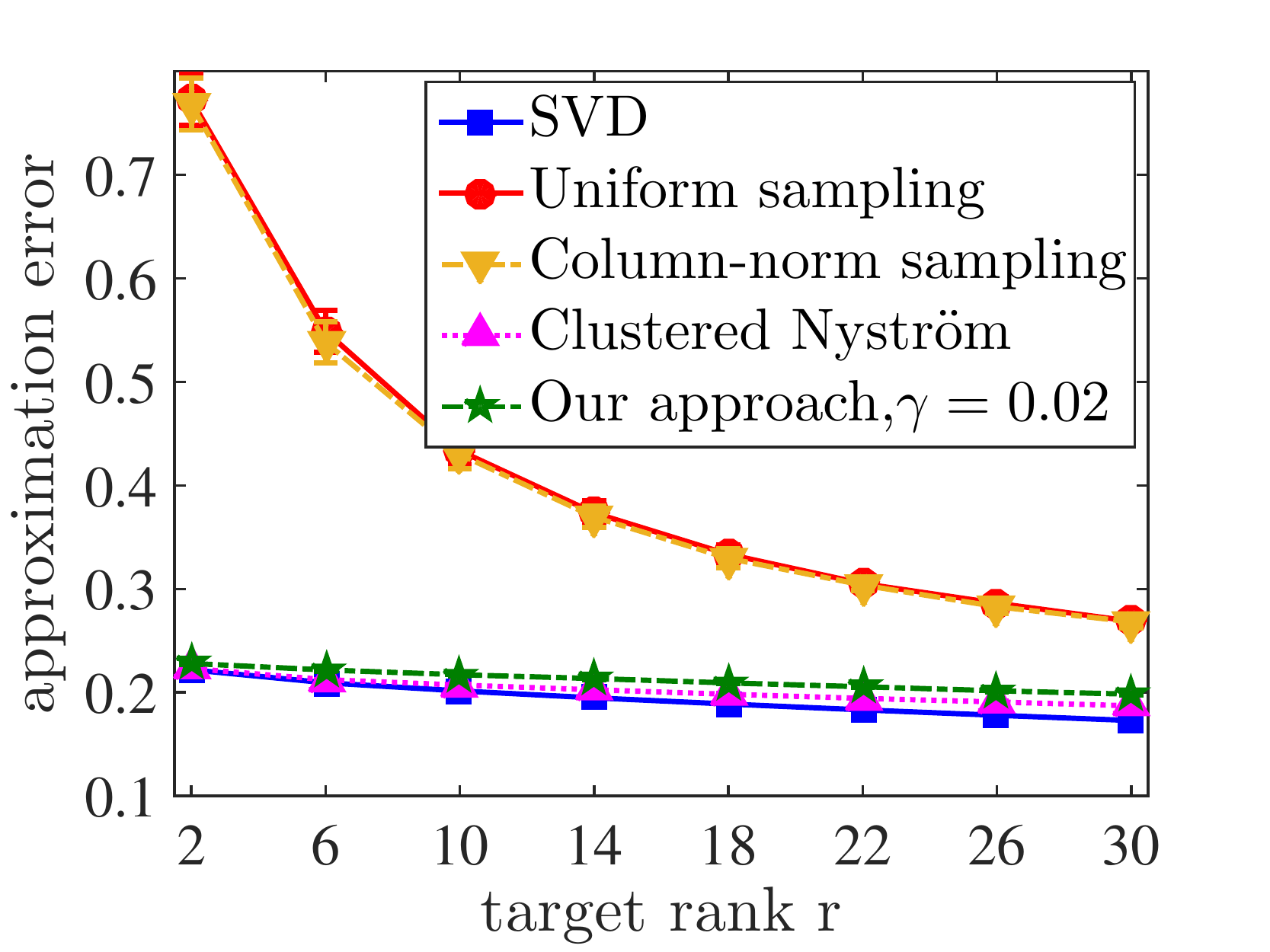}
				\par\end{centering}
		}\subfloat[\dataset{dna} (zoom)]{\begin{centering}
			\label{fig:low-rank-dna-2}
			\includegraphics[scale=0.40]{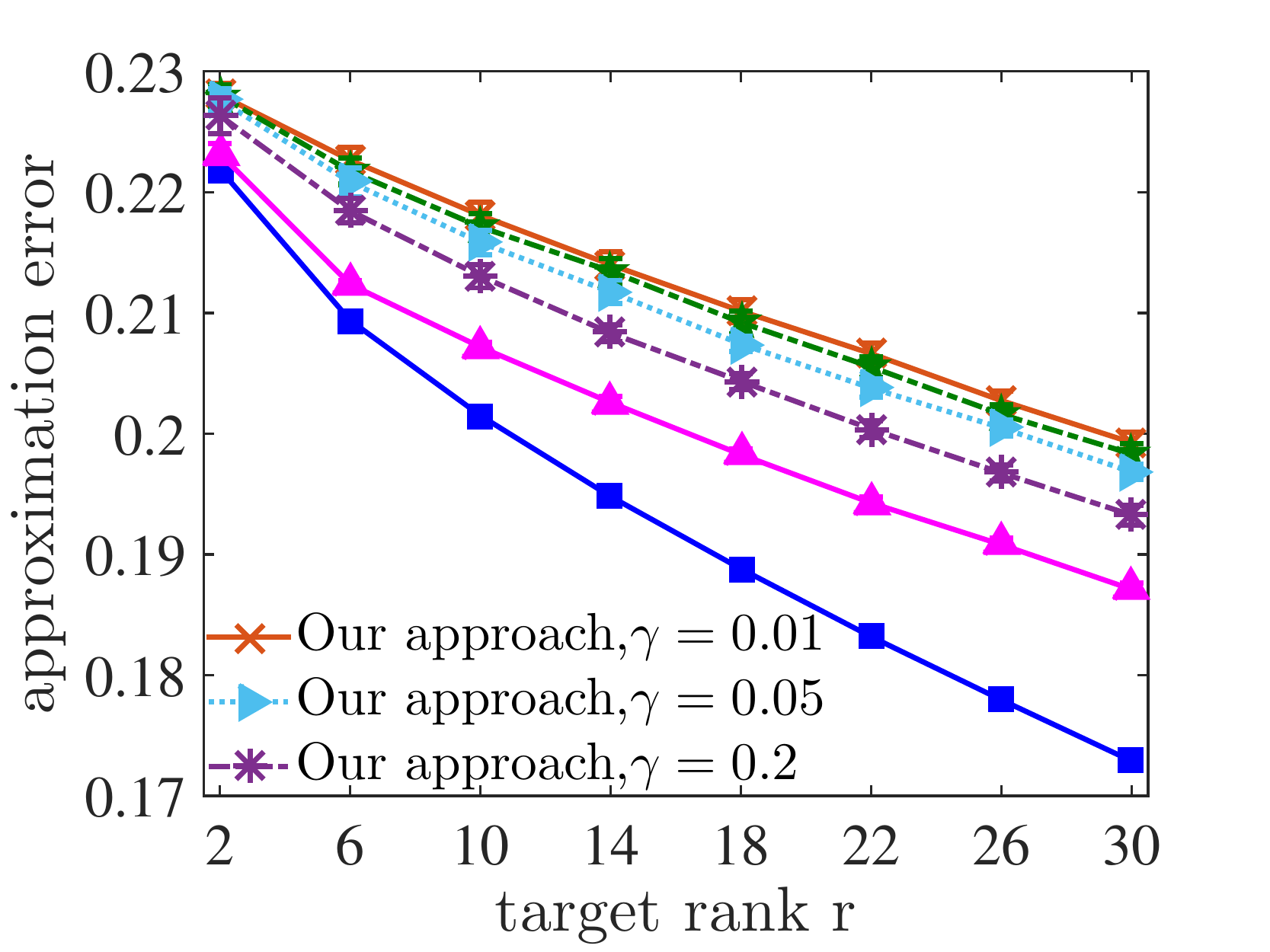}
			\par\end{centering}
	}
	\par\end{centering}

\begin{centering}
	\subfloat[Kernel approximation error, \dataset{protein}]{\begin{centering}
			\label{fig:low-rank-protein-1}
			\includegraphics[scale=0.40]{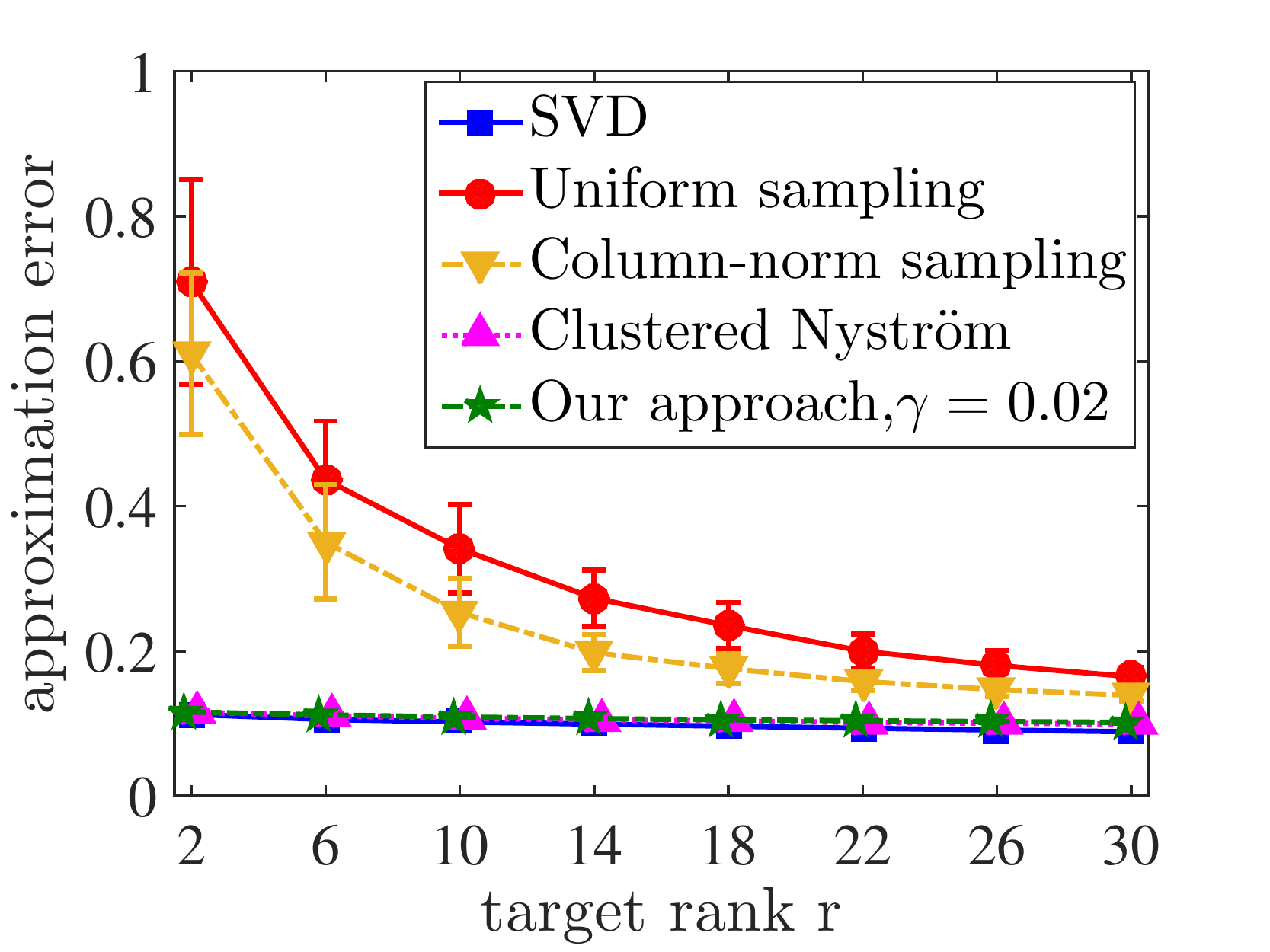}
			\par\end{centering}
	}\subfloat[\dataset{protein} (zoom)]{\begin{centering}
		\label{fig:low-rank-protein-2}
		\includegraphics[scale=0.40]{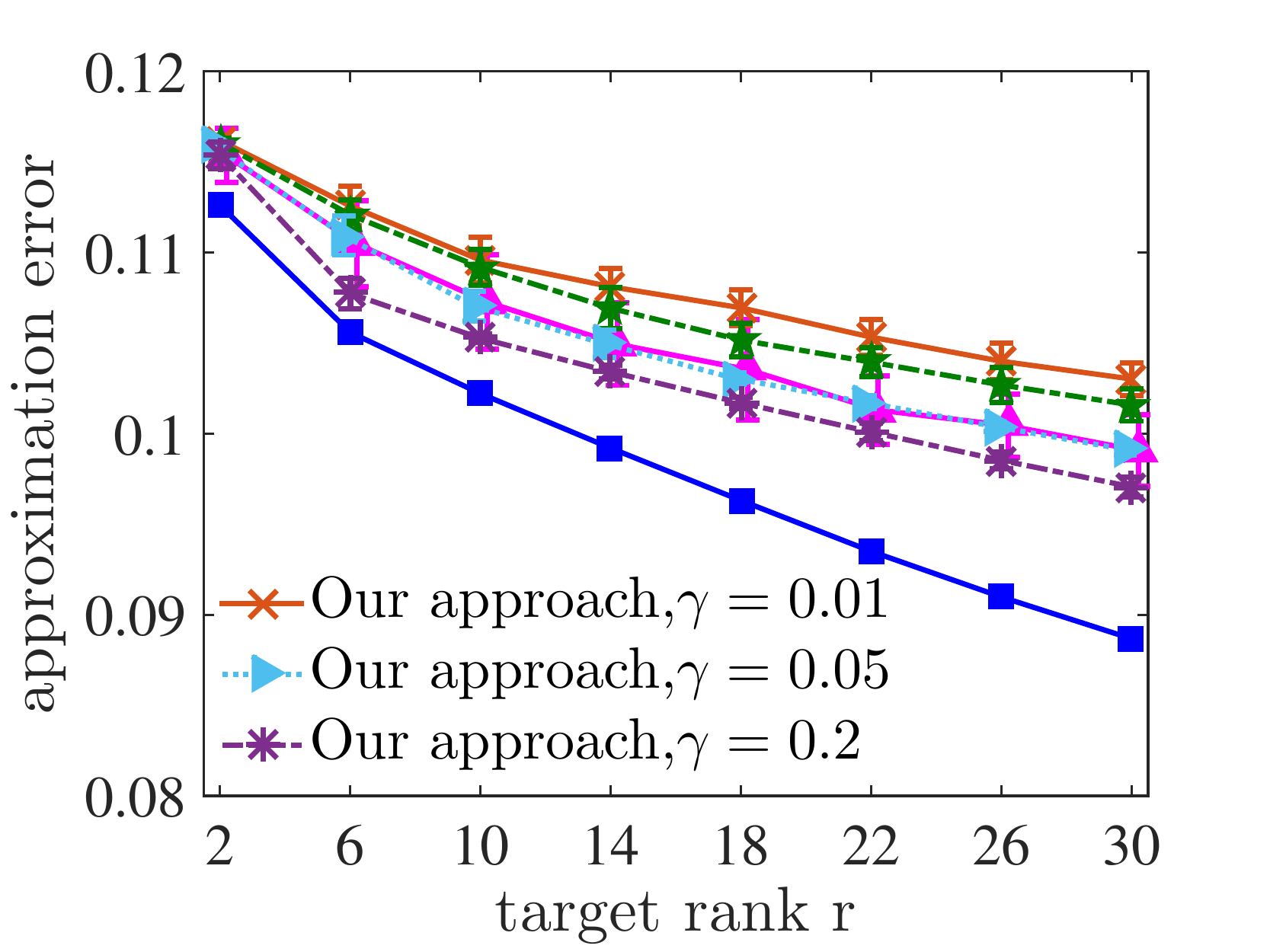}
		\par\end{centering}
}
\par\end{centering}
\caption{Kernel approximation error for various values of target rank $r$ and fixed $m=r$ on the \dataset{dna} and \dataset{protein} data sets.}
\label{fig:low-rank-dna-protein}
\end{figure}

The second example, shown in Figure \ref{fig:low-rank-dna-protein}, demonstrates the performance of our proposed method for various values of target rank $r$ from $2$ to $30$ and fixed $m=r$. Based on Figure \ref{fig:low-rank-dna-1} and Figure \ref{fig:low-rank-protein-1}, it is clear that both our method and Clustered Nystr\"om method provide improved approximation accuracy over uniform sampling and column-norm sampling techniques. In fact, we see that both our method and Clustered Nystr\"om method have roughly the same accuracy as the best rank-$r$ approximation (SVD) for all values of the target rank.

We also report the mean and standard deviation of kernel approximation error in Figure \ref{fig:low-rank-dna-2} and Figure \ref{fig:low-rank-protein-2} for varying values of compression factor $\gamma$ from $0.01$ to $0.2$. 
As the compression factor $\gamma$ (or the number of dimensions $p'$) increases, the kernel approximation error decreases as prescribed by our theoretical results in Theorem~\ref{thm:randomized-clustered-nys}. Also, we see that small values of compression factor $\gamma$ result in accurate low-rank approximations which lead to memory and computation savings by a factor of $1/\gamma$ in comparison with the Clustered Nystr\"om method.

\subsubsection{Data Sets: \dataset{mnist} and \dataset{epsilon}}
The accuracy and time complexity of our Randomized Clustered Nystr\"om method are demonstrated on two large-scale examples.  The parameter $\gamma$ is set to $0.01$ for the \dataset{mnist} data set ($p'=8$) and $0.005$ for the \dataset{epsilon} data set ($p'=10$). 

In the first example, the normalized kernel approximation error and computation time are reported in Figure \ref{fig:fixed-rank-largescale} for various values of $m$ and fixed target rank $r=3$.
In Figure \ref{fig:fixed-rank-mnist-error} and Figure \ref{fig:fixed-rank-epsilon-error}, we observe that our proposed method outperforms uniform sampling and has almost the same accuracy as the Clustered Nystr\"om method for all values of $m$. However, the runtime of our proposed method is reduced by an order of magnitude compared to the Clustered Nystr\"om method (Figure \ref{fig:fixed-rank-mnist-time} and Figure \ref{fig:fixed-rank-epsilon-time}). Thus, our proposed method provides significant memory and computation savings with little loss in accuracy compared to the Clustered Nystr\"om method. While our randomized method spends more time than uniform sampling to find a small set of informative landmark points, it provides improved approximation accuracy. Thus, these empirical results suggest a tradeoff between time and space requirements of our proposed method and uniform sampling. For example, our method with $m=r$ landmark points outperforms uniform sampling with $m=10r$ on the \dataset{epsilon} data set. 
\begin{figure}[t]
	\begin{centering}
		\subfloat[Kernel approximation error, \dataset{mnist}]{\begin{centering}
				\label{fig:fixed-rank-mnist-error}
				\includegraphics[scale=0.40]{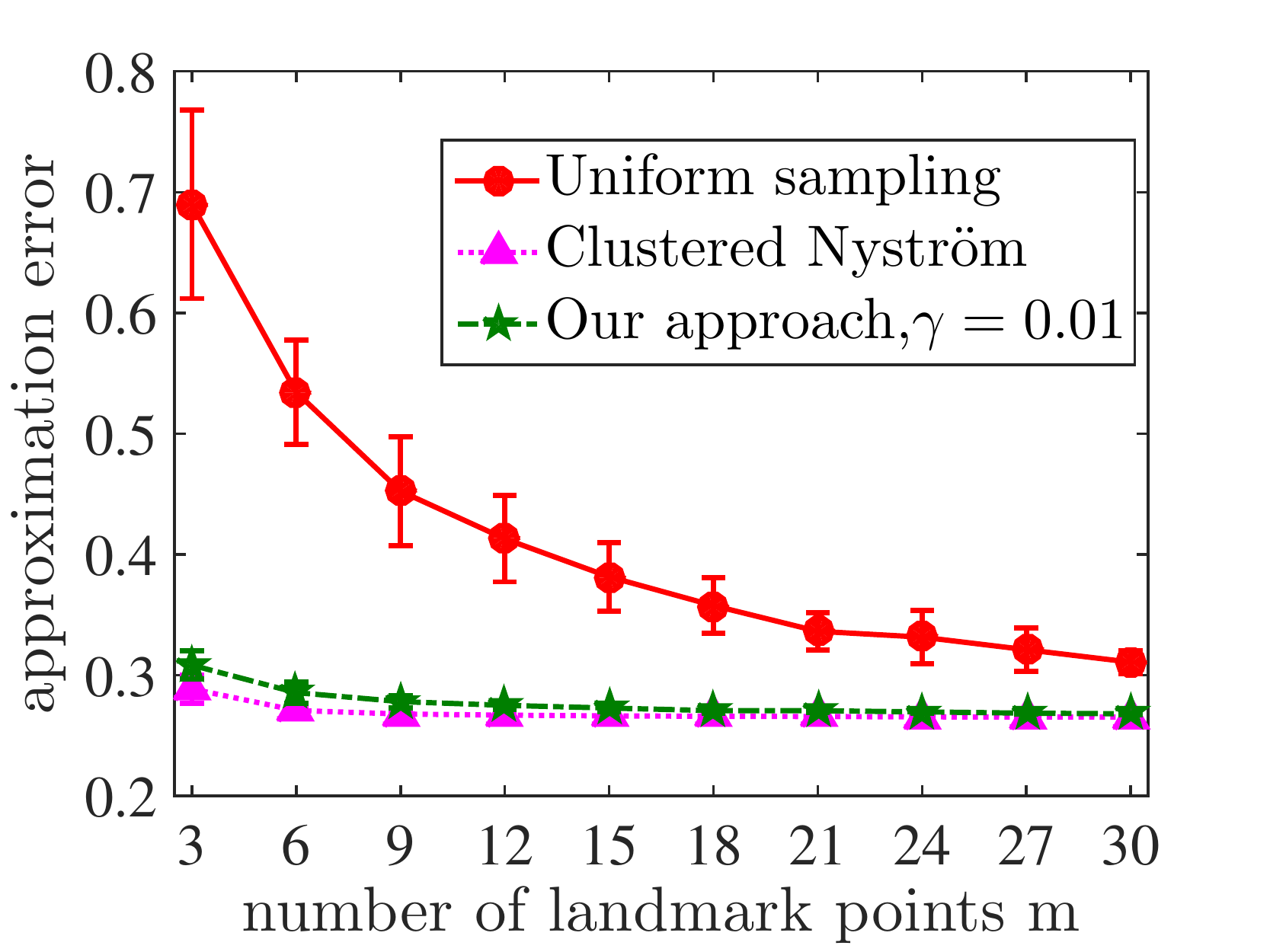}
				\par\end{centering}
		}\subfloat[Runtime, \dataset{mnist}]{\begin{centering}
			\label{fig:fixed-rank-mnist-time}
			\includegraphics[scale=0.40]{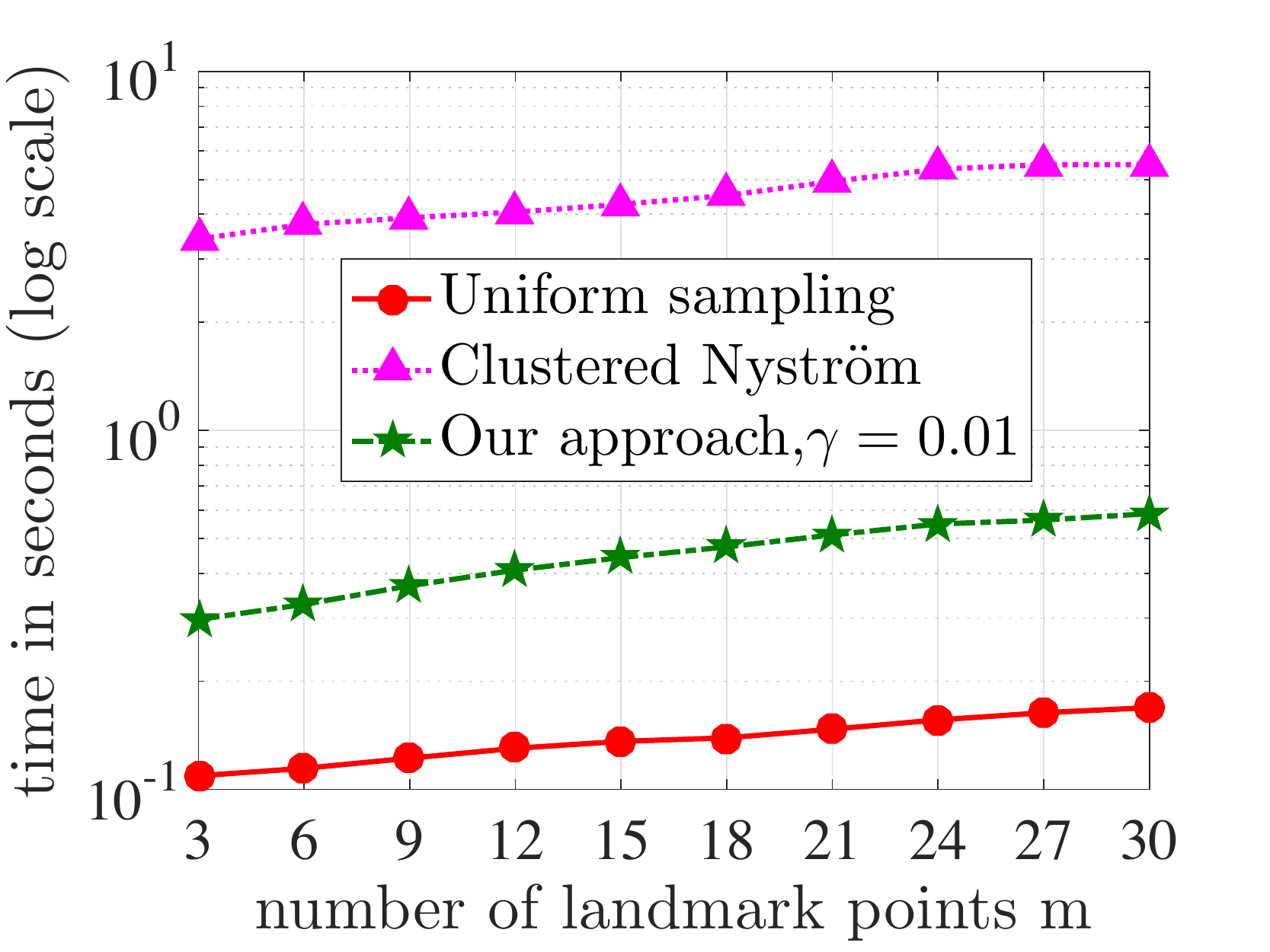}
			\par\end{centering}
		
	}
	\par\end{centering}
\begin{centering}
	\subfloat[Kernel approximation error, \dataset{epsilon}]{\begin{centering}
			\label{fig:fixed-rank-epsilon-error}
			\includegraphics[scale=0.40]{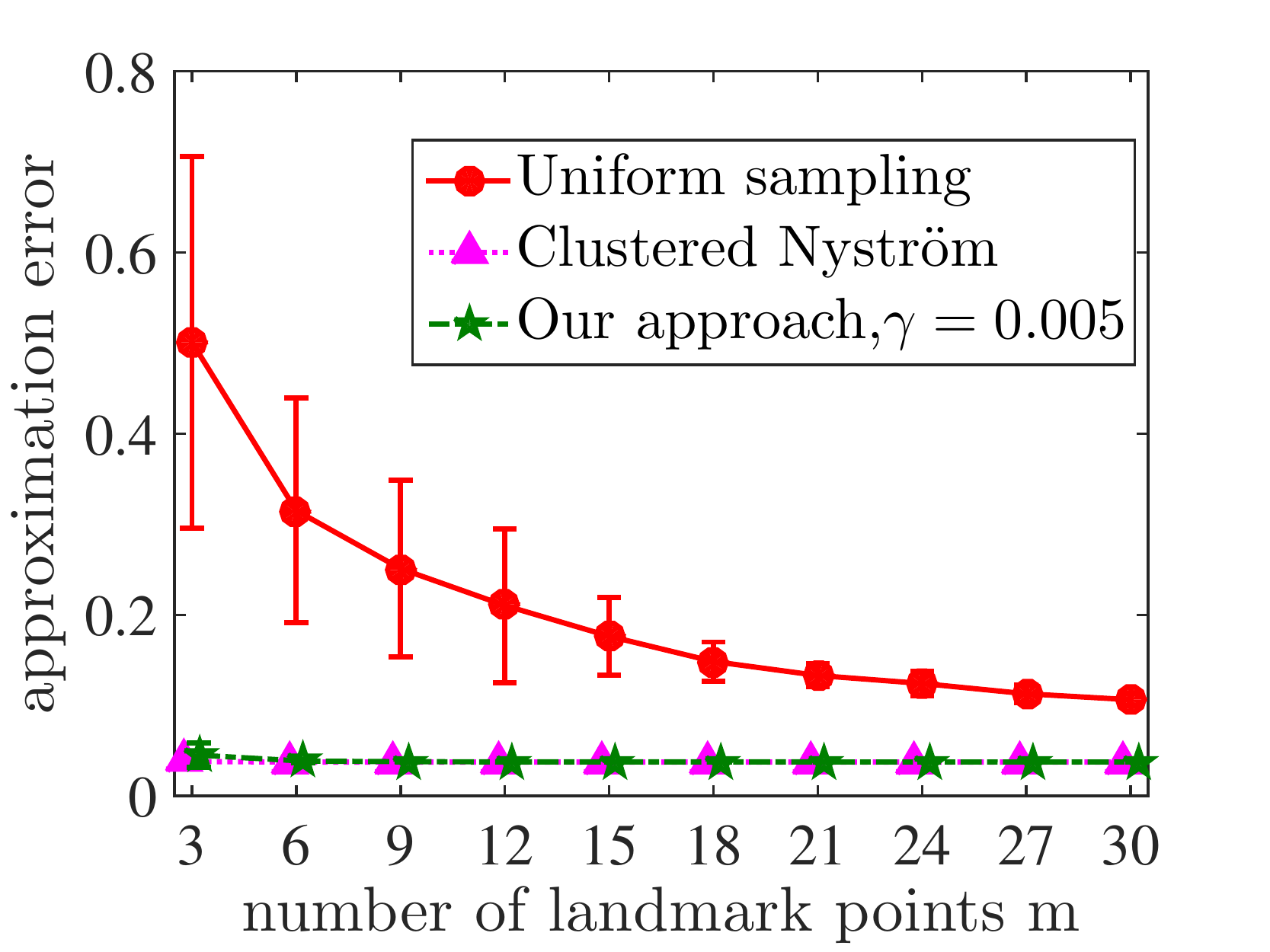}
			\par\end{centering}
	}\subfloat[Runtime, \dataset{epsilon}]{\begin{centering}
		\label{fig:fixed-rank-epsilon-time}
		\includegraphics[scale=0.40]{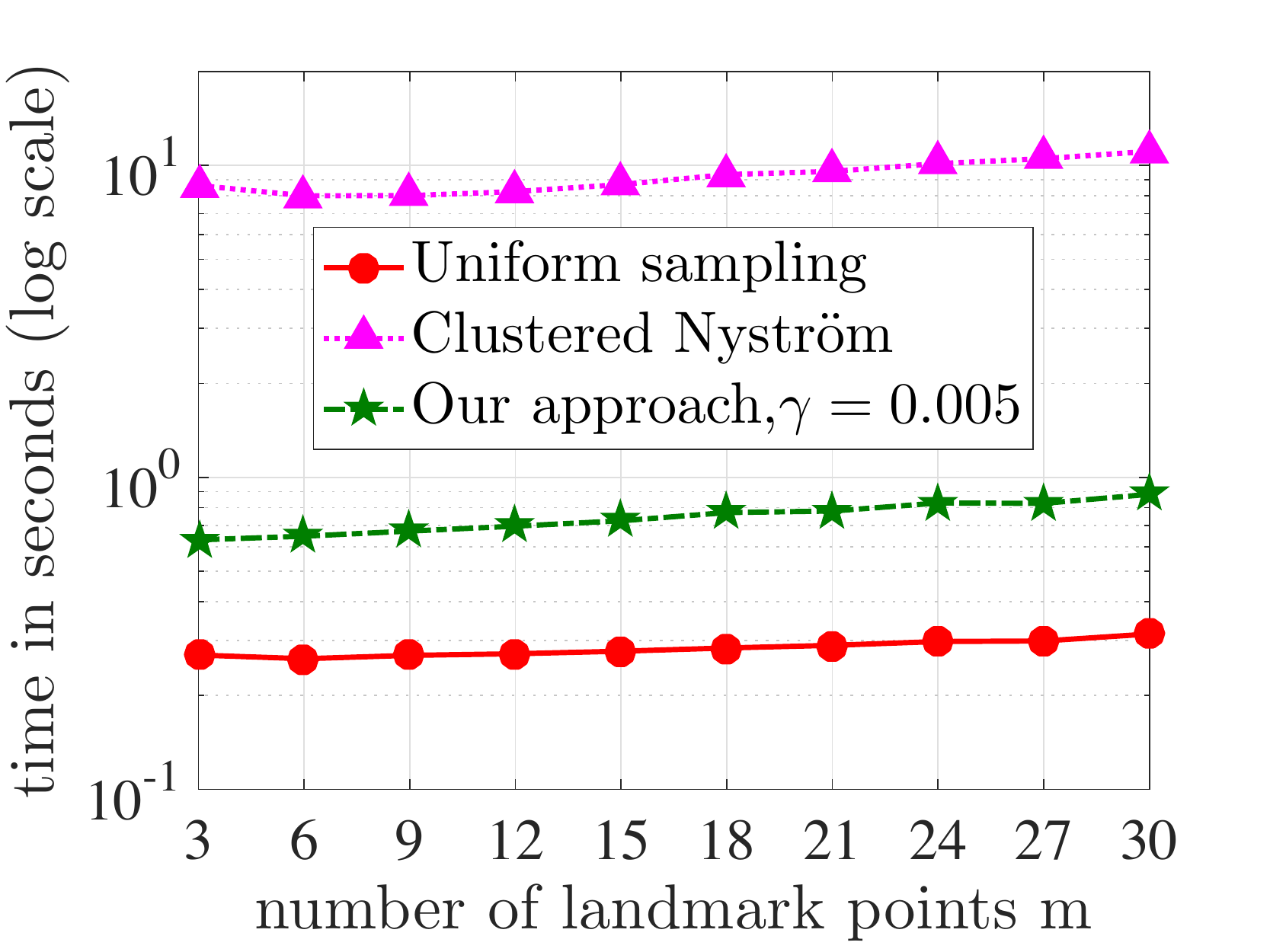}
		\par\end{centering}
}
\par\end{centering}
\caption{Kernel approximation error and runtime for varying number of landmark points $m$ with target rank $r=3$ on \dataset{mnist} and \dataset{epsilon} data sets.}
\label{fig:fixed-rank-largescale}
\end{figure}

\begin{figure}[t]
	\begin{centering}
		\subfloat[Kernel approximation error, \dataset{mnist}]{\begin{centering}
				\label{fig:low-rank-mnist-error}
				\includegraphics[scale=0.40]{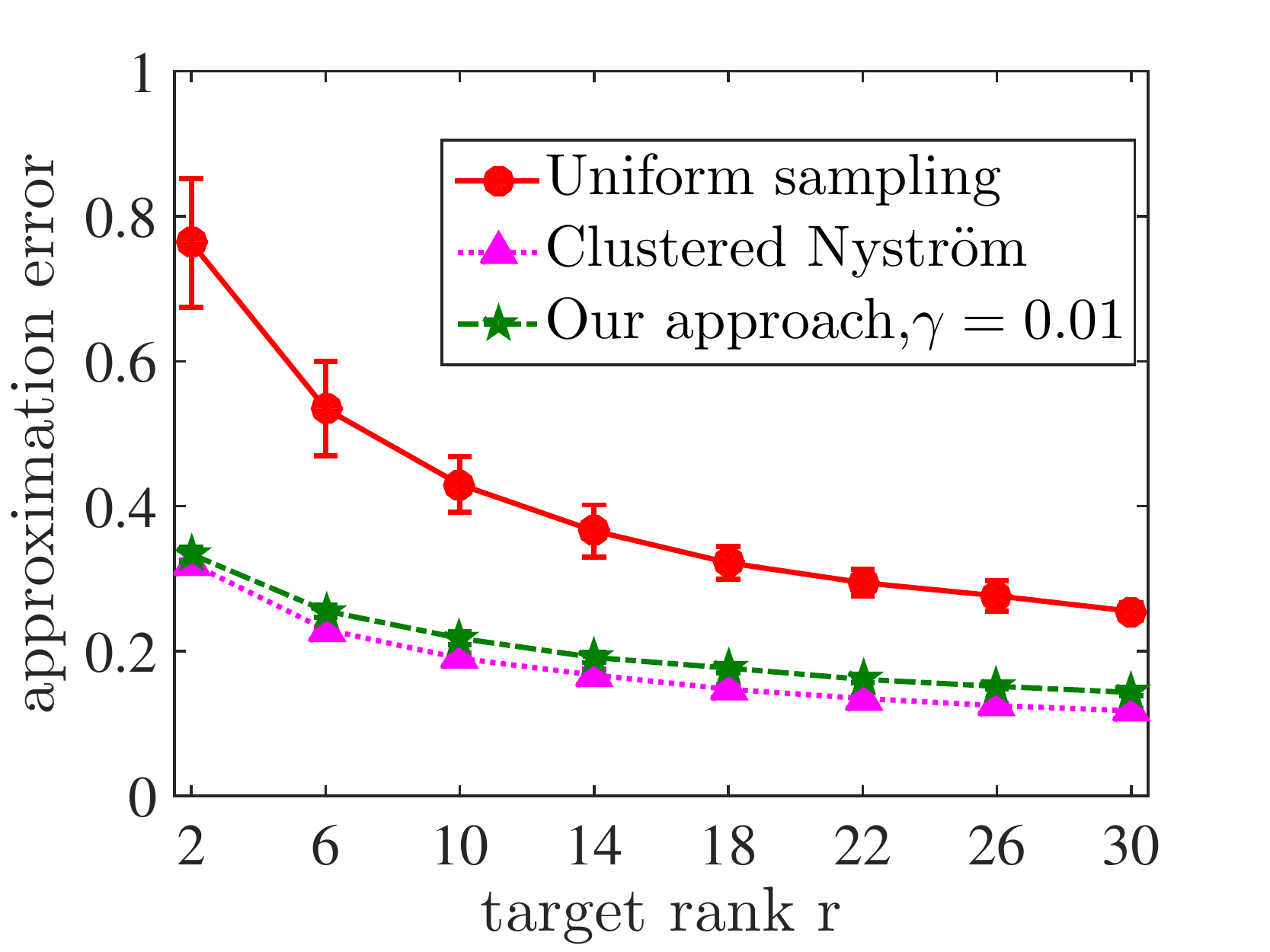}
				\par\end{centering}
		}\subfloat[Runtime, \dataset{mnist}]{\begin{centering}
			\label{fig:low-rank-mnist-time}
			\includegraphics[scale=0.40]{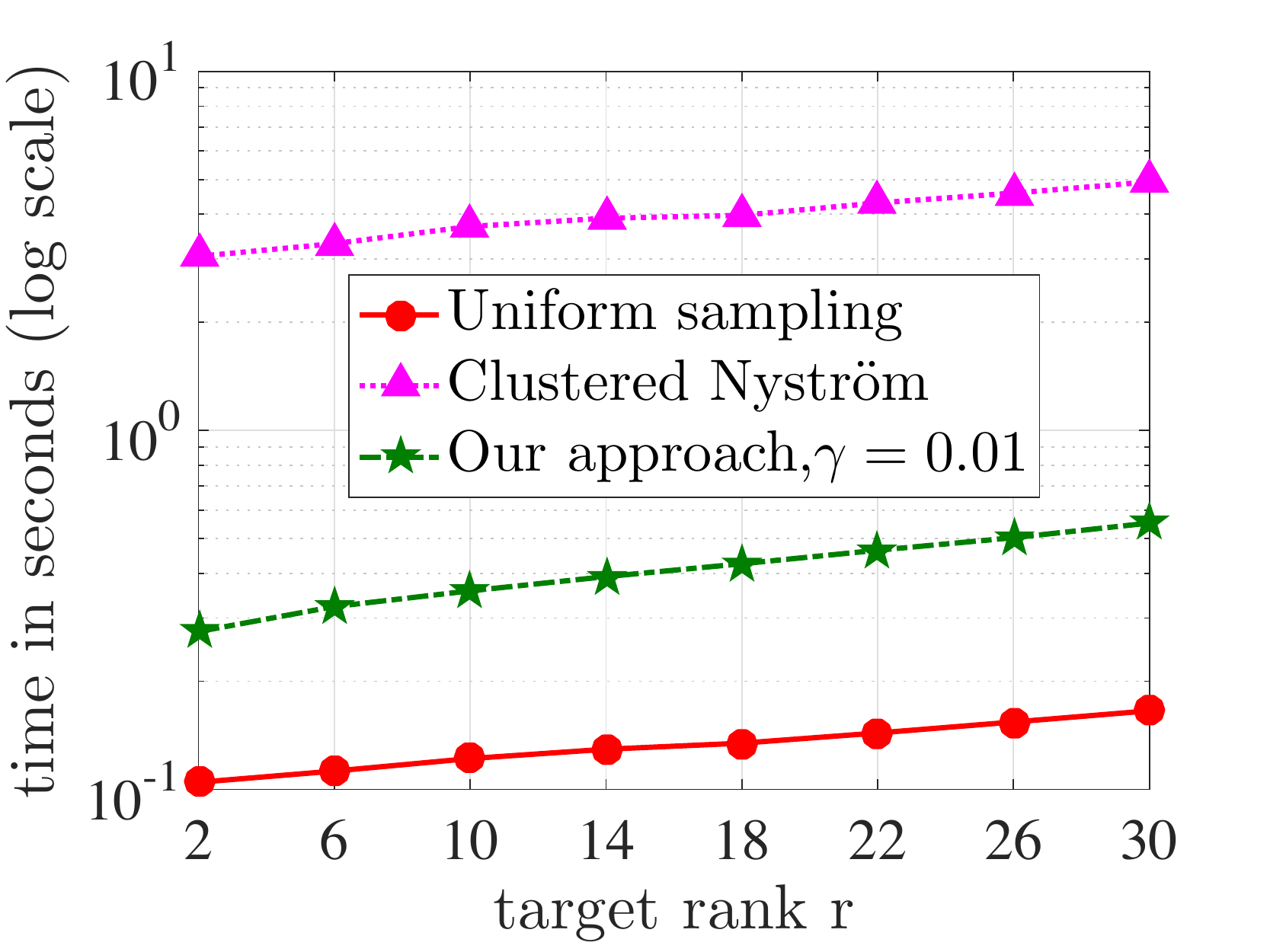}
			\par\end{centering}
	}
	\par\end{centering}
\begin{centering}
	\subfloat[Kernel approximation error, \dataset{epsilon}]{\begin{centering}
			\label{fig:low-rank-epsilon-error}
			\includegraphics[scale=0.40]{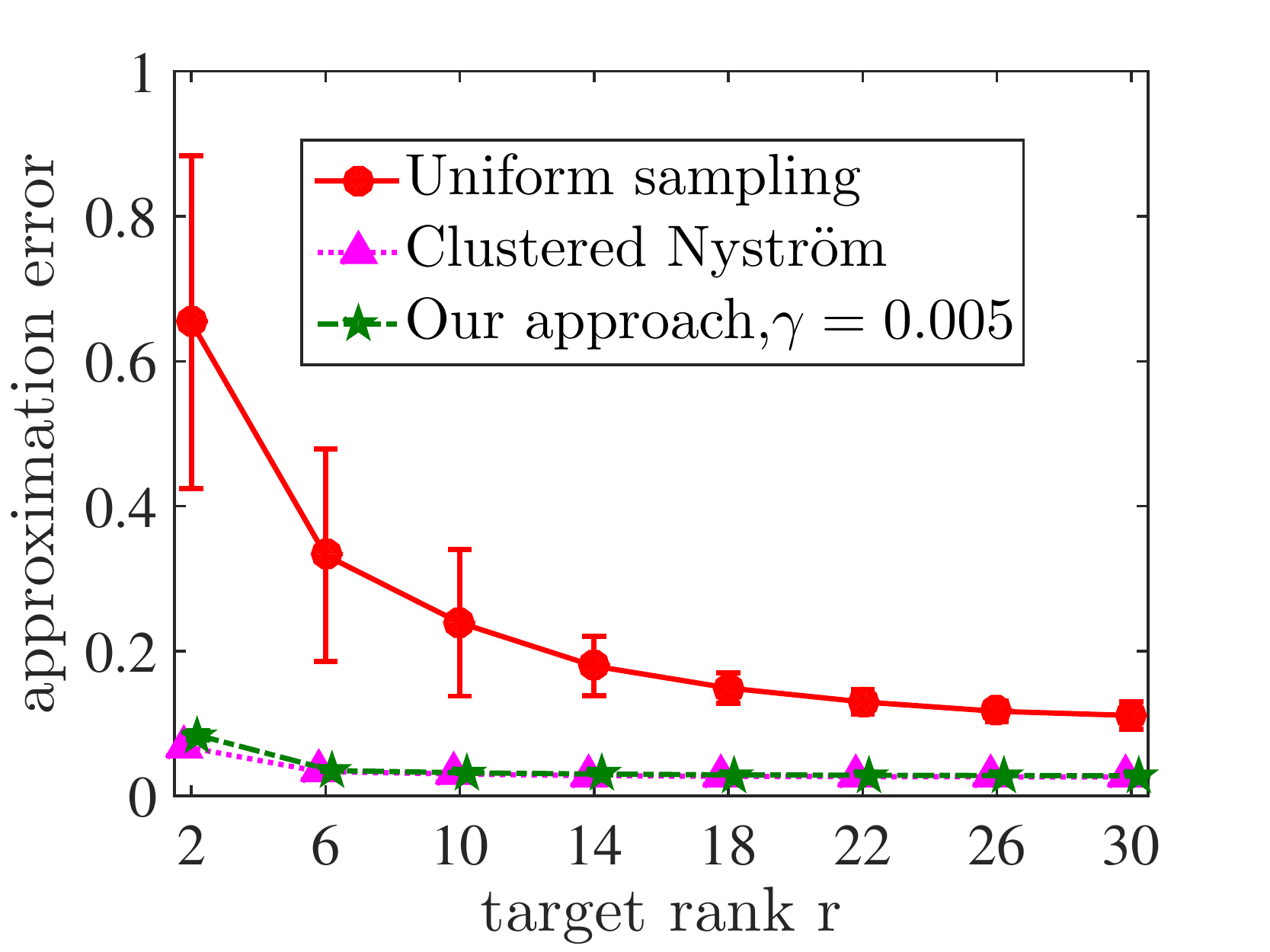}
			\par\end{centering}
	}\subfloat[Runtime, \dataset{epsilon}]{\begin{centering}
		\label{fig:low-rank-epsilon-time}
		\includegraphics[scale=0.40]{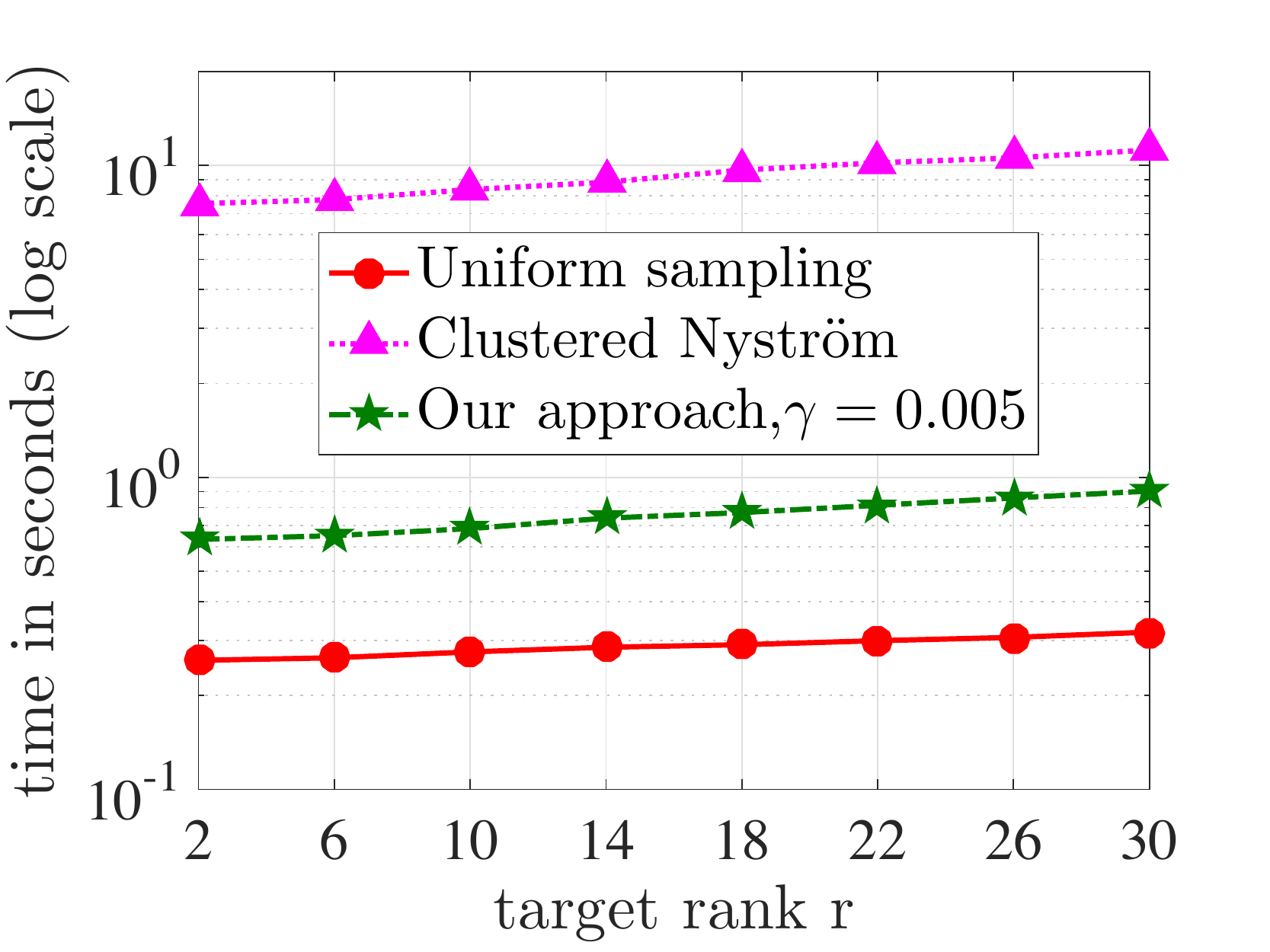}
		\par\end{centering}
}
\par\end{centering}	
\caption{Kernel approximation error and runtime for various values of target rank $r$ and fixed $m=r$ on \dataset{mnist} and \dataset{epsilon} data sets.}
\label{fig:low-rank-largescale}
\end{figure}

In the second example, our proposed method is compared with uniform sampling and Clustered Nystr\"om for various values of the target rank $r$ from $2$ to $30$ (fixing $m=r$) and results are reported in Figure \ref{fig:low-rank-largescale}. As we see in Figure \ref{fig:low-rank-mnist-error} and Figure \ref{fig:low-rank-epsilon-error}, our proposed method outperforms uniform sampling for all values of the target rank and has almost the same accuracy as the Clustered Nystr\"om method. Similar to the previous example, the time complexity of our proposed method is decreased by an order of magnitude compared to the Clustered Nystr\"om. Hence, our Randomized Clustered Nystr\"om provides improved approximation accuracy, while being more efficient in terms of memory and computation than the Clustered Nystr\"om method.
\subsection{Kernel Ridge Regression}\label{sec:exp-ridge}
In this section, we present experimental results on the performance of various sampling methods when used with kernel ridge regression. In the supervised learning setting, a set of instance-label pairs $\{(\x_i,y_i)\}_{i=1}^{n}$ are given, where $\x_i\in\R^p$ and $y_i\in\R$. Kernel ridge regression proceeds by generating $\boldsymbol{\alpha}^*$ which solves the dual optimization problem~\citep{saunders1998ridge}:
\begin{eqnarray}
\min_{\boldsymbol{\alpha}\in\R^n}\;\boldsymbol{\alpha}^T\K\boldsymbol{\alpha}+\lambda\boldsymbol{\alpha}^T\boldsymbol{\alpha}-2\boldsymbol{\alpha}^T\y,
\end{eqnarray}
where $\K$ is the kernel matrix, $\y=[y_1,\ldots,y_n]^T\in\R^n$ is the response vector, and $\lambda>0$ is the regularization parameter. The problem admits the closed-form solution $\boldsymbol{\alpha}^*=(\K+\lambda\eye_{n\times n})^{-1}\y$, which requires the computation of $\K\in\R^{n\times n}$ and the $n\times n$ system to solve. Memory and computation cost can be reduced by using the low-rank approximation of the kernel matrix $\K\approx\LL\LL^T$, where $\LL\in\R^{n\times r}$,  to generate an approximate solution (cf.~Equation \ref{eq:ridge-eq}):
\begin{eqnarray}
\widehat{\boldsymbol{\alpha}}=\lambda^{-1}\left(\eye_{n\times n}-\LL\big(\LL^T\LL+\lambda\eye_{n\times n}\big)^{-1}\LL^T\right).
\end{eqnarray}
\citet{cortes2010impact} analyzed the effect of such low-rank approximations on the accuracy of the approximate solution $\widehat{\boldsymbol{\alpha}}$. 

Here, we empirically compare the accuracy of our Randomized Clustered Nystr\"om with a few other sampling methods. The approximation error is defined as $\|\widehat{\boldsymbol{\alpha}}-\boldsymbol{\alpha}^*\|_2/\|\boldsymbol{\alpha}^*\|_2$ and we report the mean and standard deviation of the approximation error over $50$ trials. Two data sets from the LIBSVM archive~\citep{CC01a} are considered for regression: (1) \dataset{cpusmall} and (2) \dataset{E2006-tfidf}. The former data set consists of $n=8,\!192$ samples with $p=12$ and we increase the dimensionality to $p=48$ by repeating each entry $4$ times. The \dataset{E2006-tfidf} data set contains $n=5,\!363$ samples with $p=150,\!360$. As before, the Gaussian kernel function $\kappa\left(\x_i,\x_j\right)=\exp\left(-\|\x_i-\x_j\|_2^2/c\right)$ is used with the parameter $c$ chosen as the averaged squared distances as in the previous section. The regularization parameter $\lambda$ is set to $1/4$.
\begin{figure}[t]
	\begin{centering}
		\subfloat[Approximation error of $\boldsymbol{\alpha}^*$, $r=82$]{\begin{centering}
				\label{fig:reg_cpu_1}
				\includegraphics[scale=0.40]{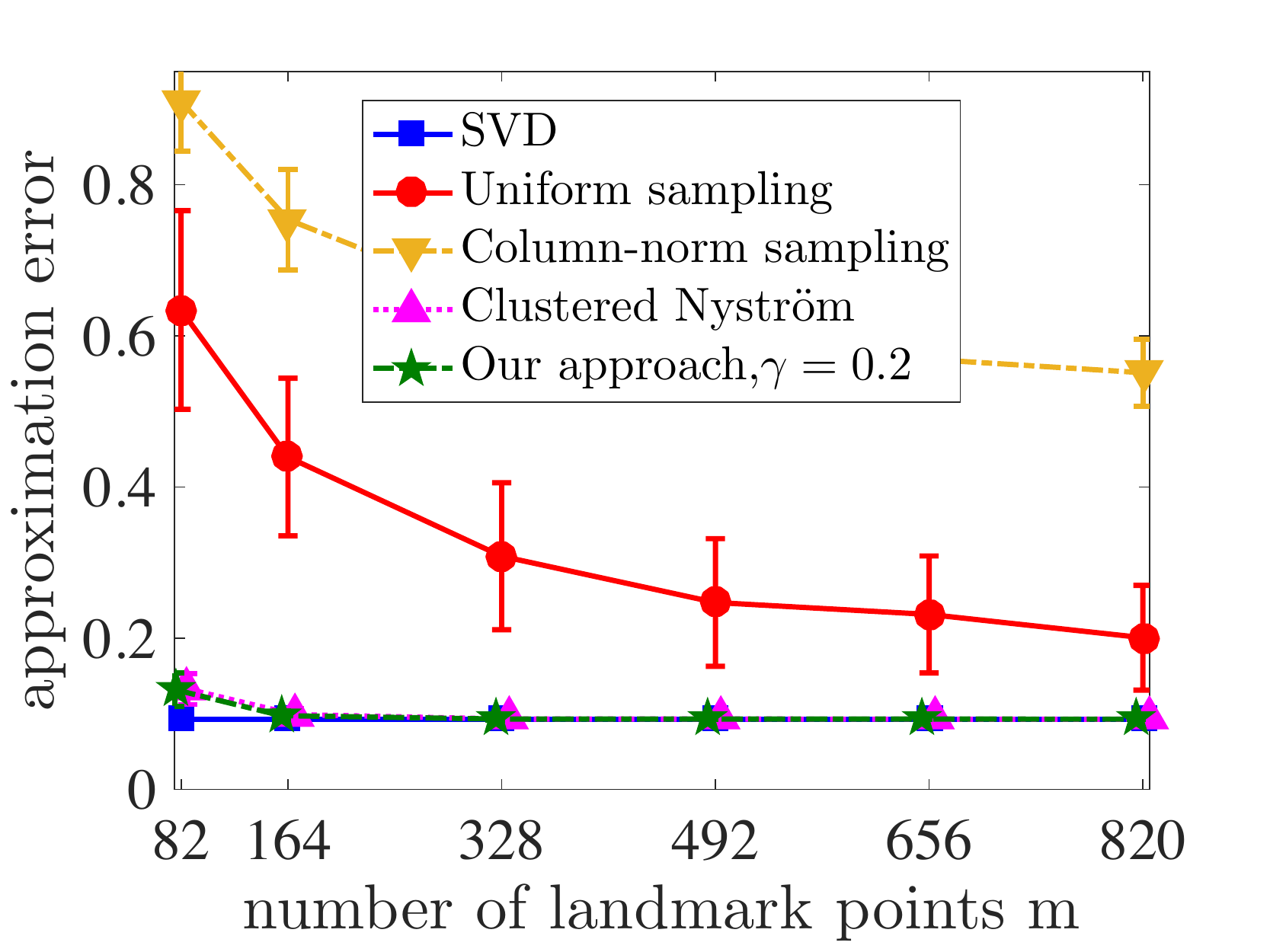}
				\par\end{centering}
		}\subfloat[Approximation error of $\boldsymbol{\alpha}^*$, $m=2r$]{\begin{centering}
			\label{fig:reg_cpu_2}
			\includegraphics[scale=0.40]{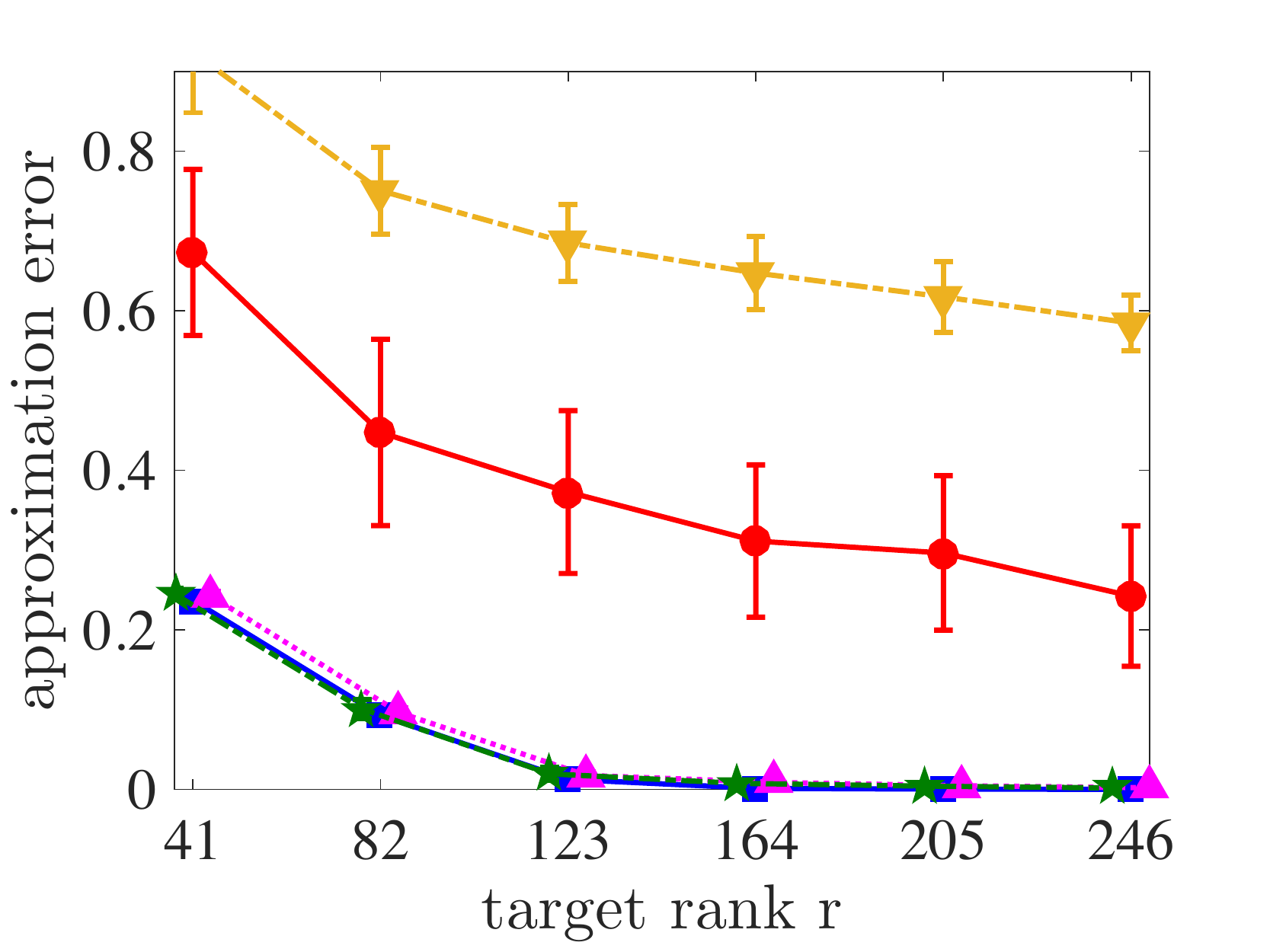}
			\par\end{centering}
	}
	\par\end{centering}
\caption{Kernel ridge regression on the \dataset{cpusmall} data set.}
\label{fig:regression-cpusmall}
\end{figure}

In Figure \ref{fig:regression-cpusmall}, the approximation error on the \dataset{cpusmall} data set is reported when $\gamma=0.2$ in our method ($p'=10$) for two cases: (1) fixed target rank $r/n=0.01$ and varying number of landmark points $m$ (Figure \ref{fig:reg_cpu_1}); (2) various values of the target rank $r$ from $0.005n$ to $0.03n$ and fixed $m=2r$ (Figure \ref{fig:reg_cpu_2}). The performance of our Randomized Clustered Nystr\"om is significantly better than that of the uniform sampling and column-norm sampling approaches. In fact, our proposed method is as nearly accurate as the best rank-$r$ approximation (SVD) for just $m=2r$. 
There is no significant difference in performance between our method and the full Clustered Nystr\"om method.

In Figure \ref{fig:regression-E2006}, the approximation error on the \dataset{E2006-tfidf} data set is presented for the target rank $r/n=0.03$ and varying number of landmark points. The parameter $\gamma$ in our Randomized Clustered Nystr\"om is set to $6.5\times 10^{-5}$ which means that $p'=10$. 
Our method is much more accurate than both uniform and column-norm sampling as shown in  Figure \ref{fig:reg_E2006_1}. Moreover, based on Figure \ref{fig:reg_E2006_2}, our method reduces the computational complexity of the Clustered Nystr\"om method by two orders of magnitude
since Clustered Nystr\"om performs K-means on a
very high-dimensional data set.

\begin{figure}[t]
	\begin{centering}
		\subfloat[Approximation error of $\boldsymbol{\alpha}^*$, $r=161$]{\begin{centering}
				\label{fig:reg_E2006_1}
				\includegraphics[scale=0.40]{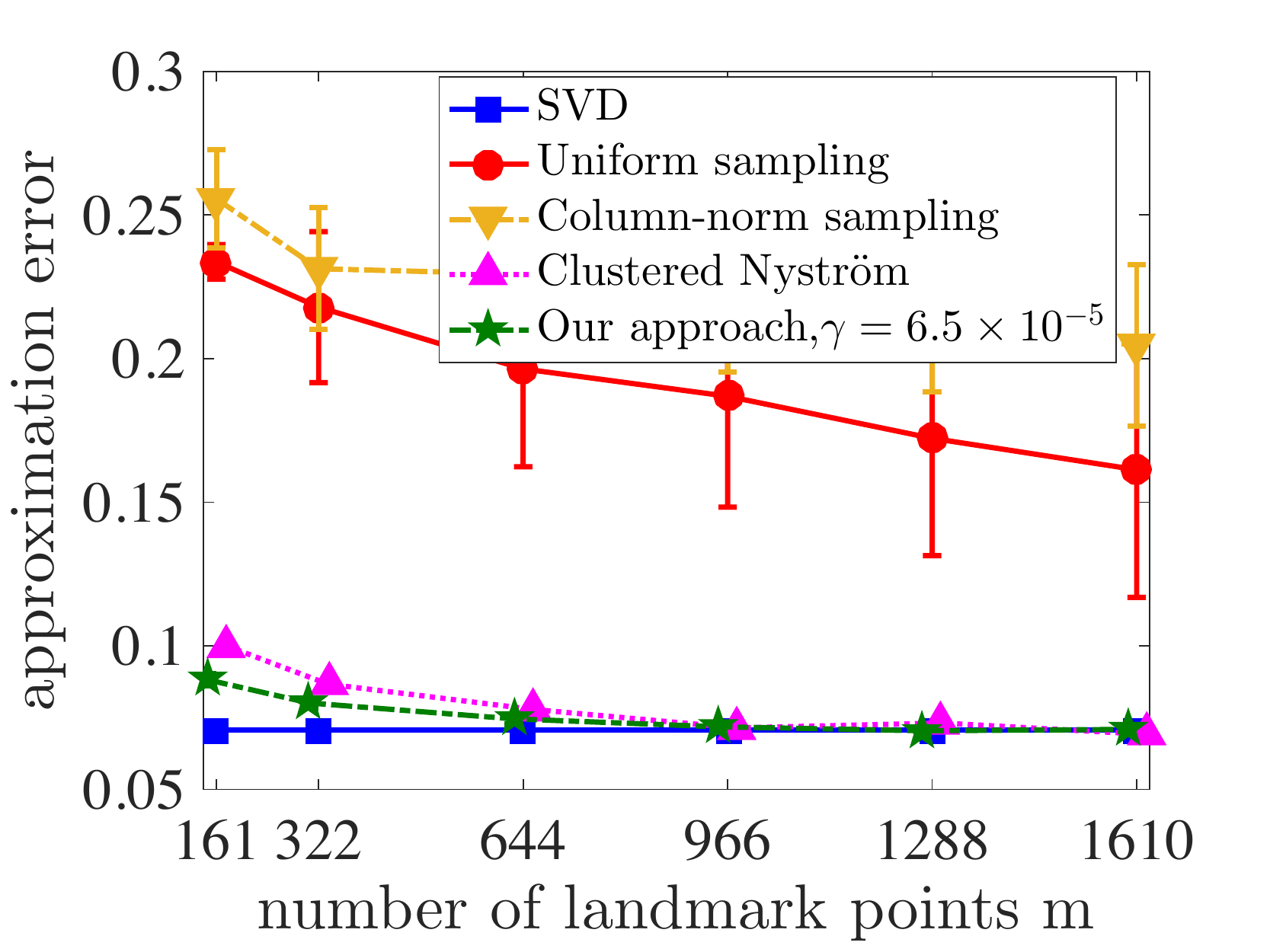}
				\par\end{centering}
		}\subfloat[Runtime, $r=161$]{\begin{centering}
			\label{fig:reg_E2006_2}
			\includegraphics[scale=0.40]{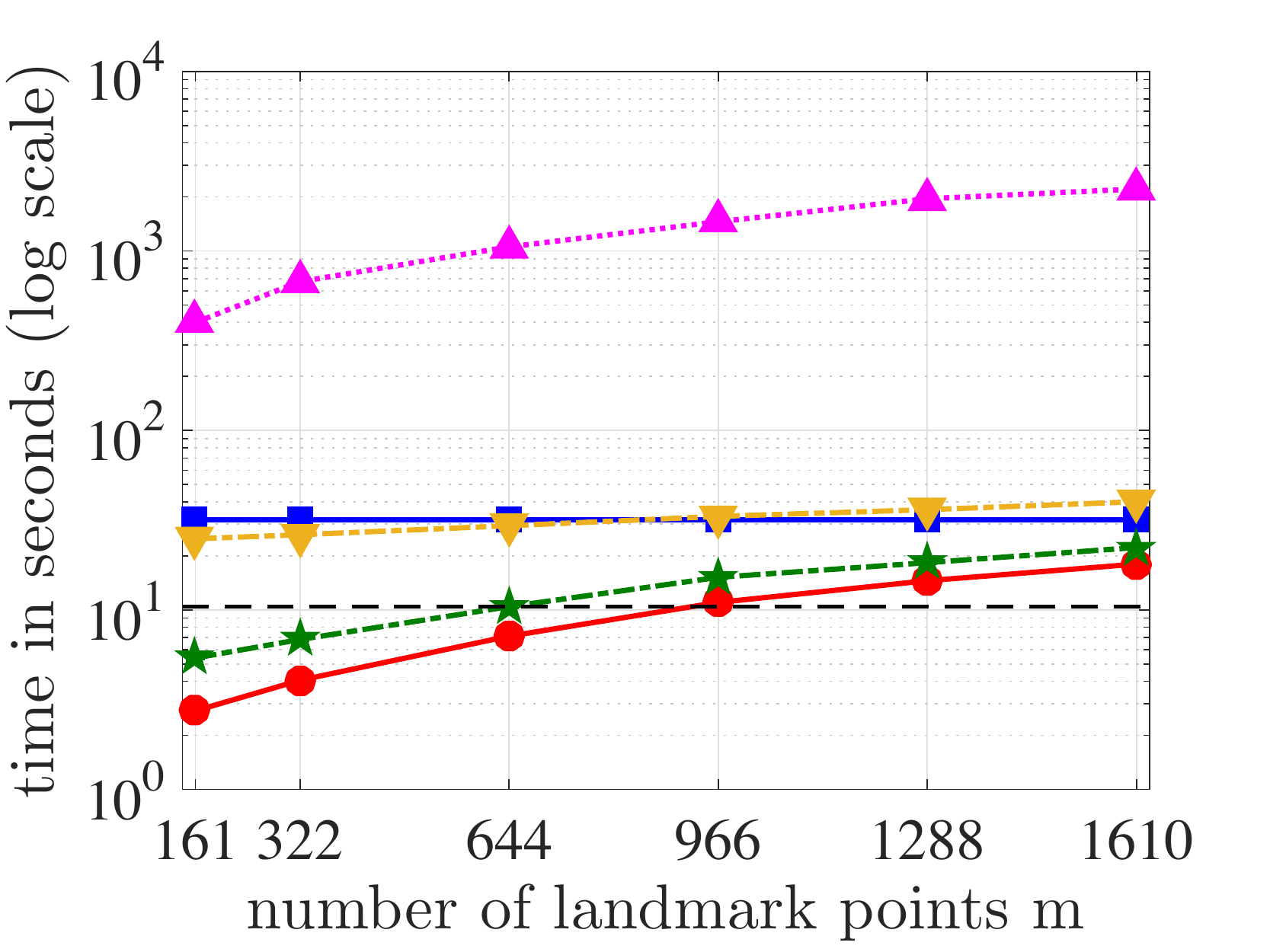}
			\par\end{centering}
	}
	\par\end{centering}
\caption{Kernel ridge regression on the \dataset{E2006-tfidf} data set.}
\label{fig:regression-E2006}
\end{figure}

We draw a dashed line in Figure \ref{fig:reg_E2006_2} to find the values of $m$ for which our method and uniform sampling have the same running time. We see that $m=644$ in our method, which leads to mean error $0.074$ and standard deviation $0.001$, has the same running time as $m=966$ in the uniform sampling, with mean $0.187$ and standard deviation $0.038$. This is an example of our method performing both more accurately and more efficiently than the alternatives.

\section{Conclusion}
In this paper, we presented two complementary methods to improve the quality of Nystr\"om low-rank approximations. The first method, ``Nystr\"om via QR Decomposition,'' finds the best rank-$r$ approximation when the number of landmark points $m$ is greater than the target rank $r$ in the Nystr\"om method. The experimental examples demonstrated the superior performance of our proposed method compared to the standard Nystr\"om method. Also, for a fixed accuracy, the introduced method requires fewer landmark points which is of great importance for the efficiency of the Nystr\"om method. The second proposed method, ``Randomized Clustered Nystr\"om,'' is a randomized algorithm for generating landmark points, where the memory and computational complexity can be adjusted by using the compression factor $\gamma$. Based on our experiments, random projection of the input data points onto a low-dimensional space with $p'=10$ or smaller dimensions yields very accurate low-rank approximations. In fact, the accuracy of our proposed method is very close to the best rank-$r$ approximation obtained by the exact eigenvalue decomposition or SVD of kernel matrices.

\section*{Acknowledgments}
This work is supported by the Bloomberg Data Science Research Grant Program.
\vskip 0.2in
\bibliography{phd_farhad.bib}

\begin{thebibliography}{56}
\providecommand{\natexlab}[1]{#1}
\providecommand{\url}[1]{\texttt{#1}}
\expandafter\ifx\csname urlstyle\endcsname\relax
  \providecommand{\doi}[1]{doi: #1}\else
  \providecommand{\doi}{doi: \begingroup \urlstyle{rm}\Url}\fi

\bibitem[Achlioptas(2003)]{DataBaseFriendlyRandomProjection}
D.~Achlioptas.
\newblock Database-friendly random projections: {Johnson-Lindenstrauss} with
  binary coins.
\newblock \emph{Journal of Computer and System Sciences}, 66\penalty0
  (4):\penalty0 671--687, 2003.

\bibitem[Ailon et~al.(2009)Ailon, Jaiswal, and Monteleoni]{ailon2009streaming}
N.~Ailon, R.~Jaiswal, and C.~Monteleoni.
\newblock Streaming k-means approximation.
\newblock In \emph{Advances in Neural Information Processing Systems}, pages
  10--18, 2009.

\bibitem[Alaoui and Mahoney(2015)]{el2014fast}
A.~Alaoui and M.~Mahoney.
\newblock Fast randomized kernel methods with statistical guarantees.
\newblock In \emph{Advances in Neural Information Processing Systems}, pages
  775--783, 2015.

\bibitem[Aronszajn(1950)]{aronszajn1950theory}
N.~Aronszajn.
\newblock Theory of reproducing kernels.
\newblock \emph{Transactions of the American mathematical society}, pages
  337--404, 1950.

\bibitem[Arthur and Vassilvitskii(2007)]{kmeans_plusplus}
D.~Arthur and S.~Vassilvitskii.
\newblock k-means++: The advantages of careful seeding.
\newblock In \emph{SODA}, pages 1027--1035, 2007.

\bibitem[Bach and Jordan(2002)]{bach2002kernel}
F.~Bach and M.~Jordan.
\newblock Kernel independent component analysis.
\newblock \emph{Journal of machine learning research}, 3:\penalty0 1--48, 2002.

\bibitem[Bach and Jordan(2005)]{Jordan_Kernel_Approx}
F.~Bach and M.~Jordan.
\newblock Predictive low-rank decomposition for kernel methods.
\newblock In \emph{Proceedings of the 22nd international conference on Machine
  learning}, pages 33--40, 2005.

\bibitem[Bach et~al.(2004)Bach, Lanckriet, and Jordan]{bach2004multiple}
F.~Bach, G.~Lanckriet, and M.~Jordan.
\newblock Multiple kernel learning, conic duality, and the {SMO} algorithm.
\newblock In \emph{International Conference on Machine Learning}, page~6, 2004.

\bibitem[Bishop(2006)]{Bishop}
C.~Bishop.
\newblock \emph{Pattern recognition and machine learning}.
\newblock Springer, 2006.

\bibitem[Boutsidis et~al.(2015)Boutsidis, Zouzias, Mahoney, and
  Drineas]{Randomized_Dim_K_means}
C.~Boutsidis, A.~Zouzias, M.~Mahoney, and P.~Drineas.
\newblock Randomized dimensionality reduction for k-means clustering.
\newblock \emph{IEEE Transactions on Information Theory}, 61\penalty0
  (2):\penalty0 1045--1062, 2015.

\bibitem[Chang and Lin(2011)]{CC01a}
C.~Chang and C.~Lin.
\newblock {LIBSVM}: A library for support vector machines.
\newblock \emph{ACM Transactions on Intelligent Systems and Technology},
  2\penalty0 (3):\penalty0 27:1--27:27, 2011.

\bibitem[Cortes and Vapnik(1995)]{VapnikSVM}
C.~Cortes and V.~Vapnik.
\newblock Support-vector networks.
\newblock \emph{Machine learning}, 20\penalty0 (3):\penalty0 273--297, 1995.

\bibitem[Cortes et~al.(2010)Cortes, Mohri, and Talwalkar]{cortes2010impact}
C.~Cortes, M.~Mohri, and A.~Talwalkar.
\newblock On the impact of kernel approximation on learning accuracy.
\newblock In \emph{AISTATS}, pages 113--120, 2010.

\bibitem[Dasgupta(2008)]{DasguptaKmeans}
S.~Dasgupta.
\newblock The hardness of k-means clustering.
\newblock Technical Report CS2008-0916, UCSD, 2008.

\bibitem[Davenport et~al.(2010)Davenport, Boufounos, Wakin, and
  Baraniuk]{davenport2010signal}
M.~Davenport, P.~Boufounos, M.~Wakin, and R.~Baraniuk.
\newblock Signal processing with compressive measurements.
\newblock \emph{IEEE Journal of Selected Topics in Signal Processing},
  4\penalty0 (2):\penalty0 445--460, 2010.

\bibitem[Drineas and Mahoney(2005)]{Nystrom_Kernel_Approx}
P.~Drineas and M.~Mahoney.
\newblock On the {N}ystr{\"o}m method for approximating a gram matrix for
  improved kernel-based learning.
\newblock \emph{The Journal of Machine Learning Research}, pages 2153--2175,
  2005.

\bibitem[Drineas et~al.(2006)Drineas, Kannan, and Mahoney]{drineas2006fast}
P.~Drineas, R.~Kannan, and M.~Mahoney.
\newblock Fast monte carlo algorithms for matrices {I}: Approximating matrix
  multiplication.
\newblock \emph{SIAM Journal on Computing}, 36\penalty0 (1):\penalty0 132--157,
  2006.

\bibitem[Eckart and Young(1936)]{eckart1936approximation}
C.~Eckart and G.~Young.
\newblock The approximation of one matrix by another of lower rank.
\newblock \emph{Psychometrika}, 1\penalty0 (3):\penalty0 211--218, 1936.

\bibitem[Feldman et~al.(2013)Feldman, Schmidt, and Sohler]{K-means-Feldman}
D.~Feldman, M.~Schmidt, and C.~Sohler.
\newblock Turning big data into tiny data: Constant-size coresets for k-means,
  {PCA} and projective clustering.
\newblock \emph{Proceedings of the Twenty-Fourth Annual ACM-SIAM Symposium on
  Discrete Algorithms}, pages 1434--1453, 2013.

\bibitem[Fine and Scheinberg(2001)]{fine2001efficient}
S.~Fine and K.~Scheinberg.
\newblock Efficient {SVM} training using low-rank kernel representations.
\newblock \emph{Journal of Machine Learning Research}, 2:\penalty0 243--264,
  2001.

\bibitem[Girolami(2002)]{girolami2002mercer}
M.~Girolami.
\newblock Mercer kernel-based clustering in feature space.
\newblock \emph{IEEE Transactions on Neural Networks}, 13\penalty0
  (3):\penalty0 780--784, 2002.

\bibitem[Gittens and Mahoney(2013)]{gittens2013revisiting}
A.~Gittens and M.~Mahoney.
\newblock Revisiting the {Nystr\"om} method for improved large-scale machine
  learning.
\newblock In \emph{International Conference on Machine Learning}, pages
  567--575, 2013.

\bibitem[Golts and Elad(2016)]{golts2016linearized}
A.~Golts and M.~Elad.
\newblock Linearized kernel dictionary learning.
\newblock \emph{IEEE Journal of Selected Topics in Signal Processing},
  10\penalty0 (4):\penalty0 726--739, 2016.

\bibitem[G{\"o}nen and Alpayd{\i}n(2011)]{gonen2011multiple}
M.~G{\"o}nen and E.~Alpayd{\i}n.
\newblock Multiple kernel learning algorithms.
\newblock \emph{Journal of Machine Learning Research}, 12:\penalty0 2211--2268,
  2011.

\bibitem[Halko et~al.(2011)Halko, Martinsson, and Tropp]{Martinson_SVD}
N.~Halko, P.~Martinsson, and J.~Tropp.
\newblock Finding structure with randomness: Probabilistic algorithms for
  constructing approximate matrix decompositions.
\newblock \emph{SIAM review}, 53\penalty0 (2):\penalty0 217--288, 2011.

\bibitem[Hsieh et~al.(2014)Hsieh, Si, and Dhillon]{hsieh2014fast}
C.~Hsieh, S.~Si, and I.~Dhillon.
\newblock Fast prediction for large-scale kernel machines.
\newblock In \emph{Advances in Neural Information Processing Systems}, pages
  3689--3697, 2014.

\bibitem[Iosifidis and Gabbouj(2016)]{iosifidis2016nystrom}
A.~Iosifidis and M.~Gabbouj.
\newblock Nystr{\"o}m-based approximate kernel subspace learning.
\newblock \emph{Pattern Recognition}, 57:\penalty0 190--197, 2016.

\bibitem[Kumar et~al.(2012)Kumar, Mohri, and Talwalkar]{kumar2012sampling}
S.~Kumar, M.~Mohri, and A.~Talwalkar.
\newblock Sampling methods for the {Nystr{\"o}m} method.
\newblock \emph{Journal of Machine Learning Research}, 13:\penalty0 981--1006,
  2012.

\bibitem[Kumar et~al.(2009)Kumar, Mohri, and Talwalkar]{kumar2009ensemble}
Sanjiv Kumar, Mehryar Mohri, and Ameet Talwalkar.
\newblock Ensemble {N}ystr\"om method.
\newblock In \emph{Advances in Neural Information Processing Systems}, pages
  1060--1068, 2009.

\bibitem[Li et~al.(2015)Li, Bi, Kwok, and Lu]{li2015large}
M.~Li, W.~Bi, J.~Kwok, and B.~Lu.
\newblock Large-scale {Nystr{\"o}m} kernel matrix approximation using
  randomized {SVD}.
\newblock \emph{IEEE transactions on neural networks and learning systems},
  26\penalty0 (1):\penalty0 152--164, 2015.

\bibitem[Liberty and Zucker(2009)]{liberty2009mailman}
E.~Liberty and S.~Zucker.
\newblock The mailman algorithm: A note on matrix--vector multiplication.
\newblock \emph{Information Processing Letters}, 109\penalty0 (3):\penalty0
  179--182, 2009.

\bibitem[Liu et~al.(2016)Liu, Aggarwal, Li, Kong, Sun, and
  Sathe]{liu2016kernelized}
X.~Liu, C.~Aggarwal, Y.~Li, X.~Kong, X.~Sun, and S.~Sathe.
\newblock Kernelized matrix factorization for collaborative filtering.
\newblock In \emph{SIAM Conference on Data Mining}, pages 399--416, 2016.

\bibitem[Lloyd(1982)]{lloyd1982least}
S.~Lloyd.
\newblock Least squares quantization in {PCM}.
\newblock \emph{IEEE transactions on information theory}, 28\penalty0
  (2):\penalty0 129--137, 1982.

\bibitem[Mahoney(2011)]{RandomizedAlgorithmsforMatricesAndData}
M.~Mahoney.
\newblock Randomized algorithms for matrices and data.
\newblock \emph{Foundations and Trends in Machine Learning}, 3\penalty0
  (2):\penalty0 123--224, 2011.

\bibitem[Ostrovsky et~al.(2012)Ostrovsky, Rabani, Schulman, and
  Swamy]{ostrovsky2012effectiveness}
R.~Ostrovsky, Y.~Rabani, L.~J. Schulman, and C.~Swamy.
\newblock The effectiveness of {L}loyd-type methods for the k-means problem.
\newblock \emph{J. ACM}, 59\penalty0 (6):\penalty0 28, 2012.

\bibitem[Pourkamali-Anaraki and Becker(2016)]{pourkamali2016randomized}
F.~Pourkamali-Anaraki and S.~Becker.
\newblock A randomized approach to efficient kernel clustering.
\newblock In \emph{IEEE Global Conference on Signal and Information
  Processing}, 2016.

\bibitem[Pourkamali-Anaraki and Hughes(2013)]{anaraki2013kernel}
F.~Pourkamali-Anaraki and S.~Hughes.
\newblock Kernel compressive sensing.
\newblock In \emph{IEEE International Conference on Image Processing}, pages
  494--498, 2013.

\bibitem[Pourkamali-Anaraki and Hughes(2014)]{Pourkamali_ICML_2014}
F.~Pourkamali-Anaraki and S.~Hughes.
\newblock Memory and computation efficient {PCA} via very sparse random
  projections.
\newblock In \emph{Proceedings of the 31st International Conference on Machine
  Learning (ICML)}, pages 1341--1349, 2014.

\bibitem[Pourkamali-Anaraki et~al.(2015)Pourkamali-Anaraki, Becker, and
  Hughes]{Pourkamali_DL_SampTA}
F.~Pourkamali-Anaraki, S.~Becker, and S.~Hughes.
\newblock Efficient dictionary learning via very sparse random projections.
\newblock In \emph{Sampling Theory and Applications (SampTA)}, pages 478--482,
  2015.

\bibitem[Saunders et~al.(1998)Saunders, Gammerman, and Vovk]{saunders1998ridge}
C.~Saunders, A.~Gammerman, and V.~Vovk.
\newblock Ridge regression learning algorithm in dual variables.
\newblock In \emph{International Conference on Machine Learning}, pages
  515--521, 1998.

\bibitem[Sch{\"o}lkopf and Smola(2001)]{LearningWithKernels}
B.~Sch{\"o}lkopf and A.~Smola.
\newblock \emph{Learning with kernels: support vector machines, regularization,
  optimization, and beyond}.
\newblock MIT press, 2001.

\bibitem[Sch{\"o}lkopf et~al.(1998)Sch{\"o}lkopf, Smola, and
  M{\"u}ller]{scholkopf1998nonlinear}
B.~Sch{\"o}lkopf, A.~Smola, and K.~M{\"u}ller.
\newblock Nonlinear component analysis as a kernel eigenvalue problem.
\newblock \emph{Neural computation}, 10\penalty0 (5):\penalty0 1299--1319,
  1998.

\bibitem[Shawe-Taylor and Cristianini(2004)]{shawe2004kernel}
J.~Shawe-Taylor and N.~Cristianini.
\newblock \emph{Kernel methods for pattern analysis}.
\newblock Cambridge university press, 2004.

\bibitem[Shindler et~al.(2011)Shindler, Wong, and Meyerson]{shindler2011fast}
M.~Shindler, A.~Wong, and A.~Meyerson.
\newblock Fast and accurate k-means for large datasets.
\newblock In \emph{Advances in neural information processing systems}, pages
  2375--2383, 2011.

\bibitem[Slavakis et~al.(2014)Slavakis, Giannakis, and
  Mateos]{BigDataSigMagazine}
K.~Slavakis, G.~Giannakis, and G.~Mateos.
\newblock Modeling and optimization for big data analytics: (statistical)
  learning tools for our era of data deluge.
\newblock \emph{IEEE Signal Process. Mag.}, 31\penalty0 (5):\penalty0 18--31,
  2014.
\newblock ISSN 1053-5888.

\bibitem[Sun et~al.(2015)Sun, Zhao, and Zhu]{sun2015review}
S.~Sun, J.~Zhao, and J.~Zhu.
\newblock A review of {N}ystr{\"o}m methods for large-scale machine learning.
\newblock \emph{Information Fusion}, 26:\penalty0 36--48, 2015.

\bibitem[Traganitis et~al.(2015)Traganitis, Slavakis, and
  Giannakis]{SketchAndValidateKMeans}
P.~Traganitis, K.~Slavakis, and G.~Giannakis.
\newblock Sketch and validate for big data clustering.
\newblock \emph{IEEE Journal of Selected Topics in Signal Processing},
  9\penalty0 (4):\penalty0 678--690, 2015.

\bibitem[Tropp(2011)]{ImprovedAnalysis}
J.~Tropp.
\newblock Improved analysis of the subsampled randomized {Hadamard} transform.
\newblock \emph{Advances in Adaptive Data Analysis}, pages 115--126, 2011.

\bibitem[Van~Nguyen et~al.(2012)Van~Nguyen, Patel, Nasrabadi, and
  Chellappa]{van2012kernel}
H.~Van~Nguyen, V.~Patel, N.~Nasrabadi, and R.~Chellappa.
\newblock Kernel dictionary learning.
\newblock In \emph{IEEE International Conference on Acoustics, Speech and
  Signal Processing}, pages 2021--2024, 2012.

\bibitem[Van~Nguyen et~al.(2013)Van~Nguyen, Patel, Nasrabadi, and
  Chellappa]{Nonlinear_kernel_DL}
H.~Van~Nguyen, V.~Patel, N.~Nasrabadi, and R.~Chellappa.
\newblock Design of non-linear kernel dictionaries for object recognition.
\newblock \emph{IEEE Transactions on Image Processing}, pages 5123--5135, 2013.

\bibitem[Wang and Zhang(2013)]{wang2013improving}
S.~Wang and Z.~Zhang.
\newblock Improving {CUR} matrix decomposition and the {Nystr{\"o}m}
  approximation via adaptive sampling.
\newblock \emph{Journal of Machine Learning Research}, 14\penalty0
  (1):\penalty0 2729--2769, 2013.

\bibitem[Williams and Seeger(2001)]{Nystrom2001}
C.~Williams and M.~Seeger.
\newblock Using the {N}ystr{\"o}m method to speed up kernel machines.
\newblock In \emph{Proceedings of the 14th Annual Conference on Neural
  Information Processing Systems}, pages 682--688, 2001.

\bibitem[Zhang and Kwok(2010)]{zhang2010clusteredNys}
K.~Zhang and J.~Kwok.
\newblock Clustered {N}ystr{\"o}m method for large scale manifold learning and
  dimension reduction.
\newblock \emph{IEEE Transactions on Neural Networks}, 21\penalty0
  (10):\penalty0 1576--1587, 2010.

\bibitem[Zhang et~al.(2008)Zhang, Tsang, and Kwok]{zhang2008improved}
K.~Zhang, I.~Tsang, and J.~Kwok.
\newblock Improved {Nystr{\"o}m} low-rank approximation and error analysis.
\newblock In \emph{International conference on Machine learning}, pages
  1232--1239, 2008.

\bibitem[Zhang et~al.(2012)Zhang, Lan, Wang, and Moerchen]{zhang2012scaling}
K.~Zhang, L.~Lan, Z.~Wang, and F.~Moerchen.
\newblock Scaling up kernel svm on limited resources: A low-rank linearization
  approach.
\newblock In \emph{AISTATS}, pages 1425--1434, 2012.

\bibitem[Zhang et~al.(2014)Zhang, Zhang, and Yang]{FastCSTracking_PAMI}
K.~Zhang, L.~Zhang, and M.~Yang.
\newblock Fast compressive tracking.
\newblock \emph{IEEE Transactions on Pattern Analysis and Machine
  Intelligence}, 36\penalty0 (10):\penalty0 2002--2015, 2014.

\end{thebibliography}

\end{document}